\pdfoutput=1
\documentclass{article}
\usepackage[margin=1in]{geometry}
\usepackage[backend=biber,style=alphabetic,natbib=true]{biblatex}
\usepackage{eqparbox}

\addbibresource{bibliography/bib.bib}

\usepackage{mathpazo}
\usepackage{multirow}

\usepackage{booktabs}
\usepackage{comment}
\usepackage[T1]{fontenc}
\usepackage[utf8]{inputenc}
\usepackage{mathtools,amssymb,amsthm}
\usepackage{bm}
\usepackage{wasysym}
\usepackage{thmtools}
\usepackage{thm-restate}
\usepackage{graphicx}
\usepackage{subcaption}
\usepackage{floatrow}
\usepackage[disable]{todonotes}
\usepackage{textgreek}

\usepackage[pdfencoding=auto]{hyperref}

\newtheorem{theorem}{Theorem}
\newtheorem{lemma}{Lemma}

\DeclareMathOperator*{\argmin}{arg\,min}

\newcommand{\nrf}{non-robust features}
\newcommand{\inner}[2]{\left\langle {#1}, {#2} \right\rangle}

\newcommand{\D}{\mathcal{D}}
\newcommand{\F}{\mathcal{F}}
\newcommand{\X}{\mathcal{X}}
\newcommand{\Reals}{\mathbb{R}}
\newcommand{\pmo}{\{\pm 1\}}
\newcommand{\sign}{\text{sgn}}

\newcommand{\tr}{\text{tr}}
\newcommand{\dia}{\invdiameter}
\newcommand{\tvgrad}{ \stack{\frac{1}{2} (Axx^\top)_{\dia}}{- Ax} }

\newcommand{\eps}{\varepsilon}
\newcommand{\E}{\mathbb{E}_{x\sim\mathcal{N}(\mu^*, \Sigma^*)}}
\newcommand{\ED}{\mathbb{E}_{(x,y)\sim\D}}

\newcommand{\EDc}[1]{\mathbb{E}_{(x, y)\sim{#1}}}

\newcommand{\stack}[2]{\begin{bmatrix}
	{#1} \\
	{#2}
\end{bmatrix}}

\let\oldmu\mu
\let\oldsig\Sigma
\renewcommand{\mu}{\bm{\oldmu}}
\renewcommand{\Sigma}{\bm{\oldsig}}

\newfloatcommand{capbtabbox}{table}[][\FBwidth]

\title{Adversarial Examples Are Not Bugs, They Are Features}

\newcommand\AND{
    \end{tabular}\hfil\linebreak[4]\hfil
    \begin{tabular}[t]{c}\ignorespaces
}
\author{Andrew Ilyas\footnote{Equal contribution} \\
    MIT \\
  \texttt{ailyas@mit.edu} \\
   \and
   Shibani Santurkar\footnotemark[1] \\
    MIT \\
  \texttt{shibani@mit.edu} \\
  \and 
  Dimitris Tsipras\footnotemark[1] \hfill\null \\
    MIT \\
  \texttt{tsipras@mit.edu} \\ \\
  \AND 
  Logan Engstrom\footnotemark[1] \\
    MIT \\
  \texttt{engstrom@mit.edu} \\
  \and 
  Brandon Tran \\ 
    MIT \\
  \texttt{btran115@mit.edu} \\
  \and 
  Aleksander M\k{a}dry \\
    MIT \\
  \texttt{madry@mit.edu}  
}

\date{}

\begin{document}
\maketitle
\begin{abstract}
Adversarial examples have attracted 
significant attention in machine learning, but the reasons for their existence 
and pervasiveness remain unclear.
We demonstrate that adversarial examples can be directly attributed to the 
presence of {\em non-robust features}: features 
(derived from patterns in the data distribution) that are highly predictive,
 yet brittle and (thus) incomprehensible to humans. 
After capturing these features within a theoretical framework, we 
establish their widespread existence in standard datasets.
Finally, we present a simple setting where we can rigorously tie the phenomena 
we observe in practice to a {\em misalignment} between the (human-specified)
notion of robustness and the inherent geometry of the data.

\end{abstract}

\section{Introduction}
The pervasive brittleness of deep neural
networks~\citep{szegedy2014intriguing,engstrom2019rotation,hendrycks2019benchmarking,athalye2018synthesizing} has attracted 
significant attention in recent years. 
Particularly worrisome is the phenomenon 
of {\em adversarial examples}~\citep{biggio2013evasion,szegedy2014intriguing},
imperceptibly perturbed natural inputs that induce erroneous predictions in state-of-the-art classifiers.
Previous work has proposed a variety of explanations for this
phenomenon, ranging from theoretical models
\citep{schmidt2018adversarially,bubeck2018adversarial} to arguments based on
concentration of measure in high-dimensions
\citep{gilmer2018adversarial,mahloujifar2018curse,shafahi2019are}. These
theories, however, are often unable to fully capture behaviors we observe  
in practice (we discuss this further in Section~\ref{sec:related}).

More broadly, previous work in the field tends to view adversarial examples
as aberrations arising either from the high dimensional nature of the input
space or statistical fluctuations in the training
data~\cite{szegedy2014intriguing,goodfellow2015explaining,gilmer2018adversarial}. 
From this point of view, it is natural to treat adversarial robustness as a
goal that can be disentangled and pursued independently from maximizing
accuracy~\citep{madry2018towards,stutz2019disentangling,suggala2019adversarial},
either through improved standard regularization
methods~\citep{tanay2016boundary} or
pre/post-processing of network
inputs/outputs~\cite{uesato2018adversarial,carlini2017adversarial,he2017adversarial}.

In this work, we propose a new perspective on the phenomenon of adversarial examples. 
In contrast to the previous models, we cast
adversarial vulnerability as a fundamental consequence of the
dominant supervised learning paradigm. Specifically, we claim that:
\begin{center}
    \em Adversarial vulnerability is a direct result of
    our models' sensitivity to well-generalizing features in the data.
\end{center}
Recall that we usually train classifiers to
{\em solely}  maximize (distributional) accuracy. Consequently,
classifiers tend to use {\em any} available signal to do so, even those that look
incomprehensible to humans. After all, the presence of ``a tail'' or
``ears'' is no more natural to a classifier than any other equally
predictive feature.  
In fact, we find that standard ML datasets {\em do}
admit highly predictive yet imperceptible features.
We posit that our models learn to rely on these ``non-robust'' features,
leading to adversarial perturbations that exploit this dependence.\footnote{It is worth emphasizing that while our findings demonstrate that adversarial vulnerability {\em does} arise from non-robust features, they do not preclude the possibility of adversarial vulnerability also arising from other phenomena~\cite{tanay2016boundary,schmidt2018adversarially}. For example,~\citet{nakkiran2019bugs} constructs adversarial examples that do not exploit non-robust features (and hence do not allow one to learn a generalizing model from them). Still, the mere existence of useful non-robust features suffices to establish that without explicitly discouraging models from utilizing these features, adversarial vulnerability will remain an issue.}

Our hypothesis also suggests an explanation for {\em adversarial transferability}: the phenomenon that adversarial perturbations computed for one model often transfer to other,
independently trained models.
Since any two models are likely to learn similar non-robust features, perturbations that manipulate such features will apply to both.
Finally, this perspective establishes adversarial vulnerability as a 
human-centric phenomenon, since, from the standard supervised learning point
of view, non-robust features can be as important as robust ones. It also
suggests that approaches aiming to enhance the interpretability of a given model by enforcing ``priors'' for
its
explanation~\cite{mahendran2015understanding,olah2017feature,smilkov2017smoothgrad} actually hide features that are {\em ``meaningful''} and {\em predictive}
to standard models. As such, producing {\em human}-meaningful explanations that
remain faithful to underlying models cannot be pursued independently from the
training of the models themselves.

To corroborate our theory, we show that it is possible to disentangle robust from non-robust
features in standard image classification datasets. 
Specifically, given any training dataset, we are able to construct:
\begin{enumerate}
    \item {\bf A ``robustified'' version for robust classification
	(Figure~\ref{fig:robustify_diagram})\footnote{The corresponding datasets 
	for CIFAR-10 are publicly available at~\url{http://git.io/adv-datasets}.} .}
	We demonstrate that it is possible to 
    effectively remove non-robust features from a dataset.
    Concretely, we create a training set (semantically similar to the original) on which {\em standard
    training} yields {\em good robust accuracy} on the {\em original, unmodified} test set.
    This finding establishes that adversarial vulnerability is not necessarily
    tied to the standard training framework, but is also a property of the
    dataset.
\item {\bf A ``non-robust'' version for standard classification
    (Figure~\ref{fig:adv_exp_diagram})\footnotemark[2].}  We are also 
    able to construct a training dataset for which the inputs are nearly identical to the originals, 
    but all appear incorrectly labeled. In fact, the inputs in the new training set are associated to their 
    labels only through {\em small adversarial perturbations} (and hence utilize only non-robust features).
    Despite the lack of any predictive human-visible information, training on this dataset
    yields good accuracy on the {\em original, unmodified} test set. 
    This demonstrates that adversarial perturbations can arise from flipping
    features in the data that are useful for classification of correct inputs
    (hence not being purely aberrations).
\end{enumerate}

Finally, we present a concrete classification task where the 
connection between adversarial examples and non-robust features can be studied rigorously.
This task consists of separating Gaussian distributions, and is loosely
based on the model presented in~\citet{tsipras2019robustness}, while 
expanding upon it in a few ways.
First, adversarial vulnerability in our setting can be precisely quantified as
a difference between the intrinsic data geometry and that of the
adversary's perturbation set.
Second, robust training yields a classifier which utilizes a geometry 
corresponding to a combination of these two.
Lastly, the gradients of standard models can be significantly more 
misaligned with the inter-class direction, capturing a phenomenon 
that has been observed in practice in more complex scenarios~\citep{tsipras2019robustness}.
\begin{figure}[ht]
	\centering
	\begin{subfigure}[b]{0.5\textwidth}
	\centering
	\includegraphics[width=1.0\textwidth]{./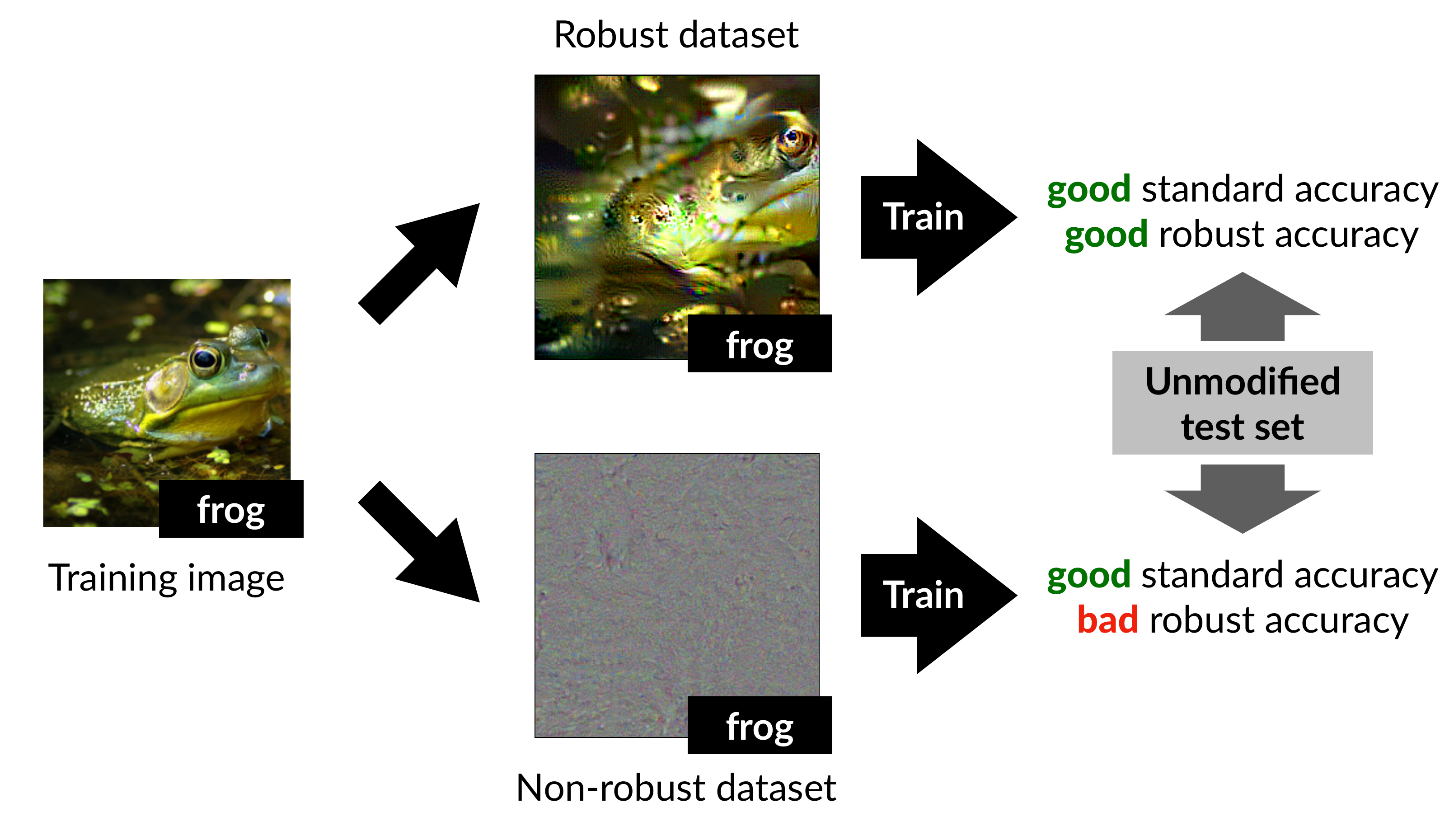}
	\caption{}
	\label{fig:robustify_diagram}
	\end{subfigure}
\hfill
	\begin{subfigure}[b]{0.4\textwidth}
	\centering
	\includegraphics[width=1.0\textwidth]{./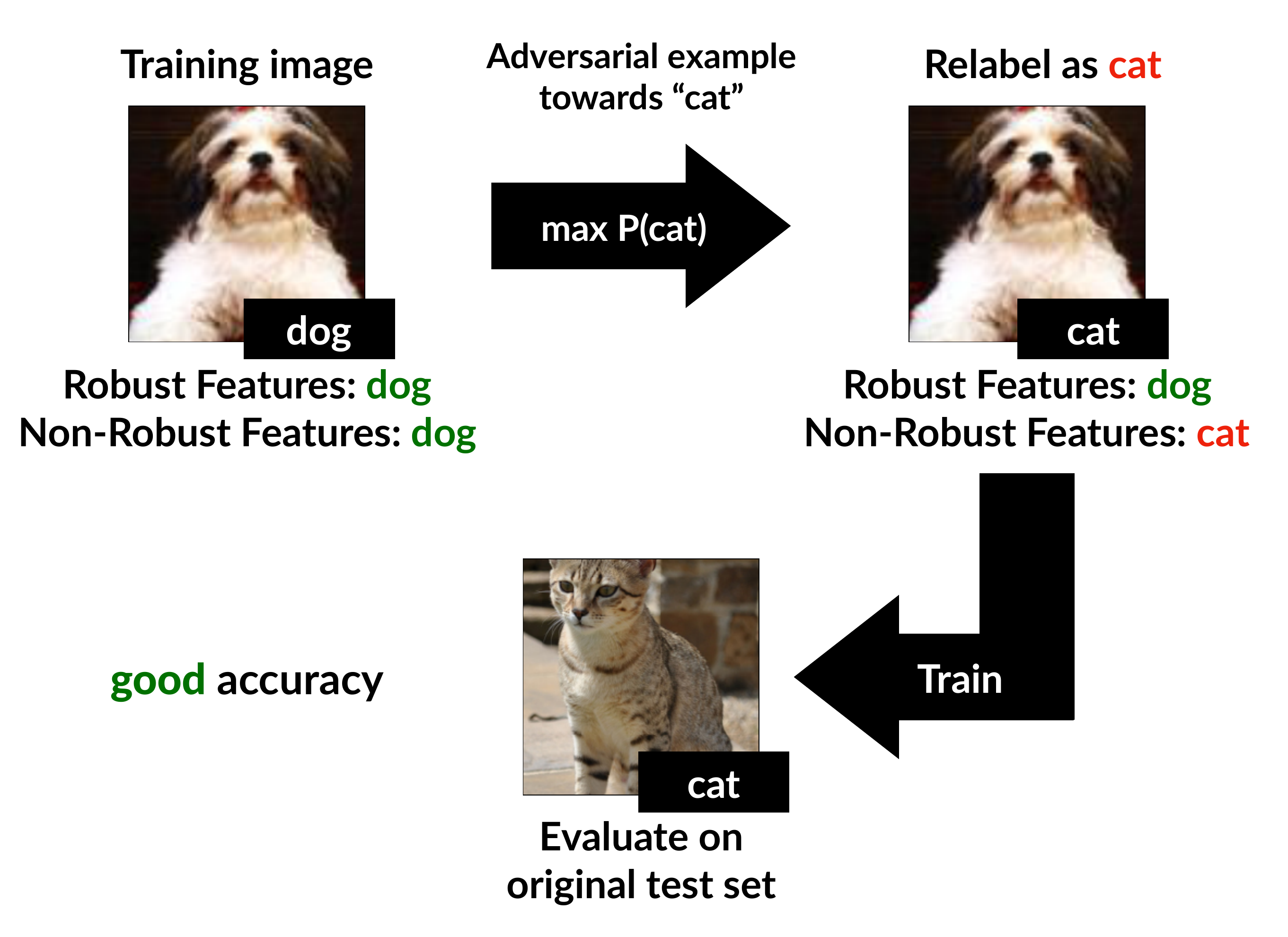}
	\caption{}
	\label{fig:adv_exp_diagram}
	\end{subfigure}
    \caption{A conceptual diagram of the experiments of 
    Section~\ref{sec:finding}. In (a) we disentangle features into
    combinations of robust/non-robust features (Section~\ref{sec:distill}). 
    In (b) we construct a dataset
which appears mislabeled to humans (via adversarial examples) but results
in good accuracy on the original test set (Section~\ref{sec:distill_nrf}).}
\label{fig:distill}
\end{figure}

\section{The Robust Features Model}
\label{sec:formal}
We begin by developing a framework, loosely based on the setting proposed
by~\citet{tsipras2019robustness}, that enables us to rigorously refer to
``robust'' and ``non-robust'' features. In particular, we present a set of
definitions which allow us to formally describe our setup, theoretical
results, and empirical evidence.

\paragraph{Setup.} 
We consider binary classification\footnote{Our framework can
    be straightforwardly adapted though to the multi-class setting.}, where
    input-label pairs $(x, y)\in\X\times\pmo$ are sampled from a (data)
    distribution $\D$; the goal is to 
learn a classifier $C: \X\rightarrow\pmo$ which predicts a label $y$ corresponding 
to a given input $x$.

We define a {\em feature} to be a function mapping from the
input space $\X$ to the real numbers, with the set of all features thus
being $\F = \{f: \X\to\Reals\}$. For convenience, we assume that
the features in $\F$ are shifted/scaled to be mean-zero and unit-variance
(i.e., so that $\ED[f(x)] = 0$ and $\ED[f(x)^2]=1$), in order to make the following definitions
scale-invariant\footnote{This restriction can be straightforwardly
removed by simply shifting/scaling the definitions.}. Note that this formal
definition also captures what we abstractly think of as features (e.g., we
can construct an $f$ that captures how ``furry'' an image is).

\paragraph{Useful, robust, and non-robust features.} We now define the key
concepts required for formulating our framework. To this end, we categorize
features in the following manner:
\begin{itemize}
\item {\bf $\rho$-useful features}: For a given distribution $\D$, we call a
feature $f$ {\em $\rho$-useful} ($\rho > 0$) if it is correlated with the true
label in expectation, that is if
\begin{equation} 
\ED[y \cdot f(x)]  \geq \rho.
\end{equation}
We then define $\rho_\D(f)$ as the largest $\rho$ for which feature $f$ is
$\rho$-useful under distribution $\D$. (Note that if a feature $f$ is negatively
correlated with the
label, then $-f$ is useful instead.) Crucially, a linear classifier trained on
$\rho$-useful features can attain non-trivial generalization performance.
\item {\bf $\gamma$-robustly useful features}: Suppose we have a
    $\rho$-useful feature $f$ ($\rho_\D(f) > 0$). We refer to $f$ as a {\em robust
feature} (formally a $\gamma$-robustly useful feature for $\gamma > 0$) if,
under adversarial perturbation (for some specified set of valid 
perturbations $\Delta$), $f$ remains $\gamma$-useful.
Formally, if we have that
\begin{equation}
    \ED\left[\inf_{\delta\in\Delta(x)} y \cdot f(x+\delta)\right]  \geq \gamma.
\end{equation}
\item {\bf Useful, non-robust features}: A {\em useful, non-robust feature} 
is a feature which is $\rho$-useful for some $\rho$ bounded away from zero,
but is not a $\gamma$-robust feature for any $\gamma \geq 0$. These features
help with classification in the standard setting, but may hinder
accuracy in the adversarial setting, as the correlation with the label can
be flipped.
\end{itemize}

\paragraph{Classification.} In our framework, a classifier $C = (F,
w, b)$ is comprised of a set of features $F\subseteq\F$, a weight
vector $w$, and a scalar bias $b$. For a given input $x$, the classifier
predicts the label $y$ as
$$C(x) = \sign\left(b + \sum_{f\in F} w_f \cdot f(x)\right).$$
For convenience, we denote the set of features learned by a classifier $C$
as $F_C$.

\paragraph{Standard Training.} Training a classifier is performed by
minimizing a loss function (via {\em empirical risk minimization} (ERM)) that
decreases with the correlation between the weighted combination of the features
and the label. The simplest example of such a loss is
\footnote{Just as for the other parts of this model, we use this loss
    for simplicity only---it is straightforward to generalize to more
    practical loss function such as logistic or hinge loss.}
\begin{equation}
    \ED\left[\mathcal{L}_\theta(x, y)\right] = -\ED\left[y\cdot
    \left(b + \sum_{f\in F} w_f\cdot f(x)\right)\right].
\end{equation}
When minimizing classification loss, {\em no distinction} exists between
robust and non-robust features: the only distinguishing factor of a feature
is its $\rho$-usefulness. Furthermore, the classifier will utilize {\em
any} $\rho$-useful feature in $F$ to decrease the loss of the classifier.

\paragraph{Robust training.} In the presence of an {\em adversary}, any useful but
non-robust features can be made {\em anti-correlated} with the true label, 
leading to adversarial vulnerability. Therefore, ERM is no longer
sufficient to train
classifiers that are robust, and we 
need to explicitly account for the effect of the adversary on the
classifier. To do so, we use an {\em adversarial} loss function that can discern
between robust and non-robust features \citep{madry2018towards}:
\begin{equation}
    \ED\left[\max_{\delta\in\Delta(x)}\mathcal{L}_\theta(x+\delta, y)\right],
\end{equation}
for an appropriately defined set of perturbations $\Delta$. Since the
adversary can exploit non-robust features to degrade classification
accuracy, minimizing this adversarial loss (as in adversarial
training~\citep{goodfellow2015explaining,madry2018towards}) can be viewed
as explicitly preventing the classifier from learning a useful but
non-robust combination of features.

\paragraph{Remark.} We want to note that even though the framework above enables
us to formally describe and predict the outcome of our experiments, it does not
necessarily capture the
notion of non-robust features exactly as we intuitively might think of them. For
instance, in principle, our theoretical framework would allow for useful
non-robust features to arise as combinations of useful robust features and
useless non-robust features~\cite{goh2019discussion}. These types of
constructions, however, are actually precluded by our experimental results (in
particular, the classifiers trained in Section~\ref{sec:finding} would not
generalize).  This shows that our experimental findings capture a stronger, more
fine-grained statement than our formal definitions are able to express. We view
bridging this gap as an interesting direction for future work.

\section{Finding Robust (and Non-Robust) Features}
\label{sec:finding}
The central premise of our proposed framework is that there exist both robust and
{\nrf} that constitute useful signals for standard classification. We now
provide evidence in support of this hypothesis by disentangling these two sets
of features.

\newcommand{\drobust}{\widehat{\mathcal{D}}_{R}}
\newcommand{\dnonrobust}{\widehat{\mathcal{D}}_{NR}}
\newcommand{\drand}{\widehat{\mathcal{D}}_{rand}}
\newcommand{\ddet}{\widehat{\mathcal{D}}_{det}}

On one hand, we will construct a ``robustified'' dataset, consisting of samples that
primarily contain robust features. Using such a dataset, we are able
to train robust classifiers (with respect to the standard test set) using
standard (i.e., non-robust) training.
This demonstrates that robustness can arise by {\em removing} certain features from
the dataset (as, overall, the new dataset contains less
information about the original training set).
Moreover, it provides evidence that adversarial vulnerability is caused by
{\nrf} and is not inherently tied to the standard training framework.

On the other hand, we will construct datasets where the
input-label association is based purely on non-robust features (and thus
the corresponding dataset appears {\em completely} mislabeled
to humans). We show that this dataset suffices to train a classifier with good
performance on the standard test set.  This indicates that natural models use
{\em non-robust features} to make predictions, even in the presence of robust
features. These features {\em alone} are actually sufficient for non-trivial
generalizations performance on natural images, which indicates that they are
indeed valuable features, rather than artifacts of finite-sample overfitting.

A conceptual description of these experiments can be found in
Figure~\ref{fig:distill}.

\begin{figure}[!ht]
	\centering
	\begin{subfigure}[b]{0.45\textwidth}
	\centering
	\includegraphics[width=1.0\textwidth]{./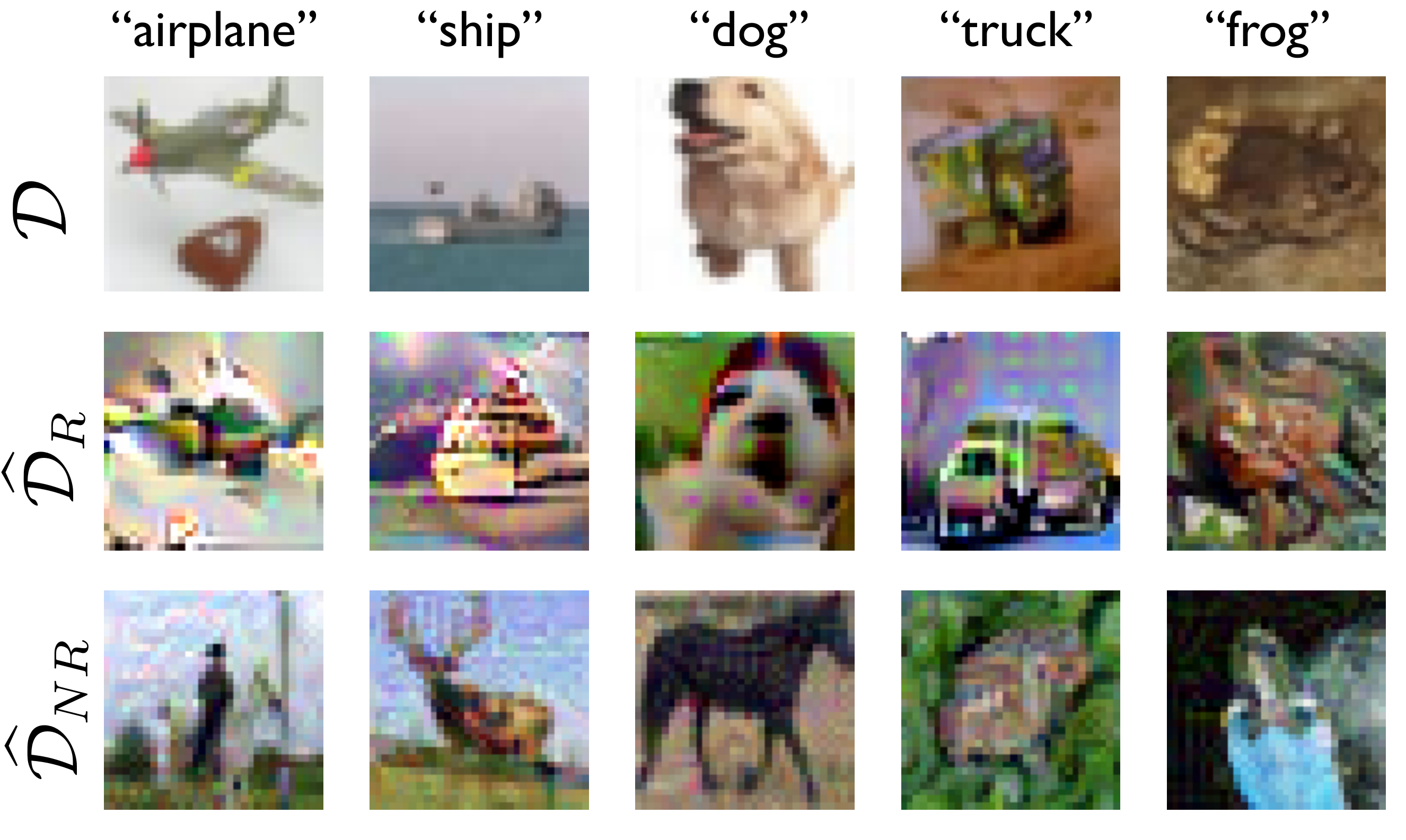}
	\vfill\null
	\caption{}
	\label{fig:robust_inputs}
	\end{subfigure}
\hfill
	\begin{subfigure}[b]{0.5\textwidth}
	\centering
	\includegraphics[width=\textwidth]{./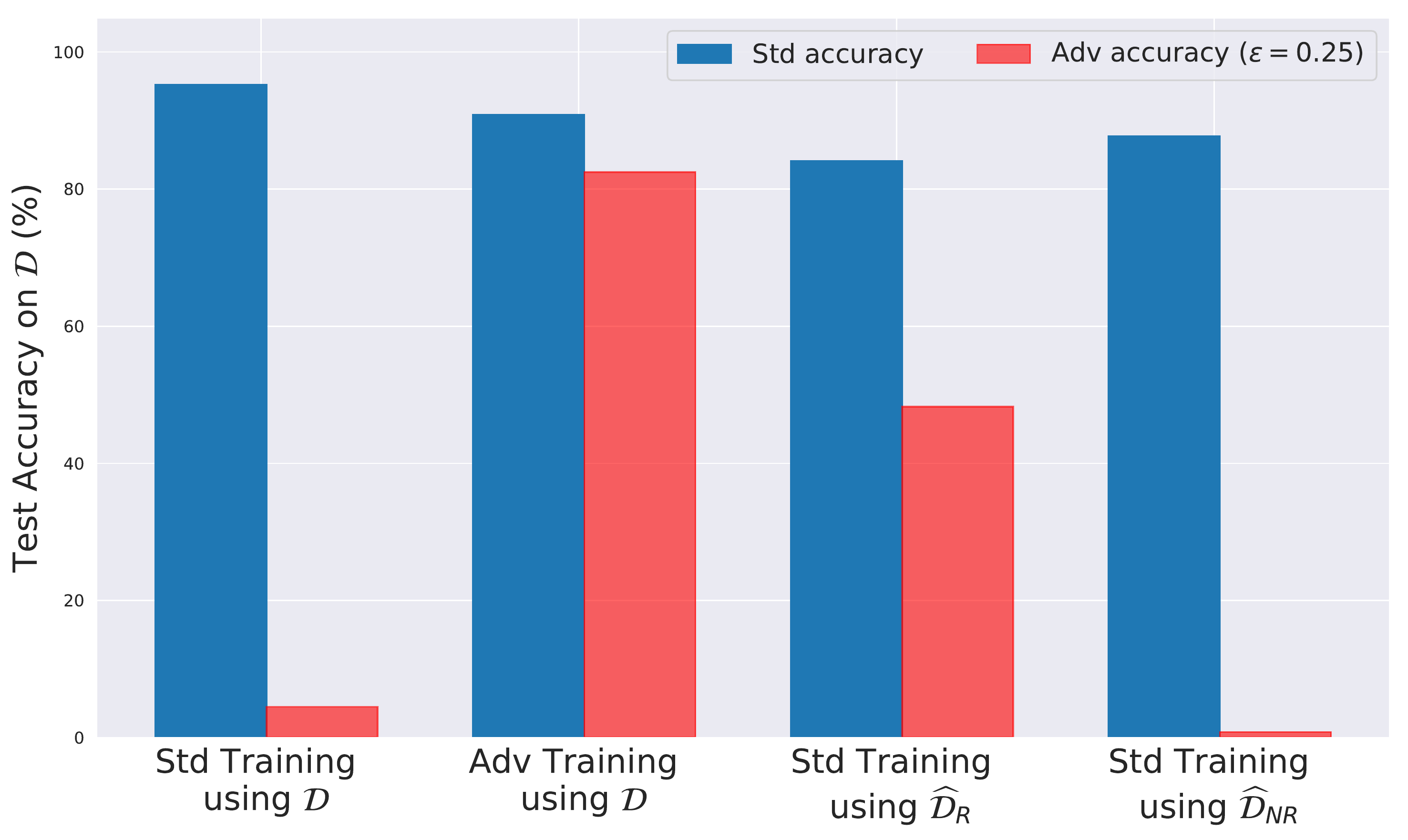}
	\caption{}
	\label{fig:robustify_cifar}
	\end{subfigure}
    \caption{
	{\bf Left}: Random samples from our variants of the
        CIFAR-10~\cite{krizhevsky2009learning} training set:
	the original training set; 
	the {\em robust training set} $\drobust$, restricted to features used by a
	robust model; and
	the {\em non-robust training set} $\dnonrobust$, restricted to
	features relevant to a standard model (labels appear incorrect to humans).
	{\bf Right}: Standard and robust accuracy on the CIFAR-10
	test set ($\D$) for models trained with:
	(i) standard training (on $\D$)  ;
	(ii) standard training on $\dnonrobust$;
	(iii) adversarial training (on $\D$); and
	(iv) standard training on $\drobust$.
        Models trained on $\drobust$ and $\dnonrobust$ reflect
        the original models used to create them: notably,
        standard training on $\drobust$
        yields nontrivial robust accuracy. Results for
Restricted-ImageNet~\citep{tsipras2019robustness} are in
~\ref{app:imagenet} Figure~\ref{fig:distill_imagenet}.}
\label{fig:distill_results}
\end{figure}

\subsection{Disentangling robust and non-robust features}
\label{sec:distill}
Recall that the
features a classifier learns to rely on are based purely
on how useful these features are for (standard) generalization. Thus, under our conceptual framework,
if we can ensure that only robust features are useful, standard training
should result in a robust classifier.
Unfortunately, we cannot directly manipulate the features of very complex,
high-dimensional datasets.
Instead, we will leverage a robust model and modify
our dataset to contain only the features that are relevant to that model.

In terms of our formal framework (Section~\ref{sec:formal}), given a {\em
robust} (i.e., adversarially trained~\cite{madry2018towards}) model $C$ we aim to construct a
distribution $\drobust$ which satisfies:
\begin{equation}
    \label{eq:robustify_cases}
    \mathbb{E}_{(x,y)\sim\drobust}\left[f(x)\cdot y\right] = \begin{cases}
	\ED\left[f(x)\cdot y\right] &\text{if } f \in F_C \\
	0 &\text{otherwise},
    \end{cases}
\end{equation}
where $F_C$ again represents the set of features utilized by $C$.
Conceptually, we want features used by $C$ to be as useful as they were on the
original distribution $\D$ while ensuring that the rest of the features are not
useful under $\dnonrobust$. 

We will construct a training set for $\drobust$ via a one-to-one mapping $x
\mapsto x_r$ from the
original training set for $\D$. 
In the case of a deep neural network, $F_C$
corresponds to exactly the set of activations in the penultimate layer
(since these correspond to inputs to a linear classifier).
To ensure that features used by the model are equally useful under both training
sets, we (approximately) enforce all features in $F_C$ to have similar values for
both $x$ and $x_r$ through the following optimization:
\begin{equation}
\label{eq:robustify}
\min_{x_r} \|g(x_r) - g(x)\|_2,
\end{equation}
where $x$ is the original input and $g$ is the mapping from $x$ to the
representation layer.
We optimize this objective using gradient descent in
input space\footnote{We follow~\cite{madry2018towards} and normalize gradient steps
    during this optimization. Experimental details are provided in
Appendix~\ref{app:experimental_setup}.}.

Since we don't have access to features outside
$F_C$, there is no way to ensure that the expectation
in~\eqref{eq:robustify_cases} is zero for all $f\not\in F_C$. To
approximate this condition, we choose the starting point of gradient descent
for the optimization in~\eqref{eq:robustify}
to be an input $x_0$ which is drawn from $\D$ independently of the label of $x$
(we also explore sampling $x_0$ from noise in
Appendix~\ref{app:from_noise}).
This choice ensures that any feature present in that input will not be useful
since they are not correlated with the label in expectation over $x_0$.
The underlying assumption here is that, when performing the optimization
in~\eqref{eq:robustify}, features that are not being directly optimized (i.e.,
features outside $F_C$) are not affected.
We provide pseudocode for the construction in
Figure~\ref{alg:robustification} (Appendix~\ref{app:experimental_setup}).

Given the new training set for $\drobust$ (a few random samples are
visualized in Figure~\ref{fig:robust_inputs}), we
train a classifier using standard (non-robust) training. We then test this
classifier on the original test set (i.e. $\D$). The results
(Figure~\ref{fig:robustify_cifar}) indicate that the classifier learned
using the new dataset attains good accuracy in {\em both standard and
adversarial settings}
\footnote{In an attempt to explain the gap in
accuracy between the model trained on $\drobust$ and the original
robust classifier $C$, we test distributional shift, by reporting results
on the ``robustified'' test set in
Appendix~\ref{app:robust_test}.}
\footnote{In order to gain more confidence in the robustness of the resulting
    model, we attempt several diverse attacks in Appendix~\ref{app:attacks}.}.

As a control, we repeat this methodology using a standard (non-robust) model for $C$ in
our construction of the dataset. Sample images from the resulting
``non-robust dataset'' $\dnonrobust$ are shown in Figure~\ref{fig:robust_inputs}---they
tend to resemble more the source image of the optimization $x_0$ than the target
image $x$. We find that training on this dataset leads to good standard
accuracy, yet yields almost no robustness (Figure~\ref{fig:robustify_cifar}). We
also verify that this procedure is not simply a matter of encoding the
weights of the original model---we get the same results for both $\drobust$
and $\dnonrobust$ if we train with different architectures than that of the
original models.

Overall, our findings corroborate the hypothesis that adversarial examples
can arise from (non-robust) features of the data itself. By
filtering out \nrf \ from the dataset (e.g. by restricting the
set of available features to those used by a robust model), one can train a
significantly more robust model using {\em standard training}.

\subsection{Non-robust features suffice for standard classification}
\label{sec:distill_nrf}
The results of the previous section show that by restricting the dataset to
only contain features that are used by a robust model, standard training results
in classifiers that are significantly more robust. This suggests that when training on the standard
dataset, non-robust features take on a large role in the resulting learned
classifier. Here we set out to show that this role is not merely incidental or
due to finite-sample overfitting. In particular, we demonstrate that
non-robust features {\em alone} suffice for standard generalization---
i.e., a model trained solely on non-robust features can perform well on
the {\em standard} test set. 

To show this, we construct a dataset where the only features that
are useful for classification are {\em non-robust} features (or in terms of
our formal model from Section \ref{sec:formal}, all features $f$ that are $\rho$-useful are
non-robust).
To accomplish this, we modify each input-label pair $(x, y)$ as follows.
We select a target class $t$ either (a) uniformly at random among
classes (hence features become uncorrelated with the labels) or (b)
deterministically according to the source class (e.g. using a fixed permutation
of labels).
Then, we add a small adversarial perturbation to $x$ in order to ensure it is
classified as $t$ by a standard model. Formally:
\begin{equation}
x_{adv} = \argmin_{\|x' - x\| \leq \eps}\ L_C(x', t),
\label{eqn:adv}
\end{equation}
where $L_C$ is the loss under a standard (non-robust) classifier $C$ and $\eps$
is a small constant. The resulting inputs are nearly indistinguishable
from the originals (Appendix~\ref{app:omitted_figures}
Figure~\ref{fig:distill_nonrobust})---to a human observer,
it thus appears that the label $t$ assigned to the modified input is simply incorrect.
The resulting input-label pairs $(x_{adv},t)$ make up the new
training set (pseudocode in Appendix~\ref{app:experimental_setup} Figure~\ref{alg:nrf}).

Now, since $\|x_{adv}-x\|$ is small, by definition the robust features
of $x_{adv}$ are still correlated with class $y$ (and not $t$) in expectation
over the dataset. After all, humans still recognize the original class.
On the other hand, since every $x_{adv}$ is strongly
classified as $t$ by a standard classifier, it must be that some of the
non-robust features are now strongly correlated with $t$ (in expectation).

In the case where $t$ is chosen at random, the robust features
are originally uncorrelated with the label $t$ (in expectation), and after the
adversarial perturbation can be only slightly correlated (hence being
significantly less useful for classification than before)
\footnote{\citet{goh2019leakage} provides an approach to quantifying this ``robust feature
leakage'' and finds that one can obtain a (small) amount of test accuracy by
leveraging robust feature leakage on $\drand$.}.
Formally, we aim to construct a dataset $\drand$ where
\footnote{Note that the optimization procedure we describe aims to merely
{\em approximate} this condition, where we once again use trained models to simulate
access to robust and non-robust features.}
:
\begin{equation}
    \label{eq:t_rand}
    \EDc{\drand}\left[y\cdot f(x)\right] \begin{cases}
	> 0 &\text{if } f\text{ non-robustly useful under }\D, \\
	\simeq 0 &\text{otherwise.}
    \end{cases}
\end{equation}

In contrast, when $t$ is chosen deterministically based on $y$, the robust
features actually point {\em away} from the assigned label $t$.
In particular, all of the inputs labeled with class $t$ exhibit {\em
non-robust features} correlated with $t$, but robust features correlated
with the original class $y$.
Thus, robust features on the original training set provide significant
predictive power on the training set, but will actually hurt generalization
on the standard test set. Viewing this case again using the formal model,
our goal is to construct $\ddet$ such that 
\begin{equation}
    \label{eq:t_det}
    \EDc{\ddet}\left[y\cdot f(x)\right] \begin{cases}
	> 0 &\text{if } f\text{ non-robustly useful under }\D, \\
    < 0 &\text{if } f\text{ robustly useful under }\D \\
    \in \mathbb{R} &\text{otherwise ($f$ not useful under
    $\D$)\footnotemark}
    
    \end{cases}
\end{equation}
\footnotetext{
    Note that regardless how useful a feature is on $\ddet$, since it is
    useless on $\D$ it cannot provide any generalization benefit on the
    unaltered test set.
}

We find that standard training on these datasets
actually generalizes to the {\em original}
test set, as shown in
Table~\ref{tab:adv_next}).
This indicates that non-robust
features are indeed useful for classification in the standard setting.
Remarkably, even training on $\ddet$ (where all
the robust features are correlated with the wrong class),
results in a well-generalizing classifier.
This indicates that non-robust features can be picked up by models during
standard training,
even in the presence of {\em robust features} that are predictive
\footnote{Additional results and analysis (e.g. training curves,
    generating $\drand$ and $\ddet$ with a robust model, etc.) are in
App. ~\ref{app:erm_relabeled} and~\ref{app:accuracy_curves}}\footnote{We
also show that the models trained on $\drand$ and $\ddet$ generalize to
CIFAR-10.1~\cite{recht2018cifar10} in Appendix~\ref{app:cifar101}.}.

\begin{minipage}{0.45\textwidth}
    
	\includegraphics[width=\textwidth]{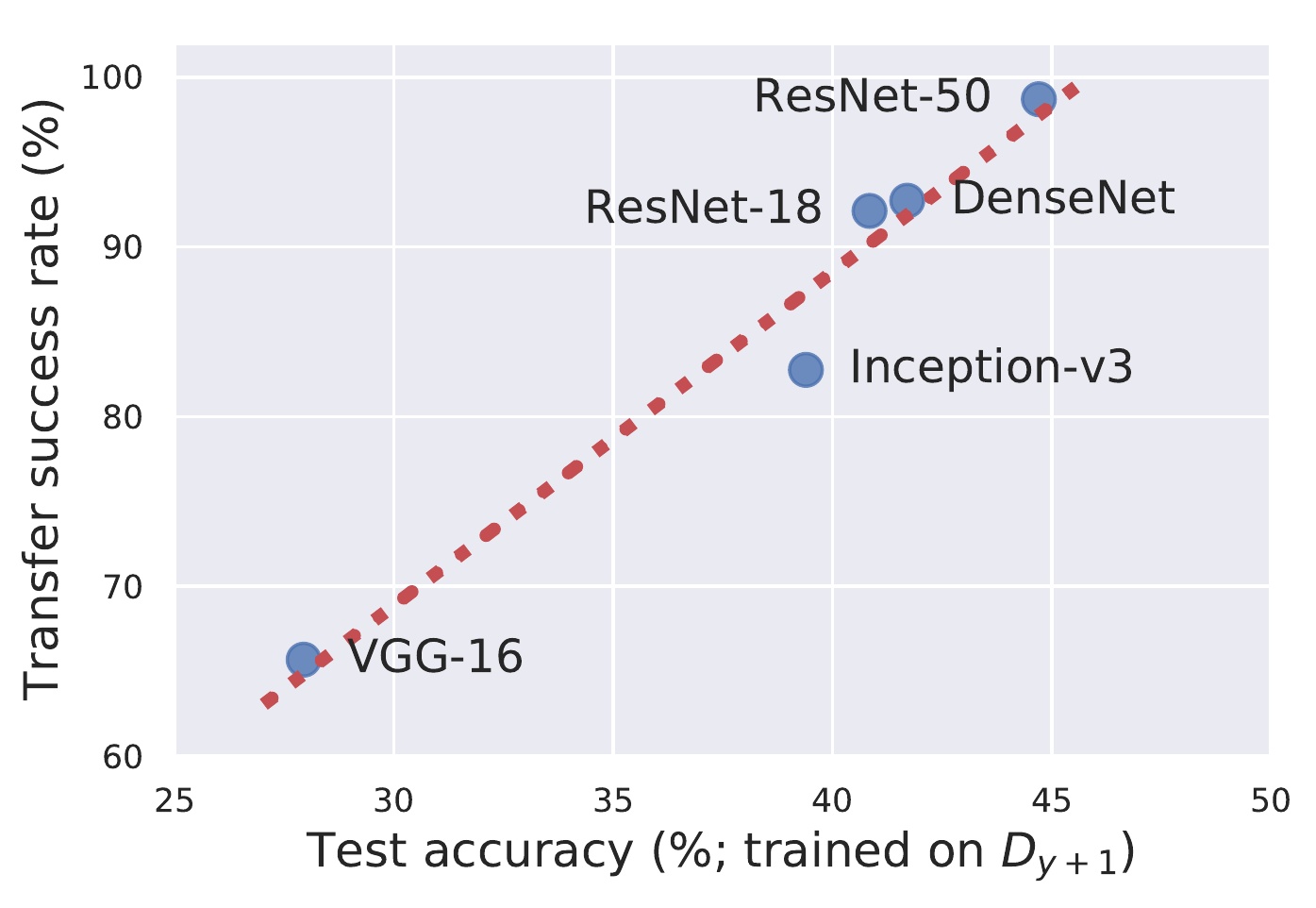}
	\captionof{figure}{Transfer rate of adversarial examples from a ResNet-50
	to different architectures alongside test set performance of
	these architecture when trained on the dataset 
	generated in Section~\ref{sec:distill_nrf}. Architectures more susceptible to transfer
	attacks also performed better on the standard test set supporting our
	hypothesis that adversarial transferability arises from utilizing similar
	{\em non-robust features}.}
     \label{fig:transfer} 
\end{minipage} \hfill
\begin{minipage}{0.45\textwidth}
	\vspace*{1em}
    \begin{center}
    {\renewcommand{\arraystretch}{1.2}
    \begin{tabular}{ccc}
	    \toprule
	    \multirow{2}{*}{{\bf Source Dataset}} & \multicolumn{2}{c}{{\bf Dataset}} \\ \cmidrule{2-3}
	    & CIFAR-10 & ImageNet$_R$ \\
	    \midrule
	    $\D$ &  95.3\% &  96.6\% \\ \midrule
	    $\drand$ & 63.3\% & 87.9\% \\
	    $\ddet$ & 43.7\% & 64.4\%  \\
	    \bottomrule \\
    \end{tabular}
}
\vspace*{2.5em}
\captionof{table}{Test accuracy (on $\D$) of classifiers trained on the
    $\D$, $\drand$, and $\ddet$ training sets created using a standard 
    (non-robust) model. For both $\drand$ and
    $\ddet$, only non-robust features correspond to useful features on both
    the train set and $\D$. These datasets are constructed using
    adversarial perturbations of $x$ towards a class $t$ (random for
    $\drand$ and deterministic for $\ddet$); 
    the resulting images are relabeled as $t$.}
    \label{tab:adv_next} 
\end{center}
\end{minipage}

\subsection{Transferability can arise from \nrf}
One of the most intriguing
properties of adversarial examples is that they {\em transfer} across models
with different architectures and independently sampled
training sets~\citep{szegedy2014intriguing,papernot2016transferability,charles2019geometric}.
Here, we show that this phenomenon can in fact be viewed as
a natural consequence of the existence of non-robust features.
Recall that, according to our main thesis, adversarial examples can arise as a
result of perturbing well-generalizing, yet brittle features.
Given that such features are inherent to the data distribution, different
classifiers trained on independent samples from that distribution are likely to
utilize similar non-robust features.
Consequently, an adversarial example constructed by exploiting the non-robust
features learned by one classifier will transfer to any other classifier
utilizing these features in a similar manner.

In order to illustrate and corroborate this hypothesis, we train five
different architectures on the dataset generated in Section~\ref{sec:distill_nrf}
(adversarial examples with deterministic labels)
for a standard ResNet-50~\cite{he2016deep}.
Our hypothesis would suggest that
architectures which learn better from this training set (in
terms of performance on the standard test set) are more likely to learn 
similar non-robust
features to the original classifier.
Indeed, we find that the test accuracy of each
architecture is predictive of how often adversarial examples transfer from the
original model to standard classifiers with that architecture
(Figure~\ref{fig:transfer}).
In a similar vein, \citet{nakkiran2019bugs} constructs a set of adversarial
perturbations that is explicitly non-transferable and finds that these
perturbations cannot be used to learn a good classifier.
These findings thus corroborate our hypothesis that adversarial
transferability arises when models learn similar brittle features of the
underlying dataset.

\section{A Theoretical Framework for Studying (Non)-Robust Features}
\label{sec:theory}
The experiments from the previous section demonstrate that the conceptual
framework of robust and non-robust features is strongly predictive of the
empirical behavior of state-of-the-art models on real-world datasets.
In order to further strengthen our understanding of the phenomenon, we instantiate
the framework in a concrete setting that allows us to theoretically study
various properties of the corresponding model.
Our model is similar to that of~\citet{tsipras2019robustness} in the sense
that it contains a dichotomy between robust and non-robust features, but 
extends upon it in a number of ways:
\begin{enumerate}
    \item The adversarial vulnerability can be explicitly expressed as a
        difference between the inherent data metric and the $\ell_2$
	metric.
    \item Robust learning corresponds exactly to learning a combination of these
        two metrics.
    \item The gradients of adversarially trained models align better with the 
        adversary's metric.
\end{enumerate}

\paragraph{Setup.} We study a simple problem of {\em maximum likelihood
classification} between two Gaussian distributions. In particular, given samples
$(x, y)$ sampled from $\D$ according to
\begin{equation}
    y \stackrel{\text{u.a.r.}}{\sim} \{-1, +1\}, \qquad x \sim
\mathcal{N}(y\cdot\mu_*, \Sigma_*),
\end{equation}
our goal is to learn parameters $\Theta = (\mu, \Sigma)$ such that 
\begin{equation}
\label{eq:mlc_prob}
\Theta = \arg\min_{\mu, \Sigma} 
\ED \left[\ell(x;y\cdot \mu,\Sigma) \right],
\end{equation}
where $\ell(x;\mu,\Sigma)$ represents the Gaussian negative log-likelihood (NLL)
function. Intuitively, we find the parameters $\mu, \Sigma$ which maximize the
likelihood of the sampled data under the given model. Classification under this
model can be accomplished via likelihood test: given an unlabeled sample $x$, we
predict $y$ as
$$y = \arg\max_y \ell(x;y\cdot\mu, \Sigma) = \text{sign}\left(
x^\top \Sigma^{-1} \mu
\right).$$
\noindent In turn, the {\em robust analogue} of this problem arises from
replacing $\ell(x;y\cdot\mu,\Sigma)$ with the NLL under adversarial
perturbation.
The resulting robust parameters $\Theta_r$ can be written as
\begin{align}
\label{eq:robust_mlc}
\Theta_r = \arg\min_{\mu,\Sigma}
\ED
\left[
    \max_{\|\delta\|_2\leq \varepsilon} \ell(x + \delta; y\cdot\mu, \Sigma)
\right],
\end{align}
A detailed analysis of this setting is in Appendix~\ref{app:proof}---here
we present a high-level overview of the results. 

\paragraph{(1) Vulnerability from metric misalignment (non-robust features).}
Note that in this model, one can
rigorously make reference to an {\em inner product} (and thus a metric) induced
by the features. In particular, one can view the learned parameters of a
Gaussian $\Theta = (\mu, \Sigma)$ as defining an inner product over the input
space given by $\inner{x}{y}_\Theta = (x-\mu)^\top\Sigma^{-1}(y-\mu)$. This in
turn induces the Mahalanobis distance, which represents how a change in the
input affects the features learned by the classifier.
This metric is not necessarily aligned with the metric in which the adversary is
constrained, the $\ell_2$-norm.
Actually, we show that adversarial vulnerability arises exactly as a {\em misalignment}
of these two metrics.

\begin{restatable}[Adversarial vulnerability from misalignment]{theorem}{vulnerability}
    \label{thm:0}
    Consider an adversary whose perturbation is determined by the
    ``Lagrangian penalty'' form of~\eqref{eq:robust_mlc}, i.e.
    $$ \max_\delta \ell(x + \delta; y\cdot\mu, \Sigma) - C\cdot \|\delta\|_2,$$
    where $C \geq \frac{1}{\sigma_{min}(\Sigma_*)}$ is a constant trading
    off NLL minimization and the adversarial constraint\footnote{The
    constraint on $C$ is to ensure the problem is concave.}. 
    Then, the adversarial loss $\mathcal{L}_{adv}$ incurred by the
    non-robustly learned $(\mu, \Sigma)$ is given by:
    \begin{align*}
	\mathcal{L}_{adv}(\Theta) - \mathcal{L}(\Theta) &= 
	\tr\left[\left(I + \left(C \cdot \Sigma_* -
	    I\right)^{-1}\right)^2 \right] - d, 
    \end{align*}
    and, for a fixed $\tr(\Sigma_*) = k$ the above is minimized by
    $\Sigma_* = \frac{k}{d}\bm{I}$.
\end{restatable}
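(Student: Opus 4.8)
\emph{Proof plan.}
The plan is to evaluate both $\mathcal{L}(\Theta)$ and $\mathcal{L}_{adv}(\Theta)$ in closed form at the non-robust optimum, and then reduce the final minimization over $\Sigma_*$ to a one-dimensional convexity argument. First I would record the (standard) fact that the population minimizer of the Gaussian NLL~\eqref{eq:mlc_prob} is the ground-truth pair $\Theta = (\mu_*, \Sigma_*)$: writing $\ell(x; y\mu, \Sigma)$ as a quadratic form plus a log-determinant and taking $\ED[\cdot]$, the cross term pins $\mu = \mu_*$ and the stationarity condition in $\Sigma$ gives $\Sigma = \Sigma_*$; this is the computation done in Appendix~\ref{app:proof}. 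In particular $\mathcal{L}(\Theta)$ equals a fixed constant plus $\tfrac12\tr(\Sigma_*^{-1}\Sigma_*)$, which up to normalization contributes the ``$-d$'' in the statement.

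Next I would fix $\Theta = (\mu_*, \Sigma_*)$ and solve the adversary's inner maximization for a fixed sample. After the substitution $v := x - y\mu_*$ (so that $v \sim \mathcal{N}(0, \Sigma_*)$ when $(x,y)\sim\D$), the adversary's objective (in the quadratic-penalty form of the Lagrangian) is a quadratic in $\delta$ whose Hessian is $\Sigma_*^{-1} - CI$. The hypothesis $C \ge 1/\sigma_{\min}(\Sigma_*)$ is precisely what makes this negative semidefinite, so the inner problem is concave with a unique interior maximizer determined by its first-order condition; solving gives $\delta^\star = (C\Sigma_* - I)^{-1} v$, and hence a perturbed residual $(x + \delta^\star) - y\mu_* = \big(I + (C\Sigma_* - I)^{-1}\big) v$.

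Substituting $\delta^\star$ back into the adversary's objective, the per-sample adversarial loss becomes a quadratic form $v^\top M v$ plus the same deterministic constant as before, where $M$ is a rational function of $\Sigma_*$; the key simplification is that $\Sigma_*$ and $(C\Sigma_* - I)^{-1}$ commute and are symmetric, so everything is simultaneously diagonalizable and cancellations (using $C\Sigma_* = (C\Sigma_* - I) + I$) collapse $M$ into a clean form. Taking $\mathbb{E}_{v\sim\mathcal{N}(0,\Sigma_*)}[v^\top M v] = \tr(M\Sigma_*)$, subtracting $\mathcal{L}(\Theta)$ so the constants cancel, and using commutativity once more yields $\mathcal{L}_{adv}(\Theta) - \mathcal{L}(\Theta) = \tr\big[(I + (C\Sigma_* - I)^{-1})^2\big] - d$. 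For the last claim I would diagonalize $\Sigma_* = U\,\mathrm{diag}(\lambda_1, \dots, \lambda_d)\, U^\top$, rewrite the expression as $\sum_{i=1}^d \phi(\lambda_i)$ with $\phi(\lambda) = \big(1 + (C\lambda - 1)^{-1}\big)^2 - 1$ — a function convex on the feasible range $(1/C, \infty)$, since $C\lambda_i \ge C\,\sigma_{\min}(\Sigma_*) \ge 1$ — and invoke Jensen's inequality under the constraint $\sum_i \lambda_i = \tr(\Sigma_*) = k$ to get $\sum_i \phi(\lambda_i) \ge d\,\phi(k/d)$ with equality iff all $\lambda_i = k/d$, i.e. $\Sigma_* = \tfrac{k}{d} I$.

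The routine parts are the first and last steps: a textbook Gaussian-MLE identity and a one-line Jensen argument. The main obstacle is the middle: setting up the inner maximization carefully, using the spectral hypothesis on $C$ to guarantee the maximizer is interior and unique, and then pushing the matrix algebra through — in particular, recognizing the commuting structure that turns an a priori messy rational matrix expression into $\tr[(I + (C\Sigma_* - I)^{-1})^2]$. I expect the fiddliest point to be the bookkeeping of constants, namely ensuring the additive NLL constant and the penalty term cancel correctly against $\mathcal{L}(\Theta)$ so that exactly ``$-d$'' remains.
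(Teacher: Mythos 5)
Your plan follows the paper's proof almost exactly for the main identity: both solve the inner (quadratically penalized) maximization via the first-order condition to get $\delta^* = (C\Sigma_* - I)^{-1}v$ with $v = x - y\mu_*$, substitute back, exploit that $\Sigma_*$ and $(C\Sigma_*-I)^{-1}$ commute, and complete the square so that $\mathbb{E}[v^\top M v]$ collapses to $\tr[(I+(C\Sigma_*-I)^{-1})^2]-d$. Where you genuinely depart from the paper is the last step: the paper finds the optimal $\Sigma_*$ by writing the objective as $\sum_i (1+(C\sigma_i-1)^{-1})^2$ and imposing the Lagrangian stationarity condition $\nabla_{\sigma_i}=\nabla_{\sigma_j}$, showing $\sigma_i=\sigma_j$ is the only real solution; you instead observe that $\phi(\lambda) = (1+(C\lambda-1)^{-1})^2$ is convex on $(1/C,\infty)$ (indeed $\phi = 1 + 2/t + 1/t^2$ with $t = C\lambda-1>0$, which is convex in $t$ and hence in $\lambda$) and apply Jensen under the trace constraint. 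Your route is slightly stronger: the paper's stationarity argument only identifies the unique interior critical point, whereas strict convexity certifies that $\Sigma_* = \tfrac{k}{d}I$ is the global minimizer. One bookkeeping caveat you should resolve explicitly: in the theorem, $\mathcal{L}_{adv}(\Theta)$ is the NLL evaluated at the penalized adversary's optimum $\delta^*$, \emph{without} the $-C\|\delta^*\|_2$ term; if you carry the penalty into the adversarial loss, $\mathbb{E}[C\|\delta^*\|_2]$ is an expected Gaussian norm that does not reduce to a trace, and you will not recover the stated formula. The ``$-d$'' comes purely from adding and subtracting $\mathbb{E}\|w\|^2 = d$ (for whitened $w\sim\mathcal{N}(0,I)$) to complete the square, not from cancellation against the penalty.
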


In fact, note that such a misalignment corresponds precisely to the existence of
{\em non-robust features}, as it indicates that ``small'' changes in the
adversary's metric along certain directions can cause large changes 
under the data-dependent notion of distance established by the parameters. 
This is illustrated in Figure~\ref{fig:robust_effect}, where misalignment 
in the feature-induced metric is responsible for the presence of a non-robust 
feature in the corresponding classification problem.

\paragraph{(2) Robust Learning.}  
The optimal (non-robust) maximum likelihood estimate is $\Theta =
\Theta^*$, and thus the vulnerability for the standard MLE estimate is governed
entirely by the true data distribution.
The following theorem characterizes the behaviour of the
learned parameters in the robust problem. \footnote{Note: as discussed in
Appendix~\ref{sec:real_objective}, we study a slight relaxation
of~\eqref{eq:robust_mlc} that approaches exactness exponentially fast as
$d\rightarrow \infty$}.
In fact, we can prove (Section~\ref{sec:danskin_valid}) that performing
(sub)gradient descent on the inner maximization (also known as {\em adversarial
training}~\citep{goodfellow2015explaining,madry2018towards})
yields exactly $\Theta_r$.  We find that as the perturbation
budget $\eps$ is increased, the metric induced by the learned features {\em
mixes} $\ell_2$ and the metric induced by the features.
\begin{restatable}[Robustly Learned Parameters]{theorem}{parameters}
    \label{thm:1}
    Just as in the non-robust case, $\mu_r = \mu^*$, i.e. the true mean is
    learned. For the robust covariance $\Sigma_r$, 
    there exists an $\eps_0 > 0$, such that for any $\eps \in [0, \eps_0)$,
    $$\Sigma_r = \frac{1}{2}\Sigma_* + \frac{1}{\lambda}\cdot \bm{I} +
    \sqrt{\frac{1}{\lambda}\cdot \Sigma_*
    + \frac{1}{4}\Sigma_*^2},
    \qquad \text{ where } \qquad \Omega\left(\frac{1+\eps^{1/2}}{\eps^{1/2} +
	\eps^{3/2}}\right) \leq \lambda \leq
	O\left(\frac{1+\eps^{1/2}}{\eps^{1/2}}\right).$$
\end{restatable}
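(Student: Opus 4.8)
The first move is to remove the label. Since $y^2=1$, the change of variables $z = y\cdot x$ gives $z\sim\mathcal{N}(\mu_*,\Sigma_*)$ and, for every $\delta$, $\ell(x+\delta;y\cdot\mu,\Sigma)=\ell\!\left(z+y\delta;\mu,\Sigma\right)$; as $\|y\delta\|_2=\|\delta\|_2$, the robust objective \eqref{eq:robust_mlc} becomes the label-free problem $\min_{\mu,\Sigma}\ \mathbb{E}_{z\sim\mathcal{N}(\mu_*,\Sigma_*)}\!\left[\max_{\|\delta\|_2\le\eps}\ell(z+\delta;\mu,\Sigma)\right]$, where $\ell(z;\mu,\Sigma)=\tfrac12\|z-\mu\|_{\Sigma^{-1}}^2+\tfrac12\log\det\Sigma+\text{const}$. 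All of the rest of the argument works with this reduced objective.

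For the mean, fix $\Sigma$ and set $h(v):=\max_{\|\delta\|_2\le\eps}(v+\delta)^\top\Sigma^{-1}(v+\delta)$. Then $h$ is convex (a pointwise maximum of convex quadratics) and even, since replacing $\delta\mapsto-\delta$ shows $h(-v)=h(v)$. Writing $z-\mu=(z-\mu_*)+(\mu_*-\mu)$ with $W:=z-\mu_*$ symmetric about $0$, the identity $\mathbb{E}[h(W+a)]=\mathbb{E}[h(W-a)]$ (apply $W\mapsto-W$, then use that $h$ is even) together with Jensen's inequality gives $\mathbb{E}[h(W)]\le\mathbb{E}[h(W+a)]$ for every shift $a$. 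Hence for each fixed $\Sigma$ the objective is minimized by $\mu=\mu_*$, so $\mu_r=\mu_*$ as claimed.

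It remains to optimize over $\Sigma$ with $\mu=\mu_*$ and $v:=z-\mu_*\sim\mathcal{N}(0,\Sigma_*)$. The obstacle, and the reason for the relaxation flagged in the footnote (carried out in Appendix~\ref{app:proof}), is the inner maximization: maximizing the convex quadratic $\|v+\delta\|_{\Sigma^{-1}}^2$ over an $\ell_2$-ball has only an implicit, fixed-point optimizer $\delta^*(v)$, and $\mathbb{E}_v[\|v+\delta^*(v)\|_{\Sigma^{-1}}^2]$ does not factor through the moments of $v$. In high dimensions, however, the relevant Gaussian quadratic forms concentrate around their means with exponential rate, so the relaxed adversarial loss $\mathcal{L}_{adv}(\Sigma)$ becomes a smooth trace functional of $\Sigma$ and $\Sigma_*$. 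Because the only anisotropy in the problem is $\Sigma_*$ (the perturbation ball being rotationally symmetric), a symmetrization argument shows the minimizing $\Sigma_r$ commutes with $\Sigma_*$; passing to their common eigenbasis decouples the minimization into independent scalar problems, one per eigenvalue pair $(\sigma,\sigma^*)$. Setting the derivative of each scalar objective to zero and clearing denominators yields a quadratic in $\sigma$ whose positive root is $\sigma=\tfrac12\sigma^*+\tfrac1\lambda+\sqrt{\tfrac1\lambda\sigma^*+\tfrac14(\sigma^*)^2}$ with $\lambda>0$ the multiplier attached to the perturbation budget; reassembled over all eigenvalues this is exactly the stated matrix formula. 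Since the relaxed objective is convex in $\Sigma^{-1}$ (the data term $\tr(\Sigma^{-1}\Sigma_*)-\log\det\Sigma^{-1}$ is convex and the adversarial corrections are a linear term plus a Frobenius-type norm), this critical point is the global minimizer; as a sanity check, $\sigma>0$ and $\sigma\to\sigma^*$ as the budget vanishes.

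Finally one translates the multiplier $\lambda$ into the budget $\eps$. Imposing that the budget is tight, $\|\delta^*\|_2=\eps$, gives a monotone relation $\lambda=\lambda(\eps)$ whose asymptotic analysis as $\eps\to0$ (where $1/\lambda\sim\sqrt\eps$, so $\Sigma_r\to\Sigma_*$) yields the two-sided bounds $\Omega\!\left(\tfrac{1+\eps^{1/2}}{\eps^{1/2}+\eps^{3/2}}\right)\le\lambda\le O\!\left(\tfrac{1+\eps^{1/2}}{\eps^{1/2}}\right)$; this same analysis forces the restriction $\eps\in[0,\eps_0)$, since beyond $\eps_0$ the inner problem is no longer concave and the fixed point ceases to be the global maximizer, exactly the analogue of the condition $C\ge 1/\sigma_{\min}(\Sigma_*)$ in Theorem~\ref{thm:0}. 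The assertion that (sub)gradient descent on the inner maximization recovers $\Theta_r$ then follows from Danskin's theorem: on $[0,\eps_0)$ the inner maximum is over a compact set with a unique maximizer, so the composite loss is differentiable with the expected gradient and descent converges to the same $\Theta_r$. The main obstacle throughout is this inner maximization and the non-factoring of its expectation over $v$ — maximizing a convex quadratic over a ball is the hard direction and has no closed form, so the argument rests entirely on the high-dimensional relaxation and the commuting-matrices reduction; obtaining \emph{tight} (not merely leading-order) two-sided control of $\lambda(\eps)$ is the other delicate step.
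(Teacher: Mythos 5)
Your plan follows the same route as the paper's proof: reduce to a single Gaussian, pass to the relaxed problem in which the adversary applies a (diagonal) linear operator to $v=x-\mu$ with an in-expectation norm budget, invoke Danskin plus convexity of the reparametrized NLL to reduce to a stationarity condition, decouple coordinatewise, solve a quadratic, and then translate the multiplier $\lambda$ into $\eps$. Your argument for $\mu_r=\mu_*$ is actually a nice improvement over the paper's: the paper only recovers the mean from the first-order conditions of the relaxed problem, whereas your symmetrization/Jensen argument ($h$ convex and even, $W\overset{d}{=}-W$, hence $\mathbb{E}[h(W)]\le\mathbb{E}[h(W+a)]$) establishes it for the original objective and for every fixed $\Sigma$.

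However, the two load-bearing computations are asserted rather than carried out, and without them the specific formulas in the statement cannot be reached. First, the covariance formula hinges on the closed form of the inner maximizer: by Lagrange multipliers the optimal response is $M^*=(\lambda\Sigma-\bm{I})^{-1}$, and the Danskin stationarity condition then reads $\Sigma_*^{-1}=\Sigma^{-1}(\bm{I}+M^*)^2$; only after rewriting $\bm{I}+M^*=-\left(\tfrac{1}{\lambda}\Sigma^{-1}-\bm{I}\right)^{-1}$ does one obtain the quadratic whose positive root is $\tfrac12\Sigma_*+\tfrac1\lambda\bm{I}+\sqrt{\tfrac1\lambda\Sigma_*+\tfrac14\Sigma_*^2}$. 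Saying "setting the derivative of each scalar objective to zero yields a quadratic whose positive root is the stated formula" presupposes exactly the computation that constitutes the proof. Second, the two-sided bounds on $\lambda$ do not follow from the leading-order observation $1/\lambda\sim\sqrt{\eps}$: in the paper they come from the budget identity $\tr(\Sigma_*M^2)=\eps^2$ combined with Cauchy--Schwarz and AM--HM trace inequalities (lower bound) and Frobenius/singular-value bounds (upper bound), after a rescaling of $\eps$ by $d\,\sigma_{min}(\Sigma_*)$. Relatedly, your stated reason for the restriction to $[0,\eps_0)$ is off: the inner maximization of a convex quadratic over a ball is never concave, so nothing "ceases to be concave" past $\eps_0$; rather, $\eps_0$ is chosen so that the only admissible multiplier satisfies $\lambda\ge 1/\sigma_{min}(\Sigma)$, which forces $M^*$ to be positive semi-definite (i.e., the adversary's linear response actually hurts the likelihood rather than helping it). These are concrete missing steps, not stylistic omissions.
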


The effect of robust optimization under an $\ell_2$-constrained adversary is
visualized in Figure~\ref{fig:robust_effect}. As $\epsilon$
grows, the learned covariance becomes more aligned with identity. 
For instance, we can see that the classifier learns to be less sensitive
in certain directions, despite their usefulness for natural classification. 

\begin{figure}[b]
    \begin{center}
	\includegraphics[width=0.24\textwidth]{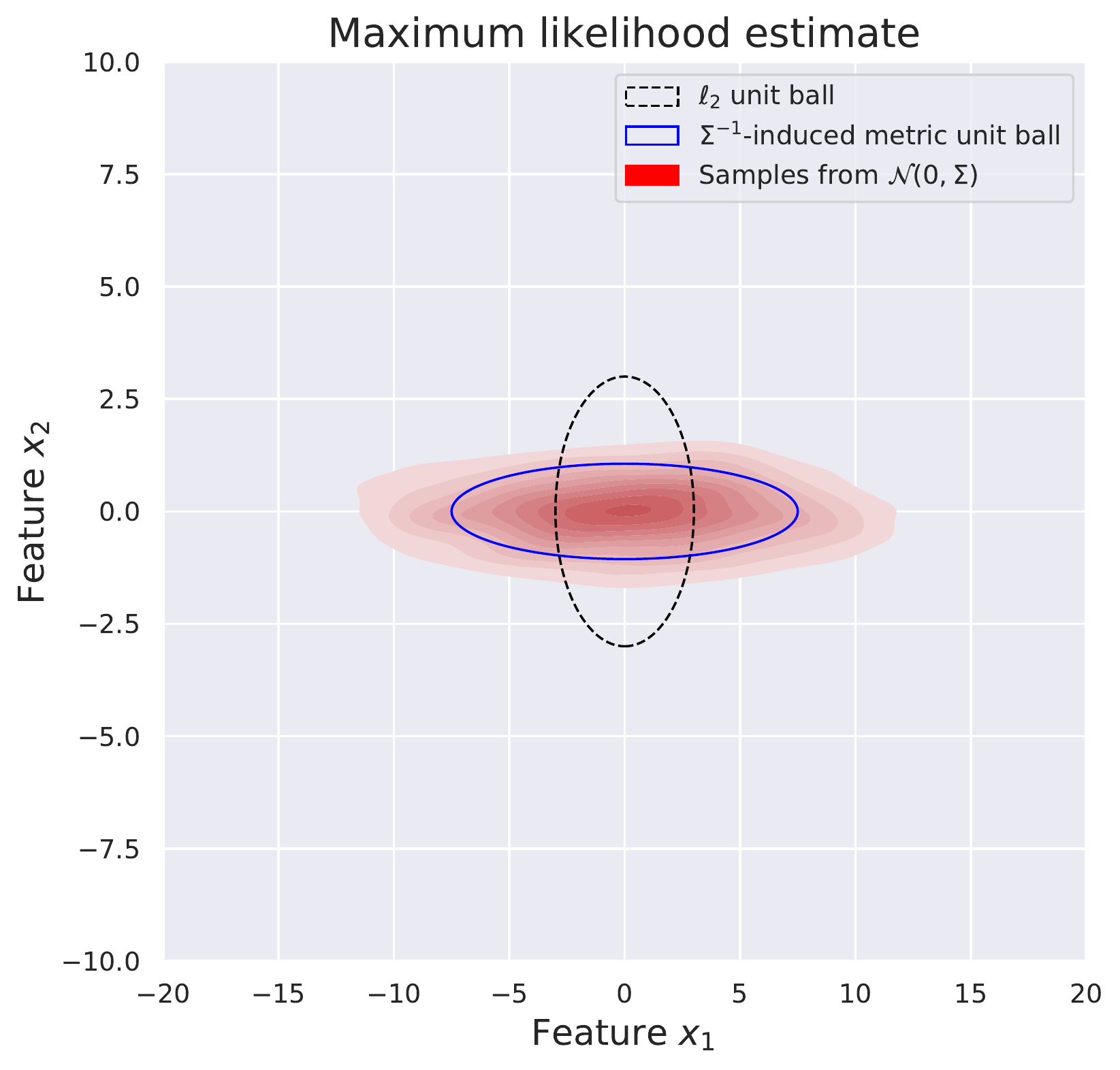}
	\includegraphics[width=0.24\textwidth]{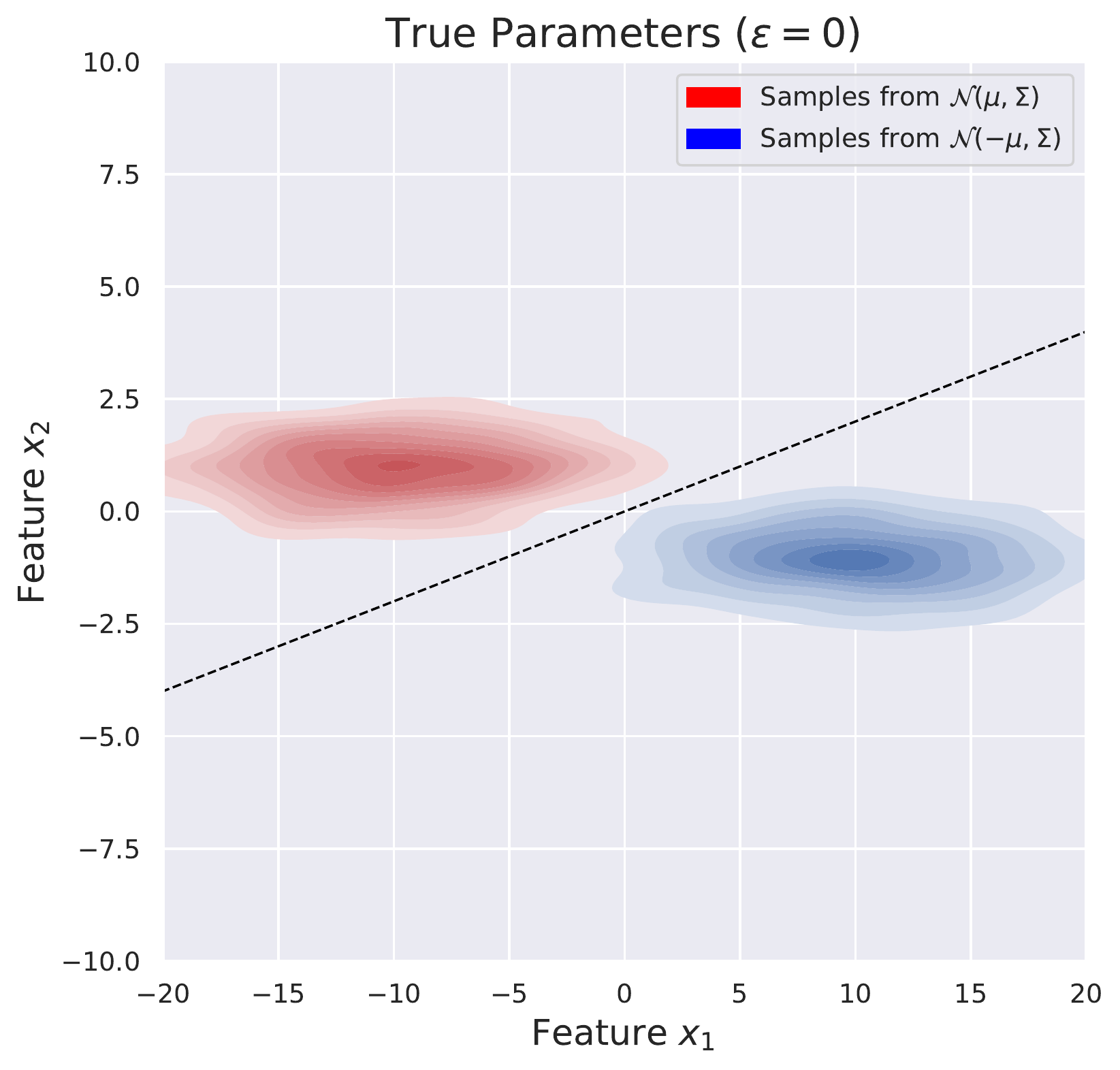}
	\includegraphics[width=0.24\textwidth]{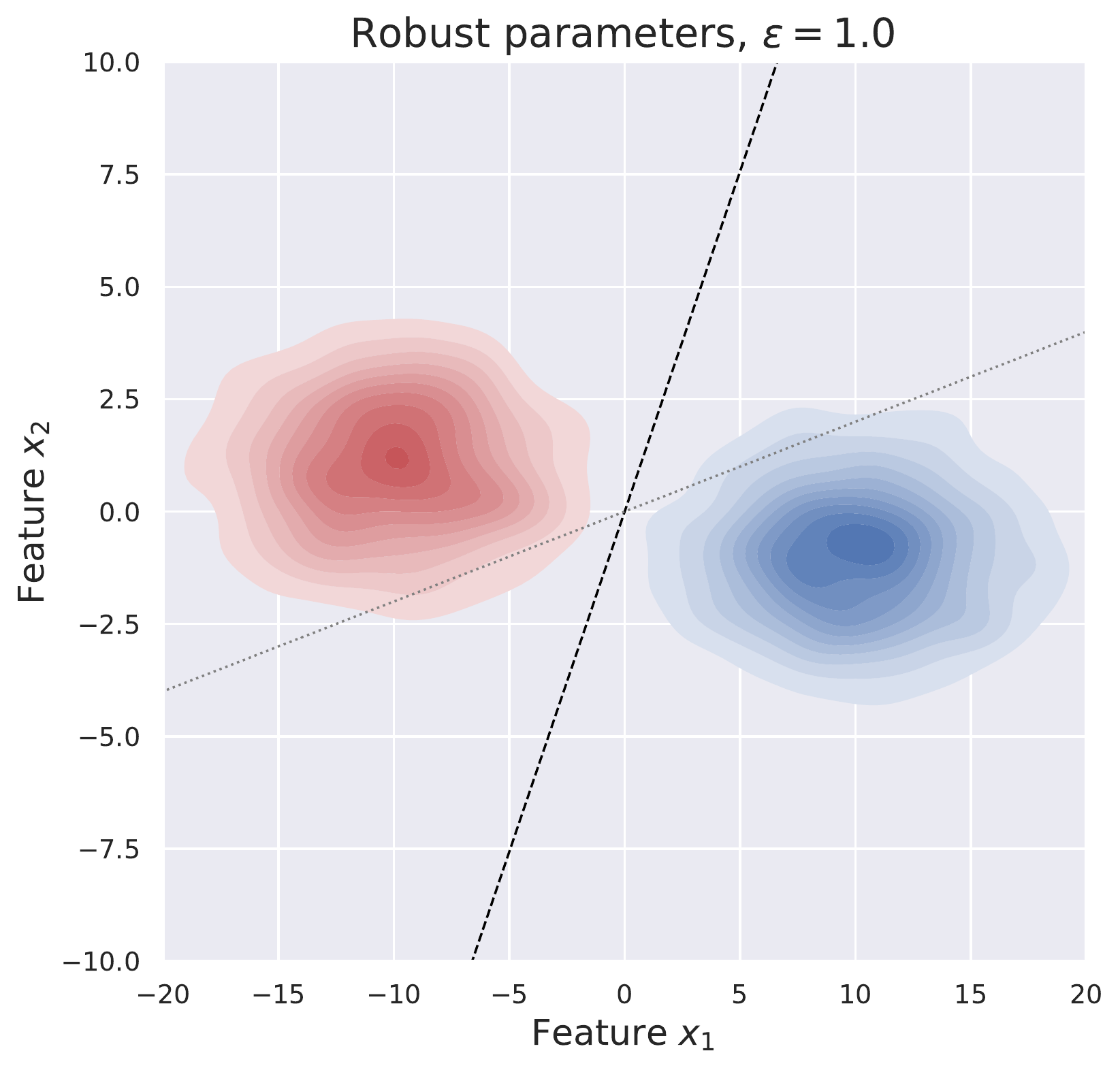}
	\includegraphics[width=0.24\textwidth]{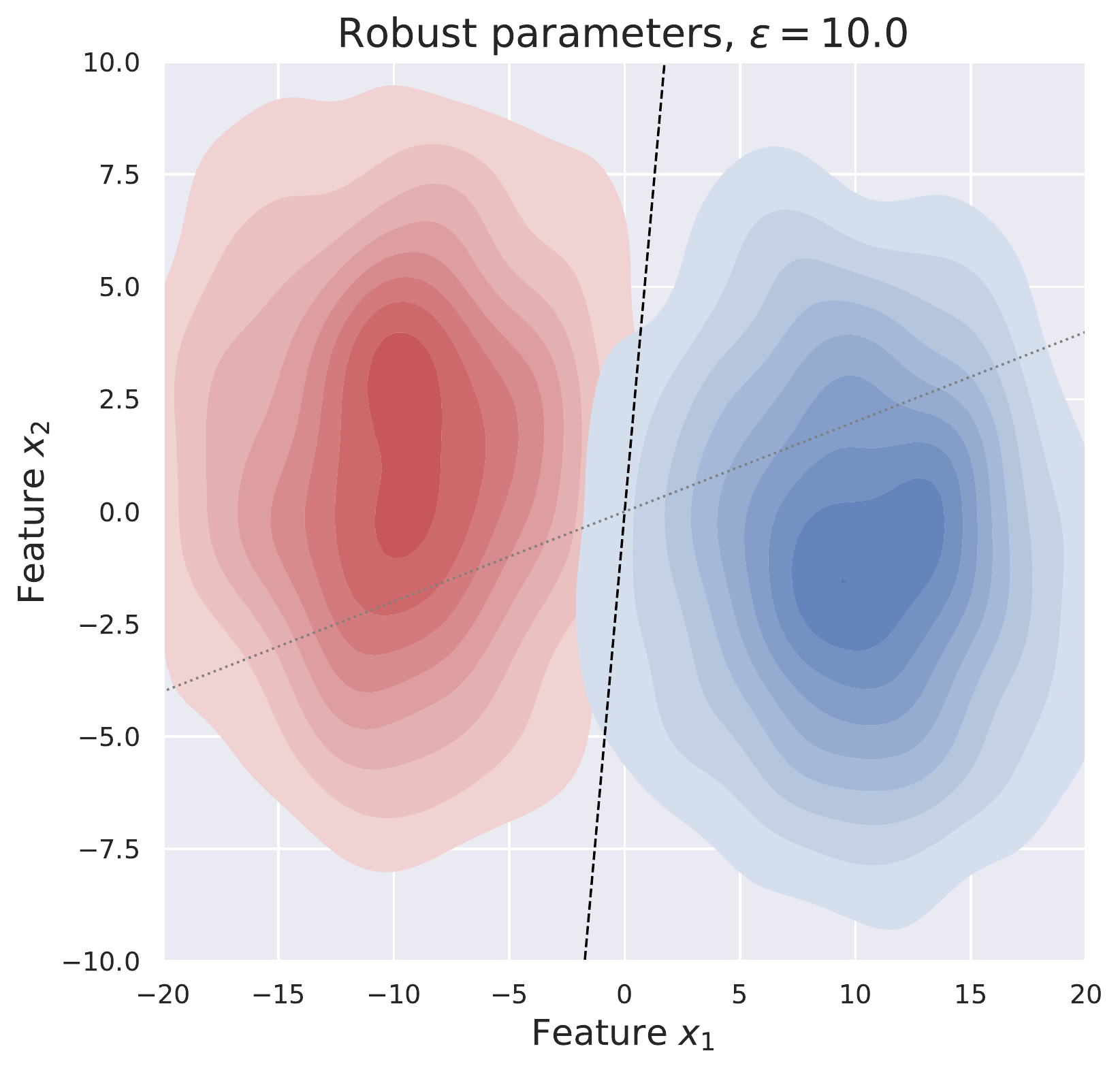}
	\caption{An empirical demonstration of the effect illustrated by
	    Theorem~\ref{thm:1}---as the adversarial perturbation budget
	    $\eps$ is increased, the learned mean $\mu$ remains constant, but the
	    learned covariance ``blends'' with the identity matrix,
	    effectively adding more and more uncertainty onto the
	    non-robust feature.}
	\label{fig:robust_effect}
    \end{center}
\end{figure}

\paragraph{(3) Gradient Interpretability.}~\citet{tsipras2019robustness} observe
that gradients of robust models tend to look more semantically meaningful.
It turns out that under our
model, this behaviour arises as a natural consequence of
Theorem~\ref{thm:1}. In particular, we show that the
resulting robustly learned parameters cause
the gradient of the linear classifier and the vector
connecting the means of the two distributions to better align (in a
worst-case sense) under the $\ell_2$ inner product. 
\begin{restatable}[Gradient alignment]{theorem}{gradients}
    \label{thm:2}
    Let $f(x)$ and $f_r(x)$ be monotonic classifiers based
    on the linear separator induced by standard and $\ell_2$-robust
    maximum likelihood classification, respectively. 
    The maximum angle formed between the gradient of the classifier (wrt
    input) and the vector connecting the classes can be smaller for the robust
    model:
    $$\min_{\mu} \frac{\inner{\bm{\mu}}{\nabla_x f_r(x)}}{
    \|\mu\|\cdot \|\nabla_x f_r(x)\|} > \min_{\mu}
\frac{\inner{\mu}{\nabla_x f(x)}}{\|\mu\|\cdot \|\nabla_x f(x)\|}.$$ 
\end{restatable}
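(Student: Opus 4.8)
The plan is to reduce Theorem~\ref{thm:2} to a comparison of the condition numbers of $\Sigma^*$ and the robust covariance $\Sigma_r$ from Theorem~\ref{thm:1}. First I would identify the gradient directions. Since the standard likelihood-test classifier is a monotone (increasing) function of the linear form $x^\top (\Sigma^*)^{-1}\mu$ with $\mu = \mu^*$, its input-gradient is a positive multiple of $(\Sigma^*)^{-1}\mu$; likewise, using $\mu_r = \mu^*$ from Theorem~\ref{thm:1}, $\nabla_x f_r(x)$ is a positive multiple of $\Sigma_r^{-1}\mu$. In particular the direction — and hence the cosine in the statement — does not depend on $x$. So the theorem asks exactly whether
$$\min_{\mu\neq 0}\ \frac{\mu^\top \Sigma_r^{-1}\mu}{\|\mu\|\,\|\Sigma_r^{-1}\mu\|}\ >\ \min_{\mu\neq 0}\ \frac{\mu^\top (\Sigma^*)^{-1}\mu}{\|\mu\|\,\|(\Sigma^*)^{-1}\mu\|}.$$

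Second, I would evaluate these minima in closed form. For a positive definite $A$, diagonalizing and setting $p_i = \mu_i^2/\|\mu\|^2$, the squared cosine is $(\sum_i p_i\lambda_i)^2/\sum_i p_i\lambda_i^2 = \mathbb{E}[\Lambda]^2/\mathbb{E}[\Lambda^2]$, where $\Lambda$ ranges over the eigenvalues of $A$ with weights $p_i$; bounding $\mathbb{E}[\Lambda^2]\le (\lambda_{\min}+\lambda_{\max})\mathbb{E}[\Lambda]-\lambda_{\min}\lambda_{\max}$ via $(\Lambda-\lambda_{\min})(\lambda_{\max}-\Lambda)\ge 0$ and then optimizing the resulting one-dimensional bound over $\mathbb{E}[\Lambda]$ (the optimum is the harmonic mean of $\lambda_{\min},\lambda_{\max}$) yields the classical antieigenvalue (Kantorovich) identity
$$\min_{\mu\neq 0}\frac{\mu^\top A\mu}{\|\mu\|\,\|A\mu\|}=\frac{2\sqrt{\lambda_{\min}(A)\lambda_{\max}(A)}}{\lambda_{\min}(A)+\lambda_{\max}(A)}=h\big(\kappa(A)\big),\qquad h(t):=\frac{2\sqrt t}{1+t},$$
where $\kappa(A)=\lambda_{\max}(A)/\lambda_{\min}(A)$; the function $h$ is strictly increasing on $(0,1]$ and strictly decreasing on $[1,\infty)$, so this value is larger the closer $\kappa(A)$ is to $1$. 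Since $\kappa((\Sigma^*)^{-1})=\kappa(\Sigma^*)=\sigma_1/\sigma_d$ (writing $\sigma_1\ge\cdots\ge\sigma_d$ for the eigenvalues of $\Sigma^*$), and since $\Sigma^*$, $\bm{I}$, and $(\Sigma^*)^{1/2}$ commute so that by Theorem~\ref{thm:1} $\Sigma_r$ has eigenvalues $g(\sigma_i)$ with $g(\sigma)=\tfrac12\sigma+\tfrac1\lambda+\sqrt{\tfrac\sigma\lambda+\tfrac14\sigma^2}$ — and $g$ is strictly increasing — we get $\kappa(\Sigma_r^{-1})=\kappa(\Sigma_r)=g(\sigma_1)/g(\sigma_d)$.

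Third, I would show the robust condition number is strictly smaller, i.e. $g(\sigma_1)/g(\sigma_d)<\sigma_1/\sigma_d$, which is equivalent to $\sigma\mapsto g(\sigma)/\sigma$ being strictly decreasing; and indeed $g(\sigma)/\sigma=\tfrac12+u+\sqrt{u+\tfrac14}$ with $u=1/(\lambda\sigma)$, manifestly strictly increasing in $u$ and hence strictly decreasing in $\sigma$ (here $\lambda<\infty$ since $\eps>0$). By the monotonicity of $h$ this gives $h(g(\sigma_1)/g(\sigma_d))>h(\sigma_1/\sigma_d)$, which is precisely the claimed inequality; strictness requires $\sigma_1>\sigma_d$, i.e. $\Sigma^*$ not a multiple of $\bm{I}$ — exactly the regime in which a genuine non-robust feature is present (and if $\Sigma^*=c\bm{I}$ both sides equal $1$, as expected). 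The main obstacle, and the only nontrivial step, is establishing the antieigenvalue identity in the form above: once $\min_\mu$ is pinned to a function of the condition number alone, the comparison between $\Sigma^*$ and $\Sigma_r$ collapses to the one-line monotonicity of $g(\sigma)/\sigma$. The remaining pieces — the gradient directions, the spectral description of $\Sigma_r$, and the blow-up of $\lambda$ as $\eps\to 0$ — are routine.
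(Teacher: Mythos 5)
Your proposal is correct, and its skeleton coincides with the paper's: both reduce the statement to comparing condition numbers via the Kantorovich/antieigenvalue identity $\min_{\mu}\ \mu^\top A\mu/(\|\mu\|\|A\mu\|) = 2\sqrt{\kappa}/(1+\kappa)$ (which the paper simply cites as folklore, whereas you rederive it via the $(\Lambda-\lambda_{\min})(\lambda_{\max}-\Lambda)\ge 0$ trick), and both then use that this quantity is decreasing in $\kappa\ge 1$ and that $\kappa(A^{-1})=\kappa(A)$. Where you genuinely diverge is the step showing $\kappa(\Sigma_r)\le\kappa(\Sigma_*)$. The paper argues via two general lemmas on condition numbers of sums --- $\kappa(A+B)\le\max\{\kappa(A),\kappa(B)\}$ and $\kappa(A+kI)<\kappa(A)$, $\kappa(A+k\sqrt{A})\le\kappa(A)$ --- applied to the three-term decomposition of $\Sigma_r$ from Theorem~\ref{thm:1}; note its proof of the first lemma uses $\lambda_{\max}(A+B)=\lambda_{\max}(A)+\lambda_{\max}(B)$, which is only valid because everything here is simultaneously diagonalizable. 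You instead exploit that diagonalizability head-on: $\Sigma_r$ has eigenvalues $g(\sigma_i)$ for an explicit increasing $g$, and $g(\sigma)/\sigma=\tfrac12+u+\sqrt{u+\tfrac14}$ with $u=1/(\lambda\sigma)$ is strictly decreasing in $\sigma$, giving $g(\sigma_1)/g(\sigma_d)<\sigma_1/\sigma_d$ in one line. Your route is shorter, avoids the commutation caveat entirely, and yields a sharper conclusion: you identify exactly when the inequality is strict (precisely when $\Sigma_*$ is not a multiple of $I$, i.e.\ when a genuinely non-robust direction exists), a point the paper leaves implicit behind the hedge ``can be smaller.'' The paper's lemmas, on the other hand, would generalize more readily to settings where the robust covariance is not given in closed spectral form.
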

Figure~\ref{fig:robust_effect} illustrates this phenomenon in the two-dimensional
case. With $\ell_2$-bounded adversarial training the gradient direction
(perpendicular to the decision boundary) becomes increasingly aligned under
the $\ell_2$ inner product with the vector between the means ($\mu$).

\paragraph{Discussion.} Our theoretical analysis suggests that rather than
offering any quantitative classification benefits, a
natural way to view the role of robust optimization is as enforcing a {\em
prior} over the features learned by the classifier. In particular, training with
an $\ell_2$-bounded adversary prevents the classifier from relying heavily on
features which induce a metric dissimilar to the $\ell_2$ metric. The
strength of the adversary then allows for a trade-off between the 
enforced prior, and the data-dependent features.

\paragraph{Robustness and accuracy.} Note that in the setting described so far,
robustness {\em can} be at odds with
accuracy since robust training prevents us from learning the most accurate
classifier (a similar
conclusion is drawn in~\citep{tsipras2019robustness}).
However, we note that there are very similar
settings where non-robust features manifest themselves in the same way, yet
a classifier with perfect robustness and accuracy is still attainable. 
Concretely, consider the distributions
pictured in Figure~\ref{fig:robustness_v_acc} in Appendix
~\ref{subsec:robustnessacc}. It is straightforward to show that
while there are many perfectly accurate classifiers, any standard loss
function will learn an accurate yet non-robust classifier. Only when robust
training is employed does the classifier learn a perfectly accurate and
perfectly robust decision boundary.

\section{Related Work}
\label{sec:related}
Several models for explaining adversarial examples have been proposed in
prior work, utilizing ideas ranging from finite-sample overfitting to
high-dimensional statistical
phenomena
~\citep{gilmer2018adversarial,fawzi2018adversarial,ford2019adversarial,tanay2016boundary,
shafahi2019are,mahloujifar2018curse,
shamir2019simple,goodfellow2015explaining,bubeck2018adversarial}.
The key differentiating aspect of our model is that adversarial
perturbations
arise as {\em well-generalizing, yet brittle, features}, rather than
statistical anomalies or effects of poor statistical concentration.
In particular, adversarial vulnerability does not stem from using a specific
model class or a specific training method, since standard training on the
``robustified'' data distribution of Section~\ref{sec:distill} leads to robust
models.
At the same time, as shown in Section~\ref{sec:distill_nrf}, these non-robust
features are sufficient to learn a good standard classifier.
We discuss the
connection between our model and others in detail in
Appendix~\ref{app:connections}.
We discuss additional related work in Appendix~\ref{sec:related_app}.

\section{Conclusion}
In this work, we cast the phenomenon of adversarial
examples as a natural consequence of the presence of {\em highly predictive
but non-robust features} in standard ML datasets. We
provide support for this hypothesis by explicitly disentangling robust and 
non-robust features in standard datasets, as well as showing
that non-robust features alone are sufficient for good generalization.
Finally, we study these phenomena in more detail in a theoretical
setting where we can rigorously study adversarial vulnerability, robust
training, and gradient alignment.

Our findings prompt us to view adversarial examples as a fundamentally {\em
human} phenomenon. In particular, we should not be
surprised that classifiers exploit highly predictive
features that happen to be non-robust under a human-selected notion of
similarity, given such features exist in real-world datasets. In the same manner, from
the perspective of interpretability,
as long as models rely on these non-robust features, we cannot expect to have model
explanations that are both human-meaningful and faithful to the models themselves.
Overall, attaining models that are robust and interpretable
will require explicitly encoding {\em human priors} into the training process.

\section{Acknowledgements}
We thank Preetum Nakkiran for suggesting the experiment of
Appendix~\ref{app:targeted_transfer} (i.e. replicating
Figure~\ref{fig:transfer} but with targeted attacks). We also are grateful to the authors of \citet{engstrom2019discussion} (Chris Olah, Dan
Hendrycks, Justin Gilmer, Reiichiro Nakano, Preetum Nakkiran, Gabriel Goh, Eric
Wallace)---for their insights and efforts replicating, extending, and discussing
our experimental results. 

Work supported in part by the NSF grants CCF-1553428, CCF-1563880,
CNS-1413920, CNS-1815221, IIS-1447786, IIS-1607189, the Microsoft Corporation, the Intel Corporation,
the MIT-IBM Watson AI Lab research grant, and an Analog Devices Fellowship.

\printbibliography

\clearpage

\appendix
\section{Connections to and Disambiguation from Other Models}
\label{app:connections}
Here, we describe other models for adversarial examples and how they relate
to the model presented in this paper.

\paragraph{Concentration of measure in high-dimensions.}
An orthogonal line of work~\citep{gilmer2018adversarial,fawzi2018adversarial,
mahloujifar2018curse,shafahi2019are}, argues that the high dimensionality
of the input space can present fundamental barriers on classifier robustness.
At a high level, one can show that, for certain data distributions, any decision
boundary will be close to a large fraction of inputs and hence no classifier can
be robust against small perturbations.
While there might exist such fundamental barriers to robustly classifying
standard datasets, this model cannot fully explain the situation observed in
practice, where one can train (reasonably) robust classifiers on
standard datasets~\citep{madry2018towards,raghunathan2018certified,
wong2018provable,xiao2019training,cohen2019certified}.

\paragraph{Insufficient data.}
\citet{schmidt2018adversarially} propose a theoretical model under which
a single sample is sufficient to learn a good, yet non-robust classifier,
whereas learning a good robust classifier requires $O(\sqrt{d})$ samples. 
Under this model, adversarial examples arise due to insufficient information
about the true data distribution.
However, unless the adversary is strong enough (in which case no robust
classifier exists), adversarial inputs cannot be utilized as inputs of the
opposite class (as done in our experiments in Section~\ref{sec:distill_nrf}).
We note that our model does not explicitly contradict the main thesis
of~\citet{schmidt2018adversarially}.
In fact, this thesis can be viewed as a natural
consequence of our conceptual framework.
In particular, since training models robustly reduces
the effective amount of information in the training data (as non-robust
features are discarded), more samples should be required to generalize
robustly.

\paragraph{Boundary Tilting.}
\citet{tanay2016boundary} introduce the
``boundary tilting'' model for adversarial examples, and suggest that
adversarial examples are a product of over-fitting. In particular, the model
conjectures that ``adversarial examples are possible because the class boundary
extends beyond the submanifold of sample data and can be---under certain
circumstances---lying close to it.'' Consequently, the authors suggest that
mitigating adversarial examples may be a matter of regularization and preventing
finite-sample overfitting.
In contrast, our empirical results in Section~\ref{sec:distill_nrf} suggest that
adversarial inputs consist of features inherent to the data distribution, since
they can encode generalizing information about the target class.

Inspired by this hypothesis and concurrently to our work,
\citet{kim2019bridging} present a simple classification task comprised of two
Gaussian distributions in two dimensions.
They experimentally show that the decision boundary tends to better align with
the vector between the two means for robust models.
This is a special case of our theoretical results in Section~\ref{sec:theory}.
(Note that this exact statement is not true beyond two dimensions, as discussed
in Section~\ref{sec:theory}.)

\paragraph{Test Error in Noise.}
\citet{fawzi2016robustness} and \citet{ford2019adversarial} argue that the
adversarial robustness of a classifier can be directly connected to its
robustness under (appropriately scaled) random noise.
While this constitutes a natural explanation
of adversarial vulnerability given the classifier robustness to noise, these
works do not attempt to justify the source of the latter. 

At the same time, recent
work~\cite{lecuyer2018certified,cohen2019certified,ford2019adversarial} utilizes
random noise during training or testing to construct adversarially robust
classifiers.
In the context of our framework, we can expect the added noise to
disproportionately affect non-robust features and thus hinder the model's
reliance on them.

\paragraph{Local Linearity.} \citet{goodfellow2015explaining} suggest that the
local linearity of DNNs is largely responsible for the existence of small
adversarial perturbations.
While this conjecture is supported by the effectiveness of adversarial attacks
exploiting local linearity (e.g., FGSM~\citep{goodfellow2015explaining}), it is
not sufficient to fully characterize the phenomena observed in practice.
In particular, there exist adversarial examples that violate the local linearity
of the classifier~\citep{madry2018towards}, while classifiers that are less
linear do not exhibit greater robustness~\citep{athalye2018obfuscated}.

\paragraph{Piecewise-linear decision boundaries.} \citet{shamir2019simple}
prove that the geometric structure of the classifier's decision boundaries can
lead to sparse adversarial perturbations.
However, this result does not take into account the distance to the decision
boundary along these direction or feasibility constraints on the input domain.
As a result, it cannot meaningfully distinguish between classifiers that are
brittle to small adversarial perturbations and classifiers that are moderately
robust.

\paragraph{Theoretical constructions which incidentally exploit non-robust features.}
\citet{bubeck2018adversarial} and \citet{nakkiran2019adversarial} propose
theoretical models where the barrier to learning robust classifiers is,
respectively, due to computational constraints or model complexity.
In order to construct distributions that admit accurate yet non-robust
classifiers they (implicitly) utilize the concept of non-robust features.
Namely, they add a low-magnitude signal to each input that encodes the true
label.
This allows a classifier to achieve perfect standard accuracy, but cannot be
utilized in an adversarial setting as this signal is susceptible to
small adversarial perturbations.

\section{Additional Related Work}
\label{sec:related_app}
We describe previously proposed models for the existence of adversarial examples
in the previous section.
Here we discuss other work that is methodologically or conceptually similar to
ours.

\paragraph{Distillation.} The experiments performed in
Section~\ref{sec:distill} can be seen as a form of {\em distillation}.
There is a line of work, known as model
distillation~\citep{hinton2015distilling,furlanello2018born, bucilua2006model},
where the goal is to train a new model to mimic another already trained model.
This is typically achieved by adding some regularization terms to the loss in
order to encourage the two models to be similar, often replacing training labels
with some other target based on the already trained model.
While it might be possible to successfully distill a robust model using these
methods, our goal 
was to achieve it by {\em only} modifying the training set (leaving the
training process unchanged), hence demonstrating that adversarial vulnerability
is mainly a property of the dataset.
Closer to our work is dataset
distillation~\citep{wang2018dataset} which considers the problem of
reconstructing a classifier from an alternate dataset much smaller than the
original training set.
This method aims to produce inputs that directly encode the weights of the
already trained model by ensuring that the classifier's gradient with respect to
these inputs approximates the desired weights. (As a result, the inputs
constructed do not resemble natural inputs.)
This approach is orthogonal to our goal since we are not interested in encoding
the particular weights into the dataset but rather in imposing a structure
to its features.

\paragraph{Adversarial Transferabiliy.} In our work, we posit that a
potentially natural consequence of the existence of non-robust features is
{\em adversarial
transferability}~\citep{papernot2017practical,liu2017delving,papernot2016transferability}.
A recent line of work has considered this phenomenon from a theoretical
perspective, confined to simple models, or unbounded
perturbations~\cite{charles2019geometric,zou2017geometric}.
~\citet{tramer2017space} study transferability empirically, by finding {\em
adversarial subspaces}, (orthogonal vectors whose linear combinations
are adversarial perturbations). The authors find that there
is a significant overlap in the adversarial subspaces between different
models, and identify this as a source of transferability. In our work, we
provide a potential reason for this overlap---these directions correspond to
non-robust features utilized by models in a similar manner.

\paragraph{Universal Adversarial Perturbations} \citet{moosavi2017universal}
construct perturbations that can cause misclassification when applied to
multiple different inputs.
More recently, \citet{jetley2018friends} discover input patterns that are
meaningless to humans and can induce misclassification, while at the same time
being essential for standard classification.
These findings can be naturally cast into our framework by considering these
patterns as non-robust features, providing further evidence about their
pervasiveness.

\paragraph{Manipulating dataset features}
\citet{ding2018on} perform synthetic transformations on the dataset (e.g., image
saturation) and study the performance of models on the transformed dataset under
standard and robust training. While this can be seen as a method of restricting
the features available to the model during training, it is unclear how well
these models would perform on the standard test set.
\citet{geirhos2018imagenettrained} aim to quantify the relative dependence of
standard models on shape and texture information of the input.
They introduce a version of ImageNet where texture information has been removed
and observe an improvement to certain corruptions.

\section{Experimental Setup}
\label{app:experimental_setup}

\subsection{Datasets}
\label{sec:dss}
For our experimental analysis, we use the CIFAR-10~\cite{krizhevsky2009learning}
and (restricted) ImageNet ~\cite{russakovsky2015imagenet} datasets. Attaining
robust models for the complete ImageNet dataset is known to be a challenging
problem, both due to the hardness of the learning problem itself, as well as the
computational complexity.  We thus restrict our focus to a subset of the dataset
which we denote as restricted ImageNet. To this end, we group together
semantically similar classes from ImageNet into 9 super-classes shown in
Table~\ref{tab:classes}.  We train and evaluate only on examples corresponding
to these classes.

\begin{table}[!h]
\caption{Classes used in the Restricted ImageNet model. The class ranges are
inclusive.}
\begin{center}
  \begin{tabular}{ccc}
    \toprule
    \textbf{Class} & \phantom{x} & \textbf{Corresponding ImageNet Classes} \\
    \midrule
    ``Dog'' &&   151  to 268    \\ 
    ``Cat'' &&   281  to 285    \\
    ``Frog'' &&   30  to 32    \\
    ``Turtle'' &&   33  to 37    \\
    ``Bird'' &&   80  to 100    \\
    ``Primate'' &&   365  to 382    \\
    ``Fish'' &&   389  to 397    \\
    ``Crab'' &&   118  to 121    \\
    ``Insect'' &&   300  to 319    \\
    \bottomrule
  \end{tabular}
\end{center}
\label{tab:classes}
\end{table}

\subsection{Models}
\label{sec:models}
We use the ResNet-50 architecture for our baseline standard and adversarially
trained classifiers on CIFAR-10 and restricted ImageNet. For each model, we
grid search over three learning rates ($0.1$, $0.01$, $0.05$), two
batch sizes ($128$, $256$) including/not including a learning rate drop
(a single order of magnitude) and data augmentation. We use the standard
training parameters for the remaining parameters. The hyperparameters used
for each model are given in Table~\ref{tab:all_hyperparams}.

\begin{table}[!h]
\caption{Hyperparameters for the models trained in the main paper. All
hyperparameters were obtained through a grid search.}
\begin{center}
  \begin{tabular}{l|cccccc}
    \toprule
    {\bf Dataset} & LR & Batch Size & LR Drop & Data Aug. & Momentum & Weight Decay \\
    \midrule
    $\drobust$ (CIFAR) & 0.1 & 128 & Yes & Yes & 0.9 & $5\cdot 10^{-4}$ \\
    $\drobust$ (Restricted ImageNet) &  0.01 & 128 & No & Yes & 0.9 & $5\cdot 10^{-4}$ \\
    $\dnonrobust$ (CIFAR) & 0.1 & 128 & Yes & Yes & 0.9 & $5\cdot 10^{-4}$ \\
    $\drand$ (CIFAR) & 0.01 & 128 & Yes & Yes & 0.9 & $5\cdot 10^{-4}$ \\
    $\drand$ (Restricted ImageNet) & 0.01 & 256 & No & No & 0.9 & $5\cdot 10^{-4}$ \\
    $\ddet$ (CIFAR) & 0.1 & 128 & Yes & No & 0.9 & $5\cdot 10^{-4}$ \\
    $\ddet$ (Restricted ImageNet) & 0.05 & 256 & No & No & 0.9 & $5\cdot 10^{-4}$  \\
    \bottomrule
  \end{tabular}
\end{center}
\label{tab:all_hyperparams}
\end{table}

\clearpage
\subsection{Adversarial training}
\label{sec:robo}
To obtain robust classifiers, we employ the adversarial training methodology
proposed in~\cite{madry2018towards}.  Specifically, we train against a projected
gradient descent (PGD) adversary constrained in $\ell_2$-norm starting from the
original image.
Following \citet{madry2018towards} we normalize the gradient at each step of PGD
to ensure that we move a fixed distance in $\ell_2$-norm per step.
Unless otherwise specified, we use the values of $\epsilon$ provided in
Table~\ref{tab:eps} to train/evaluate our models.
We used $7$ steps of PGD with a step size of $\eps/5$.

\begin{table}[!h]
\caption{Value of $\epsilon$ used for  $\ell_2$ adversarial training/evaluation
of each dataset.}
\begin{center}
  \begin{tabular}{cccccc}
    \toprule
    \textbf{Adversary} & \phantom{x} &  \textbf{CIFAR-10} & \textbf{Restricted Imagenet} \\
    \midrule
    $\ell_2$ && 0.5 & 3 \\
    \bottomrule
  \end{tabular}
\end{center}
\label{tab:eps}
\end{table}

\subsection{Constructing a Robust Dataset}

In Section~\ref{sec:distill}, we describe a procedure to construct a dataset that
contains features relevant only to a given (standard/robust) model. To do so, we
optimize the training objective in \eqref{eq:robustify}. Unless otherwise
specified, we initialize $x_r$ as a different randomly chosen sample from the
training set. (For the sake of completeness, we also try initializing with a
Gaussian noise instead as shown in Table~\ref{tab:robustify_cifar_noise}.) We
then perform normalized gradient descent ($\ell_2$-norm of gradient is fixed to
be constant at each step). At each step we clip the input $x_r$ to in the $[0,1]$
range so as to ensure that it is a valid image. Details on the optimization
procedure are shown in Table~\ref{tab:distill_params}. 
We provide the pseudocode for the construction in
Figure~\ref{alg:robustification}.

\begin{figure}[h!]
	\frame{
		\begin{minipage}{0.9\textwidth}
			\vspace{10pt}
			\hspace{5pt}
			$\textsc{GetRobustDataset}(D)$
			
			\begin{enumerate}
				\item
				$C_R \leftarrow $  \textsc{AdversarialTraining}($D$) \\
				$g_{R} \leftarrow$ mapping learned by $C_R$ from the input to the representation layer

				\item $D_R \leftarrow \{\}$ 
				\item For $(x, y) \in D$ 
				\subitem $x' \sim D$
                \subitem$ x_R \leftarrow \arg\min_{z\in[0,1]^d} \|g_R(z) - g_R(x)\|_2$  
				\hfil \eqparbox{COMMENT}{\# Solved using $\ell_2$-PGD starting
                from $x'$}
				\subitem $D_R \leftarrow D_R \; \bigcup \;  \{(x_R, y) \}$
				\item Return $D_R$
			\end{enumerate}
			\vspace{5pt}
		\end{minipage}
	}
	\caption{Algorithm to construct a ``robust'' dataset, by restricting
		to features used by a robust model. }
	\label{alg:robustification}
\end{figure}

\begin{table}[!h]
\caption{Parameters used for optimization procedure to construct dataset in
Section~\ref{sec:distill}. }
\begin{center}
  \begin{tabular}{cccccc}
    \toprule
    \textbf{}  & \textbf{CIFAR-10} & \textbf{Restricted Imagenet} \\
    \midrule
    step size  &  0.1 & 1 \\ 
    iterations  & 1000  & 2000 \\
    \bottomrule
  \end{tabular}
\end{center}
\label{tab:distill_params}
\end{table}

\clearpage
\subsection{Non-robust features suffice for standard classification}
To construct the dataset as described in Section~\ref{sec:distill_nrf}, we use
the standard projected gradient descent (PGD) procedure described
in~\cite{madry2018towards} to construct an adversarial example for a given input
from the dataset~\eqref{eqn:adv}.
Perturbations are constrained in $\ell_2$-norm while each PGD step is normalized
to a fixed step size.
The details for our PGD setup are described in
Table~\ref{tab:distill_params_nrf}.
We provide pseudocode in Figure~\ref{alg:nrf}.

\begin{figure}[h!]
	\frame{
		\begin{minipage}{.9\textwidth}
			\vspace{10pt}
			\hspace{5pt}
			$\textsc{GetNonRobustDataset}(D, \eps)$
			
			\begin{enumerate}
				\item $D_{NR} \leftarrow \{\}$
				\item
				$C \leftarrow $  \textsc{StandardTraining}($D$) 				
				\item For $(x, y) \in D$ 
                    \subitem $t \stackrel{\text{uar}}{\sim} [C]$
                    \hfil \eqparbox{COMMENT}{\# or $t
                                    \leftarrow (y + 1) \mod C $}
                \subitem$ x_{NR} \leftarrow \min_{||x' - x|| \leq \eps}
                        L_C(x', t)$  
				\hfil \eqparbox{COMMENT}{\# Solved using $\ell_2$ PGD}
                \subitem $D_{NR} \leftarrow D_{NR} \; \bigcup \;  \{(x_{NR}, t) \}$
            \item Return $D_{NR}$
			\end{enumerate}
			\vspace{5pt}
		\end{minipage}
	}
	\caption{Algorithm to construct a dataset where input-label
	association is based entirely on non-robust features. }
	\label{alg:nrf}
\end{figure}

\begin{table}[!h]
\caption{Projected gradient descent parameters used to construct constrained
adversarial examples in Section~\ref{sec:distill_nrf}.}
\begin{center}
  \begin{tabular}{cccccc}
    \toprule
    \textbf{Attack Parameters}  & \textbf{CIFAR-10} & \textbf{Restricted Imagenet} \\
    \midrule
    $\eps$ & 0.5 & 3 \\
    step size  &  0.1 & 0.1 \\ 
    iterations  & 100  & 100 \\
    \bottomrule
  \end{tabular}
\end{center}
\label{tab:distill_params_nrf}
\end{table}

\clearpage
\section{Omitted Experiments and Figures}
\label{app:omitted_figures}

\subsection{Detailed evaluation of models trained on ``robust'' dataset}
\label{app:from_noise}

In Section~\ref{sec:distill}, we generate a ``robust'' training set by
restricting the dataset to only contain features relevant to a robust model
(robust dataset) or a standard model (non-robust dataset).
This is performed by choosing either a random input from the training set
or random noise\footnote{We use 10k steps to construct the dataset from noise, 
instead to using 1k steps done when the input is a different training set 
image (cf. Table~\ref{tab:distill_params}).} and then
performing the optimization procedure described in~\eqref{eq:robustify}.
The performance of these classifiers along with various baselines
is shown in Table~\ref{tab:robustify_cifar}. 
We observe that while the robust dataset constructed from noise resembles the
original, the corresponding non-robust does not
(Figure~\ref{fig:from_noise_inputs}).
This also leads to suboptimal performance of classifiers trained on this dataset
(only $46\%$ standard accuracy) potentially due to a distributional shift.

\begin{table}[!htp]
    \caption{Standard and robust classification performance on the CIFAR-10 test
        set of: an (i) ERM classifier; (ii) ERM classifier trained on a dataset
        obtained by distilling features relevant to ERM classifier in (i);
        (iii) adversarially trained classifier ($\eps=0.5$); (iv) ERM classifier
        trained on dataset obtained by distilling features used by robust
        classifier in (iii). Simply restricting the set of available features
        during ERM to features used by a standard model yields non-trivial
    robust accuracy.}
	\label{tab:robustify_cifar}
	\label{tab:robustify_cifar_noise}
    \setlength{\tabcolsep}{.3cm}
	\begin{center}
		\begin{tabular}{lrrr}
			\toprule
            & & \multicolumn{2}{c}{\bf Robust Accuracy}\\
            \textbf{Model} &  \textbf{Accuracy} &
            $\eps=0.25$ & $\eps=0.5$ \\
			\midrule
			Standard Training &  95.25 \% & 4.49\% &  0.0\% \\
			Robust Training &  90.83\% & 82.48\% & 70.90\% \\
			\midrule
            Trained on non-robust dataset (constructed from images)
                   &  \textbf{87.68\%} & 0.82\%  &  0.0\% \\
			Trained on non-robust dataset (constructed from noise)
                   &  \textbf{45.60\%} & 1.50\% &  0.0\% \\ 
			Trained on robust dataset (constructed from images)
                   &  {85.40\%}   & \textbf{48.20 \%} & 21.85\% \\
			Trained on robust dataset (constructed from noise)
                   &  {84.10\%}   & \textbf{48.27 \%} & 29.40\% \\
			\bottomrule
		\end{tabular}
	\end{center}
\end{table}

\begin{figure}[htp]
	\begin{center}
        \includegraphics[width=0.5\textwidth]{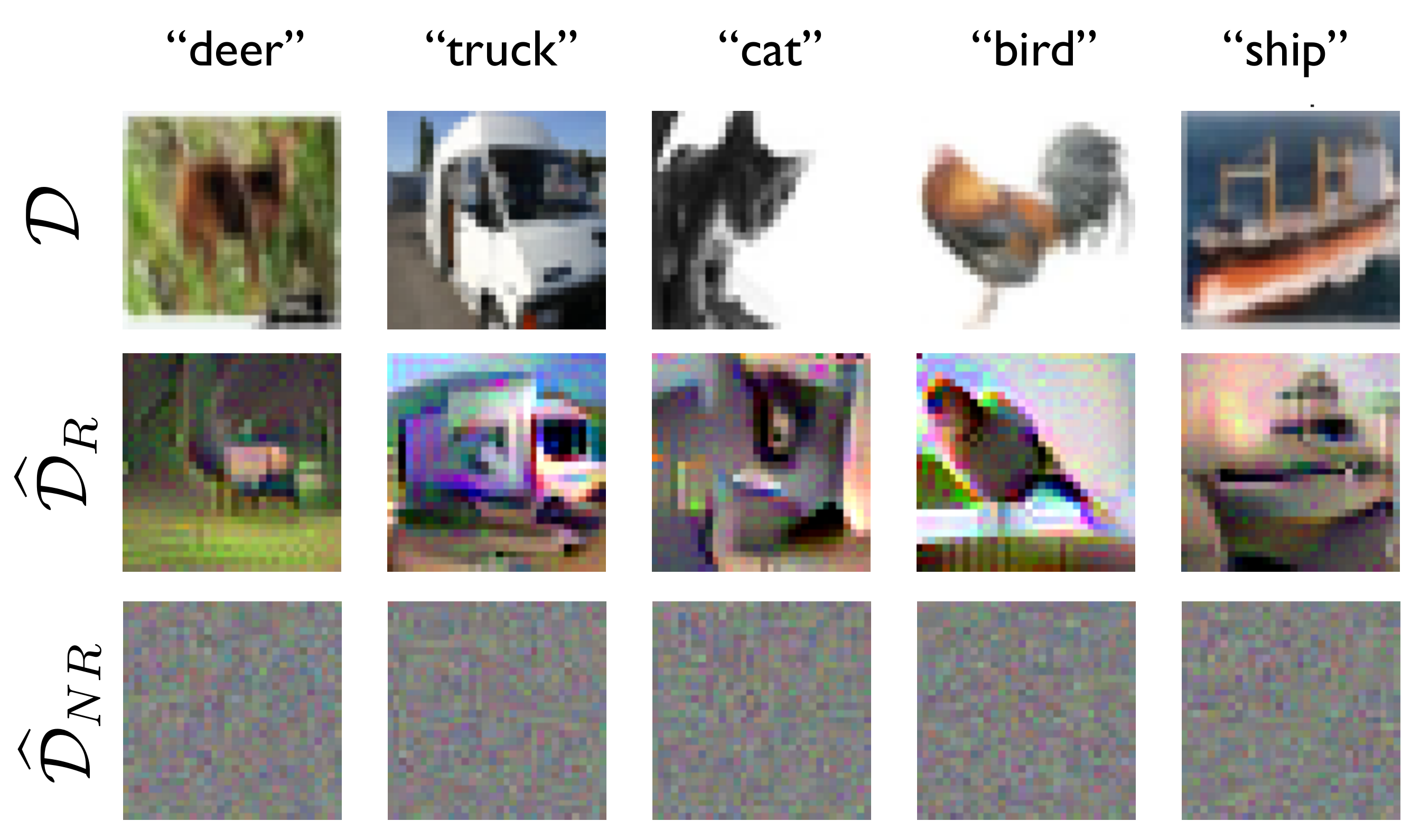}
        \caption{Robust and non-robust datasets for CIFAR-10 when the process
            starts from noise (as opposed to random images as in
            Figure~\ref{fig:robust_inputs}).}
        \label{fig:from_noise_inputs}
	\end{center}
\end{figure}

\clearpage
\subsection{Adversarial evaluation}
\label{app:attacks}
To verify the robustness of our classifiers trained on the `robust''
dataset, we evaluate them with strong attacks~\citep{carlini2019on}. In
particular, we try up to 2500 steps of projected gradient descent (PGD),
increasing steps until the accuracy plateaus, and also try the CW-$\ell_2$
loss function~\citep{carlini2017towards} with 1000 steps. For each attack
we search over step size. We find that over all attacks and step sizes, the
accuracy of the model does not drop by more than 2\%, and plateaus at
$48.27\%$ for both PGD and CW-$\ell_2$ (the value given in
Figure~\ref{fig:distill_results}). We show a plot of accuracy in terms of
the number of PGD steps used in Figure~\ref{fig:accvpgd}.

\begin{figure}[h!]
    \begin{center}
	\includegraphics[width=0.5\textwidth]{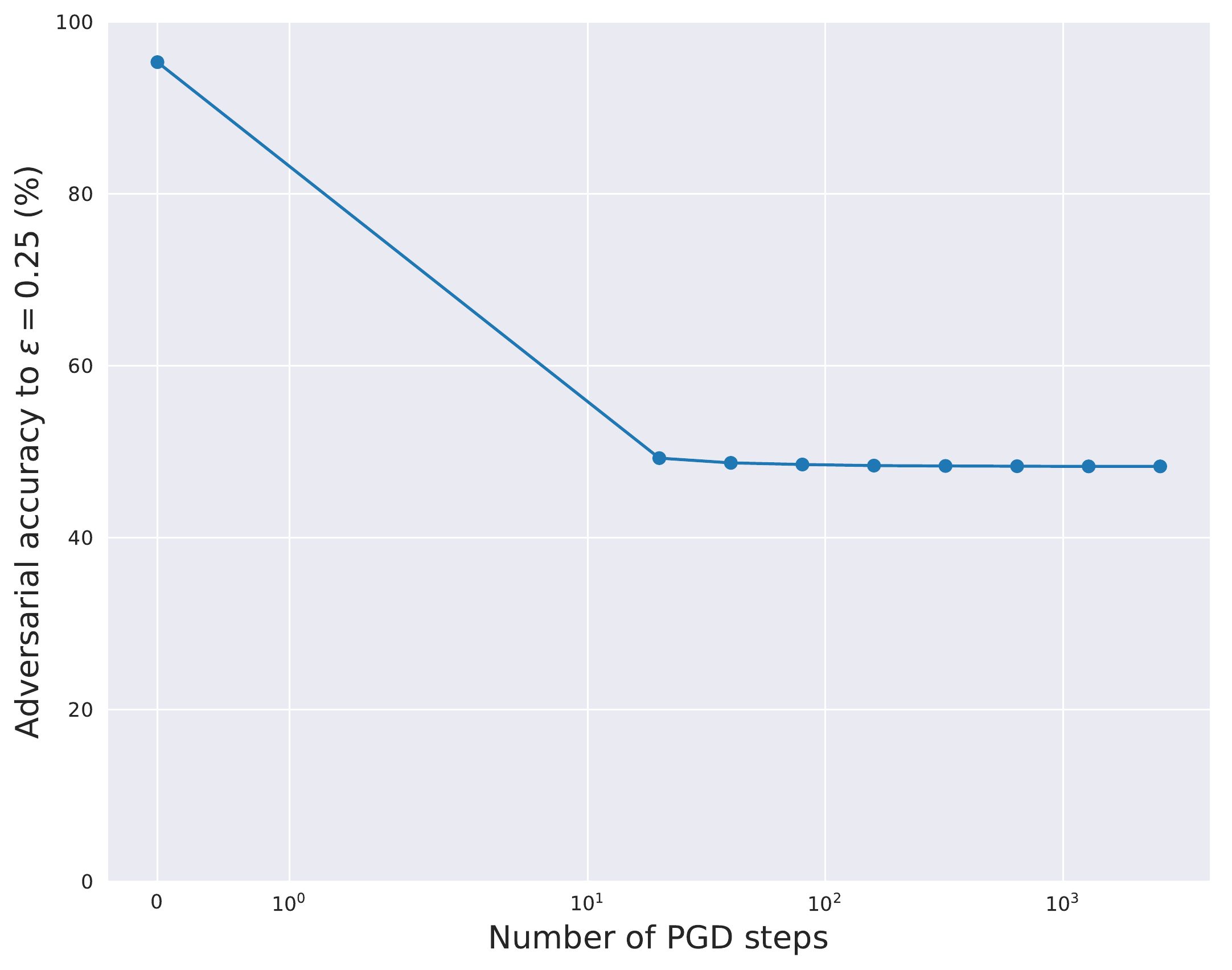}
	\caption{Robust accuracy as a function of the number of PGD steps
	used to generate the attack. The accuracy plateaus at $48.27\%$.}
	\label{fig:accvpgd}
    \end{center}
\end{figure}

\subsection{Performance of ``robust'' training and test set}
\label{app:robust_test}
In Section~\ref{sec:distill}, we observe that an ERM classifier trained on a
``robust'' training dataset $\drobust$ (obtained by restricting features
to those relevant to a robust model) attains non-trivial robustness (cf.
Figure~\ref{fig:distill} and Table~\ref{tab:robustify_cifar}).
In Table~\ref{tab:robust_test_set}, we evaluate the adversarial accuracy of
the model on the corresponding robust training set (the samples which the
classifier was trained on) and test set (unseen samples from $\drobust$,
based on the test set).
We find that the drop in robustness comes from a combination of generalization
gap (the robustness on the $\drobust$ test set is worse than it is on the
robust training set) and distributional shift (the model performs better on
the robust test set consisting of unseen samples from $\drobust$ than on
the standard test set containing unseen samples from $\D$).

\begin{table}[!htp]
    \caption{Performance of model trained on the \emph{robust dataset} on the
        robust training and test sets as well as the standard CIFAR-10 test set.
        We observe that the drop in robust accuracy stems from a combination of
        generalization gap and distributional shift. The adversary is
        constrained to $\eps=0.25$ in $\ell_2$-norm.
    }
    \label{tab:robust_test_set}
\begin{center}
    \setlength{\tabcolsep}{1cm}
    \begin{tabular}{lr}
    \toprule
    Dataset & Robust Accuracy \\
    \midrule
    Robust training set & 77.33\% \\
    Robust test set & 62.49\% \\
    Standard test set & 48.27\% \\
    \bottomrule
\end{tabular}
\end{center}
\end{table}

\subsection{Classification based on non-robust features}
\label{app:nrf}
Figure~\ref{fig:distill_nonrobust} shows sample images from $\mathcal{D}$,
$\drand$ and $\ddet$ constructed using a standard (non-robust) ERM
classifier, and an adversarially trained (robust) classifier.
\begin{figure}[h!]
	
	\begin{subfigure}[b]{0.45\textwidth}
		\begin{center}
			\includegraphics[width=1.0\textwidth]{./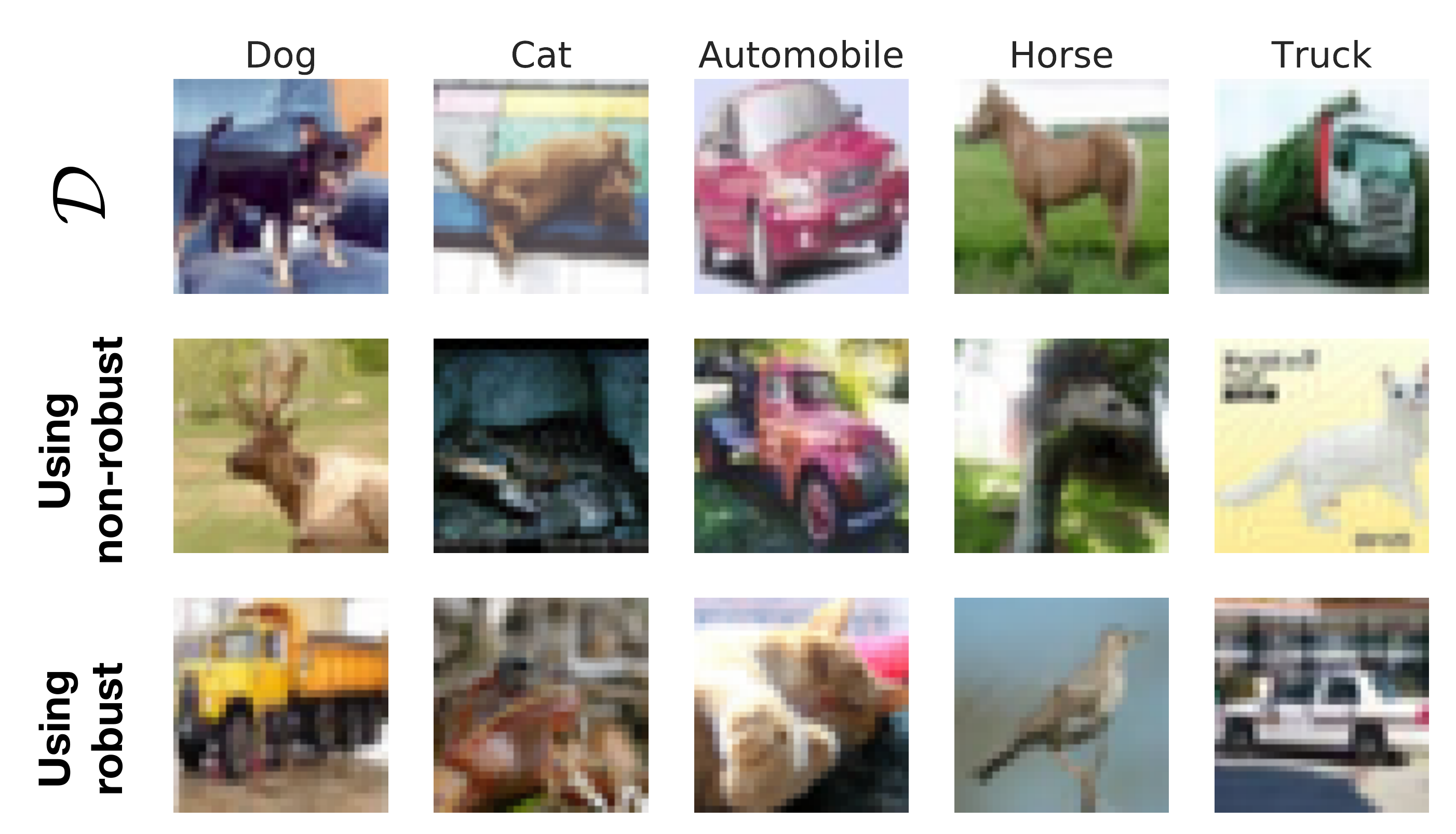}
			\label{fig:rand}
			\caption{$\drand$}
		\end{center}
	\end{subfigure}
	\hfil
	\begin{subfigure}[b]{0.45\textwidth}
		\begin{center}
			\includegraphics[width=1.0\textwidth]{./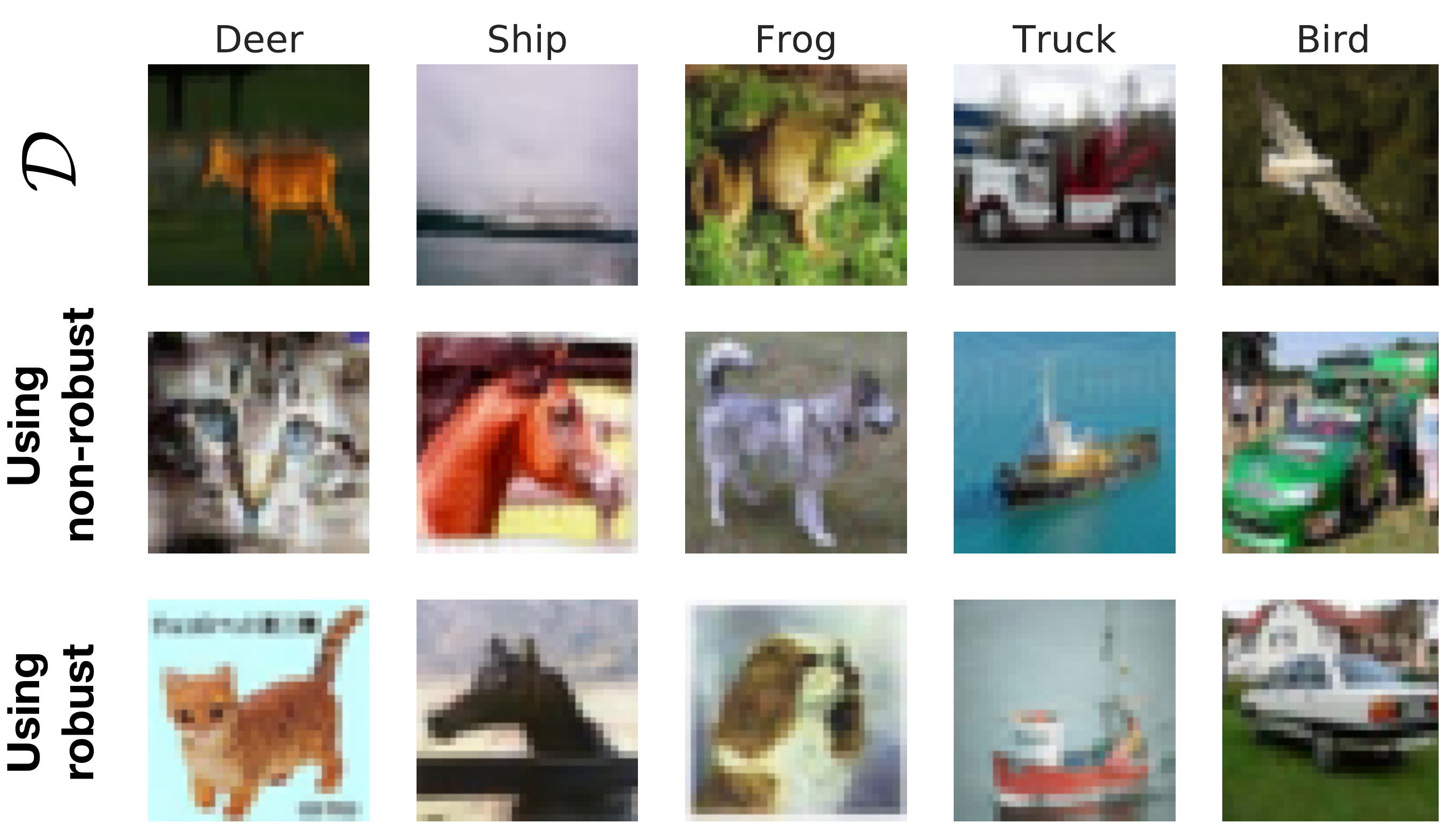}
			\label{fig:determ}
			\caption{$\ddet$}
		\end{center}
	\end{subfigure}
	\caption{Random samples from datasets where the input-label correlation is
		entirely based on non-robust features. Samples are generated by performing
		small adversarial perturbations using either random ($\drand$) or 
		deterministic ($\ddet$) label-target mappings
		for every sample in the training set. Each image shows: {\em top}: original;
		{\em middle}: adversarial perturbations using a standard ERM-trained classifier;
		{\em bottom}: adversarial perturbations using a robust classifier (adversarially
		trained against $\eps=0.5$).}
	\label{fig:distill_nonrobust}
\end{figure}

In Table~\ref{tab:distill_nrf_perf_app}, we repeat the experiments in 
Table~\ref{tab:adv_next} based on datasets constructed using
a robust model. Note that using a robust model to generate the $\ddet$ 
and $\drand$ datasets will not result in non-robust features that
are strongly predictive of $t$ (since the prediction of the classifier
will not change).
Thus, training a model on these datasets leads to poor accuracy on the
standard test set from $\mathcal{D}$.

Observe from Figure~\ref{fig:training_curves} that models trained on
datasets derived from the robust model show a decline in test accuracy
as training progresses. In Table~\ref{tab:distill_nrf_perf_app}, the accuracy
numbers reported correspond to the \emph{last} iteration, and 
not the \emph{best} performance. This is because we have
no way to cross-validate in a meaningful way as the validation 
set itself comes from $\drand$ or $\ddet$, and not from the true data 
distribution $D$. Thus, validation accuracy will not be predictive of
the true test accuracy, and thus will not help determine when to early 
stop.

\begin{table}[!h]
    \caption{Repeating the experiments of Table~\ref{tab:adv_next} using a
    robust model to construct the datasets $\D$, $\drand$ and $\ddet$. 
    Results in Table~\ref{tab:adv_next}
    are reiterated for comparison.}
	\begin{center}
		\begin{tabular}{cccccc}
			\toprule
			\multirow{2}{*}{\shortstack[c]{\bf Model used  \\ \bf to construct dataset}} &&
			\multicolumn{3}{c}{{\bf Dataset 
			used in training}} \\ 
			\cmidrule{3-5}
		     && $\mathcal{D}$ & $\drand$ & $\ddet$ \\
			\midrule
			 Robust &&  95.3\% & 25.2 \% & {5.8\%} \\ 
			Standard &&  95.3\% & 63.3 \% & {43.7\%} \\ 
			\bottomrule
		\end{tabular}
	\end{center}
	\label{tab:distill_nrf_perf_app}
\end{table}

\clearpage
\subsection{Accuracy curves}
\label{app:accuracy_curves}

\begin{figure}[h!]
	\begin{center}
		\begin{subfigure}[b]{1.0\textwidth}
			\includegraphics[width=1.0\textwidth]{./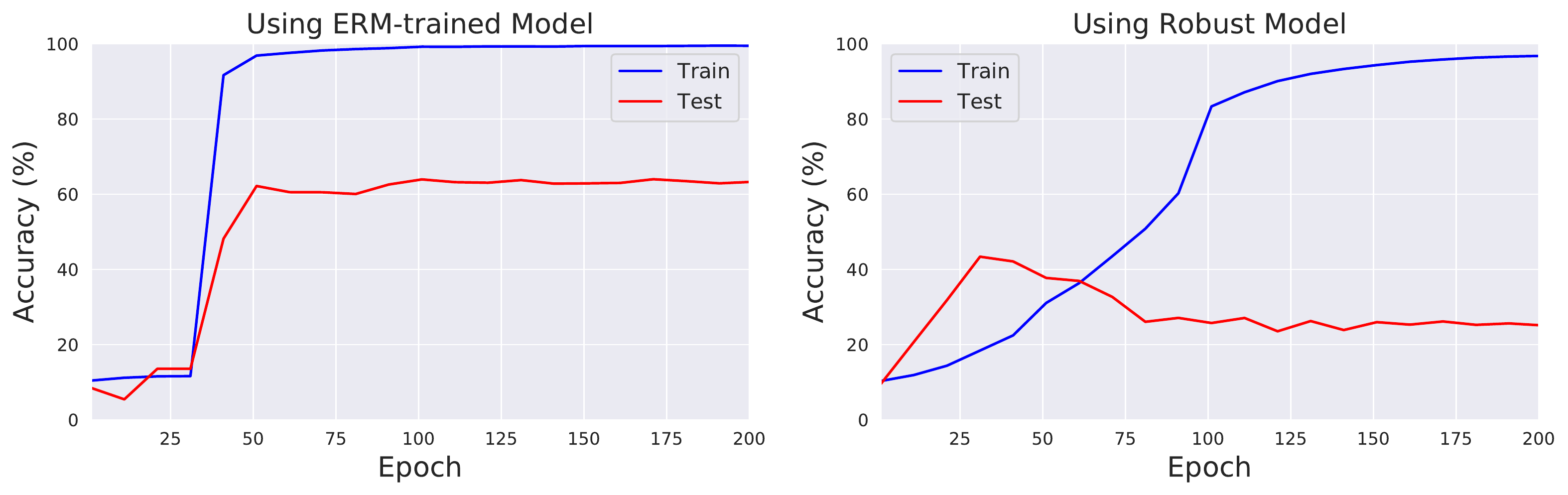}
			\caption{Trained using $\drand$ training set}
		\end{subfigure}
		\begin{subfigure}[b]{1.0\textwidth}
			\includegraphics[width=1.0\textwidth]{./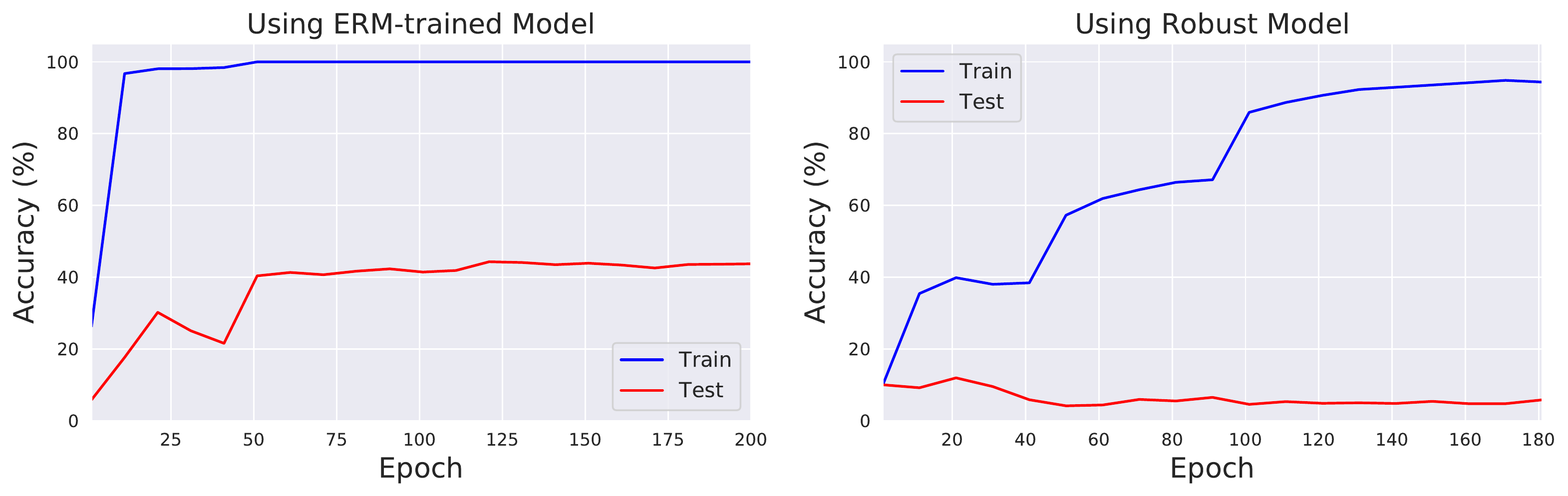}
			\caption{Trained using $\ddet$ training set}
		\end{subfigure}
	\end{center}
	\caption{Test accuracy on $\mathcal{D}$ of standard classifiers 
		trained on datasets where input-label correlation is based solely on non-robust
		features as in Section~\ref{sec:distill_nrf}.
		The datasets are constructed using either a non-robust/standard model 
		(\emph{left column}) or a robust model (\emph{right column}). 
		The labels used are either random ($\drand$;
		\emph{top row}) or correspond to a deterministic permutation 
		($\ddet$; \emph{bottom row}).
	}
	\label{fig:training_curves}
\end{figure}

\clearpage
\subsection{Performance of ERM classifiers on relabeled test set}
\label{app:erm_relabeled}
In Table~\ref{tab:adv_next_CIFAR_eval_next}), we evaluate the
performance of classifiers trained on $\ddet$ on both the original 
test set drawn from $\mathcal{D}$, and the test set relabelled using
$t(y) = (y + 1) \mod C$. Observe that the classifier trained on $\ddet$ 
constructed using a robust model actually ends up learning permuted labels 
based on robust features (indicated by high test accuracy on the relabelled
test set). 

\begin{table}[!h]
    \caption{
        Performance of classifiers trained using $\ddet$
        training set constructed using either standard or robust models. 
        The classifiers are evaluated both on the standard test set from $\mathcal{D}$
        and the test set relabeled using $t(y) = (y + 1) \mod C$.
        We observe that using a robust model for the construction results in a
        model that largely predicts the permutation of labels, indicating that
        the dataset does not have strongly predictive non-robust features.}
	\begin{center}
		\begin{tabular}{ccccccc}
			\toprule
			\multirow{2}{*}{\shortstack[c]{\bf Model used to construct \\ \bf training dataset for 
			$\ddet$}} &&& \multicolumn{4}{c}{{\bf Dataset 
					used in testing}} \\ 
			\cmidrule{4-7}
		    &&&
			$\mathcal{D}$ && 
			relabelled-$\mathcal{D}$ \\
			\midrule
			Standard  &&&  43.7\% && 16.2\% \\ 
			Robust  &&& 5.8\% && 65.5\% \\
			\bottomrule
		\end{tabular}
	\end{center}
	\label{tab:adv_next_CIFAR_eval_next}
\end{table}

\subsection{Generalization to CIFAR-10.1}
\label{app:cifar101}
\citet{recht2018cifar10} have constructed an unseen but
distribution-shifted test set for CIFAR-10. They show that for many
previously proposed models, accuracy on the CIFAR-10.1 test set can be
predicted as a linear function of performance on the CIFAR-10 test set. 

As a sanity check (and a safeguard against any potential adaptive
overfitting to the test set via hyperparameters, historical test set reuse,
etc.) we note that the classifiers trained on $\ddet$ and $\drand$ achieve
$44\%$ and $55\%$ generalization on the CIFAR-10.1 test set, respectively.
This demonstrates non-trivial generalization, and actually perform better
than the linear fit would predict (given their accuracies on the CIFAR-10
test set).

\clearpage
\subsection{Omitted Results for Restricted ImageNet}
\label{app:imagenet}

\begin{figure}[h!]
    \begin{center}
	\includegraphics[width=0.5\textwidth]{./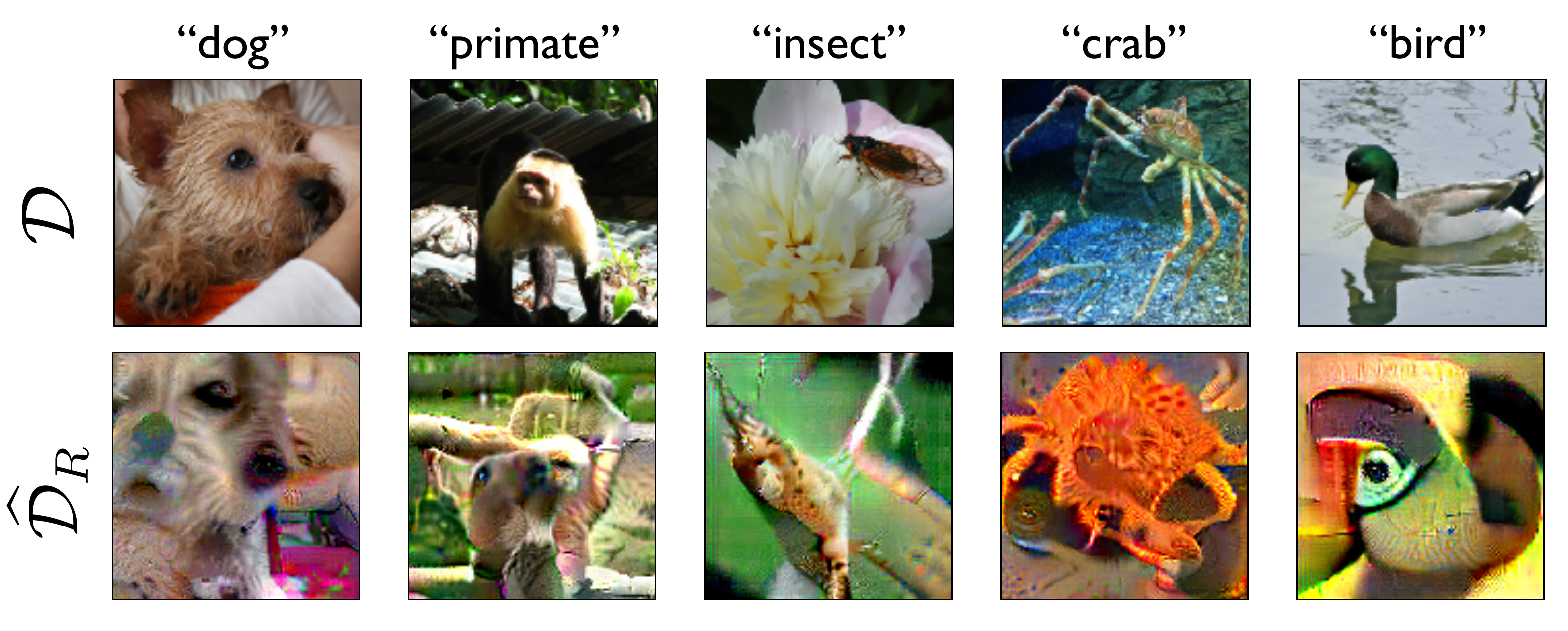}
    \end{center}
    \caption{Repeating the experiments shown in
	Figure~\ref{fig:distill_results} for the Restricted ImageNet
    dataset. Sample images from the resulting dataset.}
\end{figure}
\begin{figure}[h!]
    \begin{center}
	\includegraphics[width=0.5\textwidth]{./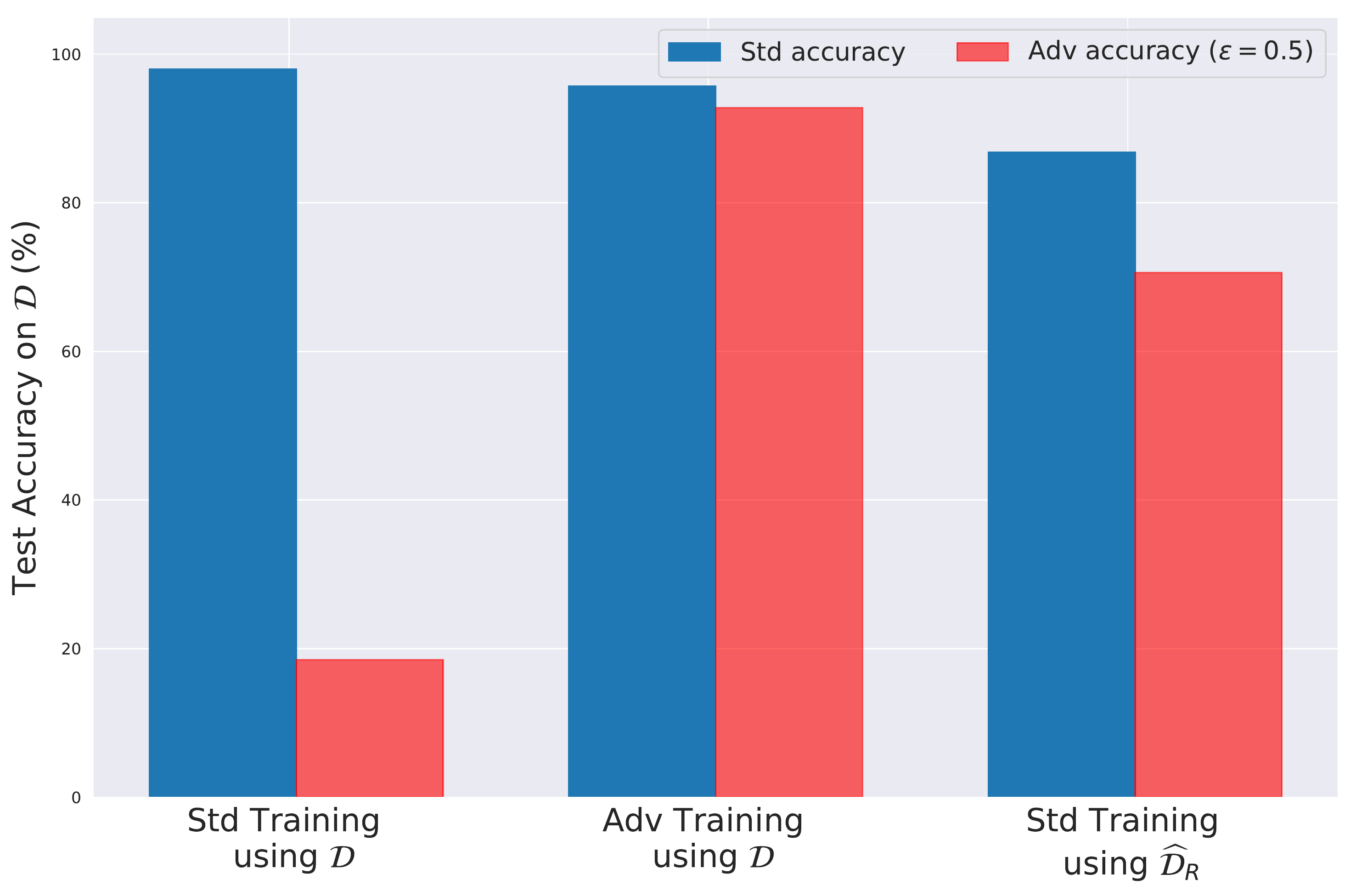}
    \caption{Repeating the experiments shown in
	Figure~\ref{fig:distill_results} for the Restricted ImageNet
	dataset. Standard and robust accuracy of models trained on these
    datasets.}
    \label{fig:distill_imagenet}
    \end{center}
\end{figure}

\clearpage
\subsection{Targeted Transferability}
\label{app:targeted_transfer}
\begin{figure}[h!]
    \begin{center}
	\includegraphics[width=0.5\textwidth]{./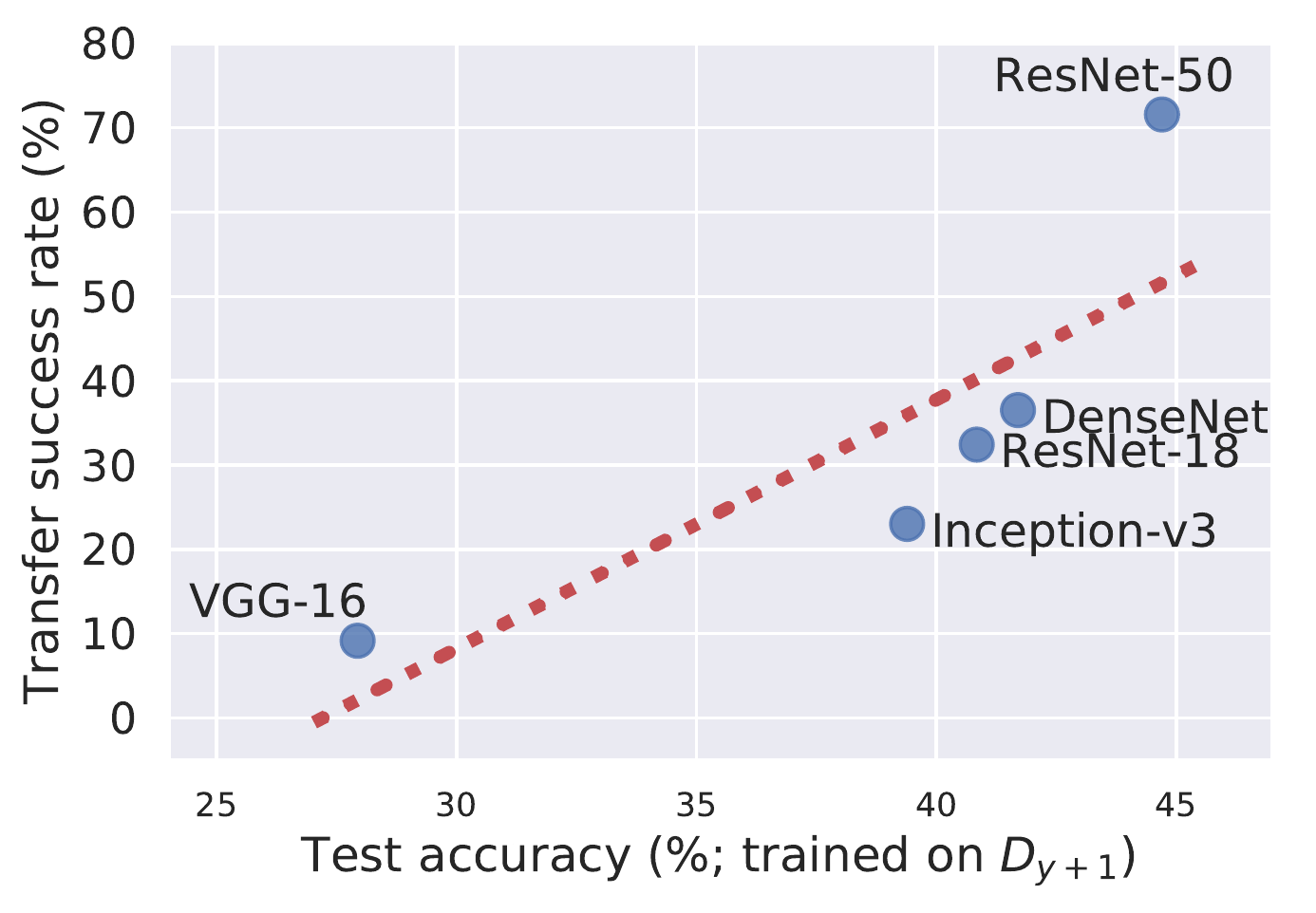}
    \end{center}
    \caption{Transfer rate of {\em targeted} adversarial examples (measured
	in terms of attack success rate, not just misclassification) from a
	ResNet-50 to different architectures alongside test set performance
	of these architecture when trained on the dataset generated in
	Section~\ref{sec:distill_nrf}. Architectures more susceptible to
	transfer attacks also performed better on the standard test set
	supporting our hypothesis that adversarial transferability arises
	from utilizing similar {\em non-robust features}.}
\end{figure}

\subsection{Robustness vs. Accuracy}
\label{subsec:robustnessacc}

\begin{figure}[h!]
    \begin{center}
	\includegraphics[width=0.6\textwidth]{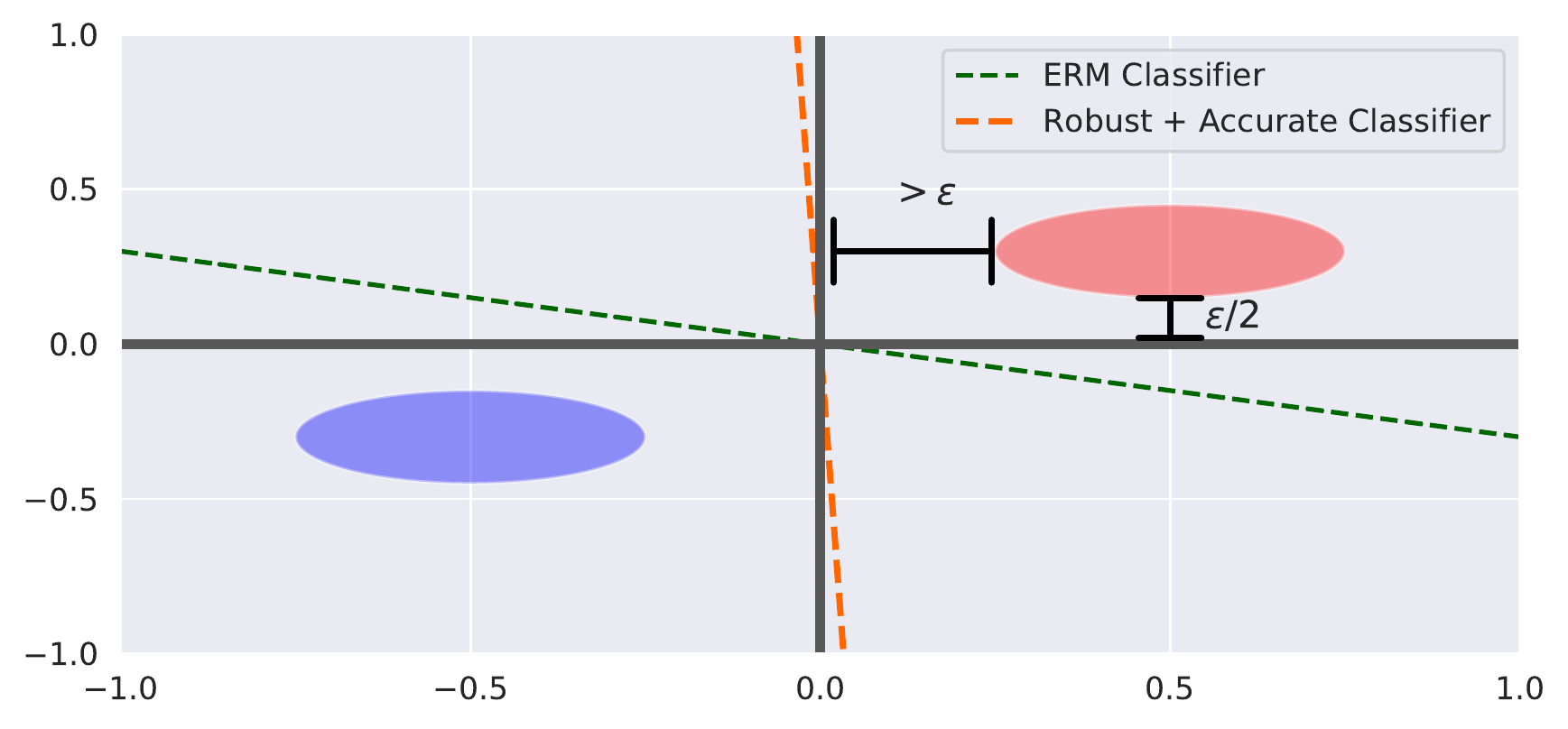}
	\caption{An example where adversarial vulnerability can arise from
	ERM training on any standard loss function due to non-robust features (the
    green line shows the ERM-learned decision boundary). There exists,
however, a classifier that is both perfectly robust {\em and} accurate,
resulting from robust training, which forces the classifier to ignore the
$x_2$ feature despite its predictiveness.}
    \label{fig:robustness_v_acc}
    \end{center}
\end{figure}

\clearpage
\section{Gaussian MLE under Adversarial Perturbation}
\label{app:proof}
In this section, we develop a framework for studying non-robust features by
studying the problem of {\em maximum likelihood classification} between two
Gaussian distributions. We first recall the setup of the problem, then
present the main theorems from Section~\ref{sec:theory}. First we build
the techniques necessary for their proofs.

\subsection{Setup}
\label{sec:setup}

We consider the setup where a learner receives labeled samples from two
distributions, $\mathcal{N}(\bm{\mu}_*, \bm{\Sigma}_*)$, and
$\mathcal{N}(-\bm{\mu}_*, \bm{\Sigma}_*)$. The learner's goal is to be able to
classify new samples as being drawn from $\mathcal{D}_1$ or $\mathcal{D}_2$ according
to a maximum likelihood (MLE) rule.

A simple coupling argument demonstrates that this problem can actually be reduced to
learning the parameters $\widehat{\mu}$, $\widehat{\Sigma}$ of a single Gaussian
$\mathcal{N}(-\bm{\mu}_*, \bm{\Sigma}_*)$, and then employing a linear classifier
with weight $\widehat{\bm{\Sigma}}^{-1}\widehat{\bm{\mu}}$. In the standard setting,
maximum likelihoods estimation learns the true parameters, $\mu_*$ and $\Sigma_*$,
and thus the learned classification rule is $C(x) = \mathbb{1}\{x^\top
\bm{\Sigma}^{-1}\bm{\mu} > 0\}$.

In this work, we consider the problem of {\em adversarially robust} maximum
likelihood estimation. In particular, rather than simply being asked to classify
samples, the learner will be asked to classify {\em adversarially perturbed} samples
$x+\delta$, where $\delta \in \Delta$ is chosen to maximize the loss of the learner.
Our goal is to derive the parameters $\bm{\mu}, \bm{\Sigma}$ corresponding to an
adversarially robust maximum likelihood estimate of the parameters of
$\mathcal{N}(\bm{\mu}_*, \bm{\Sigma}_*)$. Note that since we have access to
$\bm{\Sigma}_*$ (indeed, the learner can just run non-robust MLE to get access), we
work in the space where $\bm{\Sigma}^*$ is a diagonal matrix, and we restrict the
learned covariance $\bm{\Sigma}$ to the set of diagonal matrices.

\paragraph{Notation.} We denote the parameters of the sampled Gaussian by $\bm{\mu}_*
\in \mathbb{R}^d$, and $\bm{\Sigma}_* \in \{\text{diag}(\bm{u}) | \bm{u} \in
\mathbb{R}^d\}$. We use $\sigma_{min}(X)$ to represent the smallest eigenvalue of a
square matrix $X$, and $\ell(\cdot;x)$ to represent the Gaussian negative
log-likelihood for a single sample $x$. For convenience, we often use $\bm{v} = x -
\bm{\mu}$, and $R = \|\bm{\mu}_*\|$. We also define the $\dia$ operator to represent
the vectorization of the diagonal of a matrix. In particular, for a matrix $X \in
\mathbb{R}^{d\times d}$, we have that $X_{\dia} = v \in \mathbb{R}^d$ if $v_i =
X_{ii}$.

\subsection{Outline and Key Results}
\label{sec:outline}
We focus on the case where $\Delta = \mathcal{B}_2(\epsilon)$ for some
$\epsilon > 0$, i.e. the $\ell_2$ ball, corresponding to the following minimax
problem:
\begin{equation}
    \min_{\bm{\mu}, \bm{\Sigma}} \E\left[\max_{\delta: \|\delta\| = \eps}
    \ell(\bm{\mu},\bm{\Sigma}; x+\delta)\right]
    \label{eq:origprop}
\end{equation}
We first derive the optimal adversarial perturbation for this setting
(Section~\ref{sec:motivating_example}), and prove Theorem~\ref{thm:0}
(Section~\ref{sec:fixed_lagrange}). We then
propose an alternate problem, in which the adversary picks a linear operator to be
applied to a fixed vector, rather than picking a specific perturbation
vector (Section~\ref{sec:real_objective}). We
argue via Gaussian concentration that the alternate problem is indeed reflective of
the original model (and in particular, the two become equivalent as
$d\rightarrow\infty$). In particular, we propose studying the following in place
of~\eqref{eq:origprop}:
\begin{align}
    \label{eq:realprob}
    &\min_{\bm{\mu}, \bm{\Sigma}} \max_{M\in \mathcal{M}} \E\left[
    \ell(\bm{\mu},\bm{\Sigma}; x+M(x-\bm{\mu}))\right]  \\
    \nonumber
    &\text{where } \mathcal{M} = \left\{M \in \mathbb{R}^{d\times d}:\ M_{ij} = 0\
    \forall\ i \neq j,\ 
	\mathbb{E}_{x\sim\mathcal{N}(\bm{\mu}^*, \bm{\Sigma}^*)}
	\left[
	    \|M\bm{v}\|^2_2
    \right] = \epsilon^2 \right\}.
\end{align}

Our goal is to characterize the behavior of the robustly
learned covariance $\bm{\Sigma}$ in terms of the true covariance matrix
$\bm{\Sigma}_*$ and the perturbation budget $\eps$. The proof is through
Danskin's Theorem, which allows us to use any maximizer of the inner problem $M^*$
in computing the subgradient of the inner minimization. After showing the
applicability of Danskin's Theorem (Section~\ref{sec:danskin_valid}) and
then applying it (Section~\ref{sec:applying_danskin}) to prove our main
results (Section~\ref{sec:main_thms}). Our three main results, which
we prove in the following section, are presented below. 

First, we consider a simplified version of~\eqref{eq:origprop},
in which the adversary solves a maximization with a fixed Lagrangian
penalty, rather than a hard $\ell_2$ constraint. In this setting, we show
that the loss contributed by the adversary corresponds to a misalignment
between the data metric (the Mahalanobis distance, induced by
$\Sigma^{-1}$), and the $\ell_2$ metric:
\vulnerability*

\noindent We then return to studying~\eqref{eq:realprob}, where we provide
upper and lower bounds on the learned robust covariance matrix $\bm{\Sigma}$:
\parameters*

\noindent Finally, we show that in the worst case over mean vectors
$\mu_*$, the gradient of the adversarial robust classifier aligns more with
the inter-class vector:
\gradients*

\subsection{Proofs}
In the first section, we have shown that the classification between two Gaussian
distributions with identical covariance matrices centered at $\bm{\mu}^*$ and
$-\bm{\mu}^*$ can in fact be reduced to learning the parameters of a
single one of these distributions. 

Thus, in the standard setting, our goal is to solve the following problem:
\begin{align*}
    \min_{\bm{\mu}, \bm{\Sigma}} \mathbb{E}_{x\sim\mathcal{N}(\bm{\mu}^*,
    \bm{\Sigma}^*)}\left[\ell(\bm{\mu},\bm{\Sigma};x)\right] := 
    \min_{\bm{\mu}, \bm{\Sigma}} \mathbb{E}_{x\sim\mathcal{N}(\bm{\mu}^*,
    \bm{\Sigma}^*)}\left[-\log\left(\mathcal{N}(\bm{\mu},\bm{\Sigma};x)\right)\right].
\end{align*}

Note that in this setting, one can simply find differentiate $\ell$ with respect to
both $\bm{\mu}$ and $\bm{\Sigma}$, and obtain closed forms for both (indeed, these
closed forms are, unsurprisingly, $\bm{\mu}^*$ and $\bm{\Sigma}^*$). Here, we
consider the existence of a {\em malicious adversary} who is allowed to
perturb each sample point $x$ by some $\delta$. The goal of the
adversary is to {\em maximize} the same loss that the learner is minimizing. 

\subsubsection{Motivating example: \texorpdfstring{$\ell_2$}{l-2}-constrained adversary}
\label{sec:motivating_example}

We first consider, as a motivating example, an $\ell_2$-constrained adversary. That
is, the adversary is allowed to perturb each sampled point by $\delta: \|\delta\|_2 =
\eps$. In this case, the minimax problem being solved is the following:
\begin{align}
    \label{eq:origagain}
    \min_{\bm{\mu}, \bm{\Sigma}} \mathbb{E}_{x\sim\mathcal{N}(\bm{\mu}^*,
    \bm{\Sigma}^*)}
    \left[
	\max_{\|\delta\| = \eps} \ell(\bm{\mu},\bm{\Sigma};x+\delta)
    \right].
\end{align}
The following Lemma captures the optimal behaviour of the adversary:

\begin{lemma}
    \label{lemma:optimaldelta}
    In the minimax problem captured in~\eqref{eq:origagain} (and earlier
    in~\eqref{eq:origprop}), the optimal adversarial perturbation $\delta^*$ is given
    by
    \begin{equation}
	\delta^* = \left(\lambda\bm{I} -
	\bm{\Sigma}^{-1}\right)^{-1}\bm{\Sigma}^{-1}\bm{v} = 
	\left(\lambda\bm{\Sigma} - \bm{I}\right)^{-1}\bm{v},
    \end{equation}
    where $\bm{v} = x - \bm{\mu}$, and $\lambda$ is set such that $\|\delta^*\|_2 = \eps$.
\end{lemma}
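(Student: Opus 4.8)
The plan is to observe first that the outer expectation over $x$ is irrelevant for this lemma — the adversary solves an independent maximization at each sampled point — so it suffices to fix $x$, equivalently to fix $\bm{v} = x - \mu$, and maximize $\delta \mapsto \ell(\mu,\Sigma; x+\delta)$ subject to $\|\delta\|_2 = \eps$. Expanding the Gaussian negative log-likelihood gives $\ell(\mu,\Sigma;x+\delta) = \tfrac12(\bm{v}+\delta)^\top\Sigma^{-1}(\bm{v}+\delta) + \tfrac12\log|\Sigma| + \tfrac d2\log(2\pi)$, and only the first term depends on $\delta$, so the problem reduces to maximizing the Mahalanobis quadratic form $(\bm{v}+\delta)^\top\Sigma^{-1}(\bm{v}+\delta)$ over the sphere of radius $\eps$.

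Next I would introduce a Lagrange multiplier $\lambda$ for the equality constraint and write the stationarity condition of $\tfrac12(\bm{v}+\delta)^\top\Sigma^{-1}(\bm{v}+\delta) - \tfrac{\lambda}{2}(\|\delta\|_2^2 - \eps^2)$ in $\delta$, namely $\Sigma^{-1}(\bm{v}+\delta) = \lambda\delta$. Rearranging yields $(\lambda I - \Sigma^{-1})\delta = \Sigma^{-1}\bm{v}$, hence $\delta^* = (\lambda I - \Sigma^{-1})^{-1}\Sigma^{-1}\bm{v}$, and the identity $(\lambda I - \Sigma^{-1})^{-1}\Sigma^{-1} = \big(\Sigma(\lambda I - \Sigma^{-1})\big)^{-1} = (\lambda\Sigma - I)^{-1}$ gives the claimed closed form $\delta^* = (\lambda\Sigma - I)^{-1}\bm{v}$, with $\lambda$ still pinned down only by the requirement $\|\delta^*\|_2 = \eps$.

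The part I expect to be the main obstacle is confirming that the stationary point obtained this way, for the \emph{correct} branch of $\lambda$, is genuinely the global maximizer: the objective is convex in $\delta$, so the maximum sits on the boundary sphere and is not produced by unconstrained first-order reasoning, and the Lagrange system has one solution branch between each pair of consecutive eigenvalues of $\Sigma^{-1}$. To handle this I would use the reduction of Section~\ref{sec:setup}, working in the eigenbasis of $\Sigma_*$ with $\Sigma = \text{diag}(\sigma_1,\dots,\sigma_d)$ diagonal; then $\delta^*_i = v_i/(\lambda\sigma_i-1)$ and the norm constraint becomes $\phi(\lambda) := \sum_i v_i^2/(\lambda\sigma_i-1)^2 = \eps^2$. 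On the range $\lambda > 1/\sigma_{min}(\Sigma)$ one has $\lambda I - \Sigma^{-1} \succ 0$, and $\phi$ is continuous and strictly decreasing from $+\infty$ to $0$, so a unique such $\lambda$ satisfies $\|\delta^*\|_2 = \eps$. Finally, for that $\lambda$ the map $\delta \mapsto \tfrac12(\bm{v}+\delta)^\top\Sigma^{-1}(\bm{v}+\delta) - \tfrac{\lambda}{2}\|\delta\|_2^2$ is strictly concave (its Hessian $\Sigma^{-1} - \lambda I$ is negative definite) and has $\delta^*$ as a stationary point, hence $\delta^*$ is its unconstrained maximizer; since on the sphere $\|\delta\|_2 = \eps$ the original objective equals this concave surrogate plus the constant $\tfrac{\lambda}{2}\eps^2$, the point $\delta^*$ also maximizes the original objective over that sphere, which finishes the argument.
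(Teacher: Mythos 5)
Your proposal is correct and follows essentially the same route as the paper: reduce to maximizing the Mahalanobis quadratic over the sphere, impose the Lagrange stationarity condition $\Sigma^{-1}(\bm{v}+\delta)=\lambda\delta$, and invert to get $\delta^*=(\lambda\Sigma-\bm{I})^{-1}\bm{v}$. The only difference is that you additionally verify that the branch $\lambda>1/\sigma_{min}(\Sigma)$ yields the global maximizer via the concave-surrogate argument, a point the paper's proof leaves implicit.
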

\begin{proof}
In this context, we can solve the inner maximization problem with Lagrange
multipliers. In the following we write $\Delta = \mathcal{B}_2(\eps)$ for brevity,
and discard terms not containing $\delta$ as well as constant factors freely:
\begin{align}
    \nonumber
    \arg\max_{\delta\in\Delta}\ \ell(\bm{\mu}, \bm{\Sigma};x+\delta) -
    &= \arg\max_{\delta\in\Delta}
    \left(x+\delta-\bm{\mu}\right)^\top\bm{\Sigma}^{-1}
    \left(x+\delta-\bm{\mu}\right) \\
    \nonumber
    &= \arg\max_{\delta\in\Delta}\ 
    (x-\bm{\mu})^\top\bm{\Sigma}^{-1}(x-\bm{\mu}) + 
    2\delta^\top\bm{\Sigma}^{-1}(x-\bm{\mu}) +  
    \delta^\top\bm{\Sigma}^{-1}\delta \\
    &= \arg\max_{\delta\in\Delta}\ \delta^\top\bm{\Sigma}^{-1}(x-\bm{\mu}) +
    \frac{1}{2}\delta^\top\bm{\Sigma}^{-1}\delta.
    \label{eq:ready_to_lagrange}
\end{align}
Now we can solve~\eqref{eq:ready_to_lagrange} using the aforementioned Lagrange
multipliers. In particular, note that the maximum of~\eqref{eq:ready_to_lagrange} is
attained at the boundary of the $\ell_2$ ball $\Delta$. Thus, we can solve the
following system of two equations to find $\delta$, rewriting the norm constraint as
$\frac{1}{2}\|\delta\|_2^2 = \frac{1}{2}\eps^2$:
\begin{equation}
    \begin{cases}
	\nabla_\delta \left(\delta^\top\bm{\Sigma}^{-1}(x-\bm{\mu}) +
	\frac{1}{2}\delta^\top \bm{\Sigma}^{-1}\delta\right)= \lambda \nabla_\delta
	\left(\|\delta\|_2^2-\eps^2\right) \implies \bm{\Sigma}^{-1}(x-\bm{\mu}) +
	\bm{\Sigma}^{-1}\delta =
	\lambda\delta\\
	\|\delta\|_2^2 = \eps^2.
    \end{cases}
\end{equation}
For clarity, we write $\bm{v} = x - \bm{\mu}$: then, combining the above, we have
that 
\begin{equation}\label{eq:deltastar}
    \delta^* = \left(\lambda\bm{I} -
    \bm{\Sigma}^{-1}\right)^{-1}\bm{\Sigma}^{-1}\bm{v} = 
    \left(\lambda\bm{\Sigma} - \bm{I}\right)^{-1}\bm{v},
\end{equation}
our final result for the maximizer of the inner problem, where $\lambda$ is set
according to the norm constraint.
\end{proof}

\subsubsection{Variant with Fixed Lagrangian (Theorem~\ref{thm:0})}
\label{sec:fixed_lagrange}
To simplify the analysis of Theorem~\ref{thm:0}, we consider a version
of~\eqref{eq:origagain} with a fixed Lagrangian penalty, rather than a norm
constraint:
$$ \max \ell(x + \delta; y\cdot\mu, \Sigma) - C\cdot \|\delta\|_2.$$
Note then, that by Lemma~\ref{lemma:optimaldelta}, the optimal
perturbation $\delta^*$ is given by
$$\delta^* = \left(C\Sigma - \bm{I}\right)^{-1}.$$
We now proceed to the proof of Theorem~\ref{thm:0}.
\vulnerability*
\begin{proof}
    We begin by expanding the Gaussian negative log-likelihood for the
    relaxed problem:
    \begin{align*}
	\mathcal{L}_{adv}(\Theta) - \mathcal{L}(\Theta) &=
	 \E\left[
	     2\cdot \bm{v}^\top\left(C\cdot \Sigma -
	     \bm{I}\right)^{-\top}\Sigma^{-1} \bm{v} + \bm{v}^\top
		 \left( C\cdot \Sigma - \bm{I}\right)^{-\top}
		 \Sigma^{-1} 
		 \left( C\cdot\Sigma - \bm{I}\right)^{-1}
		\bm{v}
	 \right] \\
	 &=
	 \E\left[
	     2\cdot \bm{v}^\top\left(C\cdot\Sigma \Sigma -
	     \Sigma\right)^{-1} \bm{v} + \bm{v}^\top
		 \left( C\cdot\Sigma  - \bm{I}\right)^{-\top}
		 \Sigma^{-1} 
		 \left( C\cdot\Sigma  - \bm{I}\right)^{-1}
		\bm{v}
	    \right]
    \end{align*}
    Recall that we are considering the vulnerability at the MLE parameters
    $\mu^*$ and $\Sigma^*$:
    \begin{align*}
	\mathcal{L}_{adv}(\Theta) - \mathcal{L}(\Theta) &=
	\mathbb{E}_{\bm{v}\sim\mathcal{N}(0, I)}\left[
	    2\cdot \bm{v}^\top\Sigma_*^{1/2}\left(C\cdot\Sigma_*^2 -
	    \Sigma_*\right)^{-1} \Sigma_*^{1/2}\bm{v} \right. \\ 
	    &\qquad \left. +\  \bm{v}^\top
	     \Sigma_*^{1/2} \left( C\cdot\Sigma_*  - \bm{I}\right)^{-\top}
		 \Sigma_*^{-1} 
		 \left( C\cdot\Sigma_*  - \bm{I}\right)^{-1}
		 \Sigma_*^{1/2}\bm{v}
	\right]\\
	&= \mathbb{E}_{\bm{v}\sim\mathcal{N}(0, I)}\left[
	2\cdot \bm{v}^\top\left(C\cdot\Sigma_* -
	\bm{I}\right)^{-1} \bm{v} + \bm{v}^\top
	     \Sigma_*^{1/2} \left(C^2\Sigma_*^3  
		 - 2C\cdot \Sigma_*^2 + \Sigma_* \right)^{-1}
		 \Sigma_*^{1/2}\bm{v}
	\right]\\
	&= \mathbb{E}_{\bm{v}\sim\mathcal{N}(0, I)}\left[
	2\cdot \bm{v}^\top\left(C\cdot\Sigma_* -
	\bm{I}\right)^{-1} \bm{v} + \bm{v}^\top
	\left(C\cdot\Sigma_* - \bm{I} \right)^{-2}\bm{v}
	 \right]\\
	 &= \mathbb{E}_{\bm{v}\sim\mathcal{N}(0, I)}\left[- \|v\|_2^2 +
	 \bm{v}^\top\bm{I}\bm{v} +
	2\cdot \bm{v}^\top\left(C\cdot\Sigma_* -
	\bm{I}\right)^{-1} \bm{v} + \bm{v}^\top
	\left(C\cdot\Sigma_* - \bm{I} \right)^{-2}\bm{v}
	 \right]\\
	 &= \mathbb{E}_{\bm{v}\sim\mathcal{N}(0, I)}\left[- \|v\|_2^2 +
	 \bm{v}^\top\left(\bm{I} + \left(
		 C\cdot\Sigma_* - \bm{I}\right)^{-1}\right)^{2} \bm{v}
	 \right] \\
	 &= \tr\left[\left(\bm{I} + \left(
		 C\cdot\Sigma_* - \bm{I}\right)^{-1}\right)^2
	 \right] - d
     \end{align*}
     This shows the first part of the theorem. It remains to show that for
     a fixed $k = \tr(\Sigma_*)$, the adversarial risk is minimized by
     $\Sigma_* = \frac{k}{d}\bm{I}$:
     \begin{align*}
	 \min_{\Sigma_*}\ \mathcal{L}_{adv}(\Theta) - \mathcal{L}(\Theta) &=
	 \min_{\Sigma_*}\ \tr\left[\left(\bm{I} + \left(
		 C\cdot\Sigma_* - \bm{I}\right)^{-1}\right)^2
	 \right] \\
	 &= \min_{\{\sigma_i\}}\ \sum_{i=1}^d 
	 \left(1 + \frac{1}{C\cdot\sigma_i - 1}\right)^2,
     \end{align*}
     where $\{\sigma_i\}$ are the eigenvalues of $\Sigma_*$. Now, we have
     that $\sum \sigma_i = k$ by assumption, so by optimality conditions,
     we have that $\Sigma_*$ minimizes the above if $\nabla_{\{\sigma_i\}}
     \propto \vec{1}$, i.e. if $\nabla_{\sigma_i} = \nabla_{\sigma_j}$ for
     all $i, j$. Now,
     \begin{align*}
	 \nabla_{\sigma_i} &= 
	 -2\cdot \left(1 + \frac{1}{C\cdot\sigma_i - 1}\right)\cdot
	 \frac{C}{\left(C\cdot\sigma_i - 1\right)^2} \\ 
	 &= -2\cdot \frac{C^2\cdot \sigma_i}{(C\cdot \sigma_i - 1)^3}.
     \end{align*}
     Then, by solving analytically, we find that 
     \begin{align*}
	 -2\cdot \frac{C^2\cdot \sigma_i}{(C\cdot \sigma_i - 1)^3} = 
	 -2\cdot \frac{C^2\cdot \sigma_j}{(C\cdot \sigma_j - 1)^3}
     \end{align*}
     admits only one real solution, $\sigma_i = \sigma_j$. Thus, $\Sigma_*
     \propto \bm{I}$. Scaling to satisfy the trace constraint yields
     $\Sigma_* = \frac{k}{d}\bm{I}$, which concludes the proof.
\end{proof}

\subsubsection{Real objective}
\label{sec:real_objective}
Our motivating example (Section~\ref{sec:motivating_example}) demonstrates that the
optimal perturbation for the adversary in the $\ell_2$-constrained case is actually a
linear function of $\bm{v}$, and in particular, that the optimal perturbation can be
expressed as $D\bm{v}$ for a diagonal matrix $D$. Note, however, that the problem
posed in~\eqref{eq:origagain} is not actually a minimax problem, due to the presence of
the expectation between the outer minimization and the inner maximization. Motivated
by this and~\eqref{eq:deltastar}, we define the following robust problem:
\begin{align}
    \label{eq:new}
    &\min_{\bm{\mu}, \bm{\Sigma}} \max_{M \in \mathcal{M}} 
    \mathbb{E}_{x\sim\mathcal{N}(\bm{\mu}^*,
    \bm{\Sigma}^*)}
    \left[
	\ell(\bm{\mu},\bm{\Sigma};x+M\bm{v})
    \right], \\
    \nonumber
    &\text{where } \mathcal{M} = \left\{M \in \mathbb{R}^{d\times d}:\ M_{ij} = 0\
    \forall\ i \neq j,\ 
	\mathbb{E}_{x\sim\mathcal{N}(\bm{\mu}^*, \bm{\Sigma}^*)}
	\left[
	    \|M\bm{v}\|^2_2
    \right] = \epsilon^2 \right\}.
\end{align}
First, note that this objective is slightly different from that
of~\eqref{eq:origagain}.
In the motivating example, $\delta$ is constrained to {\em always} have $\eps$-norm,
and thus is normalizer on a per-sample basis inside of the expectation. In contrast,
here the classifier is concerned with being robust to perturbations that are linear
in $\bm{v}$, and of $\eps^2$ squared norm {\em in expectation}. \\

\noindent Note, however, that via the result of~\citet{laurent2000adaptive}
showing strong concentration for the norms of Gaussian random variables, in high
dimensions this bound on expectation has a corresponding high-probability bound on
the norm. In particular, this implies that as $d \rightarrow \infty$, $\|M\bm{v}\|_2
= \eps$ almost surely, and thus the problem becomes identical to that
of~\eqref{eq:origagain}. We now derive the optimal $M$ for a given $(\mu, \Sigma)$:
\begin{lemma}
    Consider the minimax problem described by~\eqref{eq:new}, i.e.
\begin{align*}
    &\min_{\bm{\mu}, \bm{\Sigma}} \max_{M \in \mathcal{M}} 
    \mathbb{E}_{x\sim\mathcal{N}(\bm{\mu}^*,
    \bm{\Sigma}^*)}
    \left[
	\ell(\bm{\mu},\bm{\Sigma};x+M\bm{v})
    \right].
\end{align*}
Then, the optimal action $M^*$ of the inner maximization problem is given by
\begin{equation}
    \label{eq:mstar}
    M = \left(\lambda \bm{\Sigma} - \bm{I}\right)^{-1},
\end{equation}
where again $\lambda$ is set so that $M \in \mathcal{M}$.
\end{lemma}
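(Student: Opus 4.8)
The plan is to mirror the argument of Lemma~\ref{lemma:optimaldelta}, now with the perturbation $M\bm{v}$ playing the role that $\delta$ played there and the per-sample norm constraint replaced by the in-expectation constraint defining $\mathcal{M}$. First I would fix the learner's parameters $(\bm{\mu},\bm{\Sigma})$ and expand the Gaussian negative log-likelihood $\ell(\bm{\mu},\bm{\Sigma};x+M\bm{v})$, writing the perturbed residual as $\bm{v}+M\bm{v}$. As in~\eqref{eq:ready_to_lagrange}, the log-determinant term and the term $\bm{v}^\top\bm{\Sigma}^{-1}\bm{v}$ do not involve $M$ and may be discarded, leaving the inner problem
$$\max_{M\in\mathcal{M}}\ \mathbb{E}\left[\bm{v}^\top\bm{\Sigma}^{-1}M\bm{v} + \tfrac{1}{2}\bm{v}^\top M^\top\bm{\Sigma}^{-1}M\bm{v}\right].$$

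The key simplification is that $M$, $\bm{\Sigma}$, and (since we work in the eigenbasis of $\bm{\Sigma}_*$) $\bm{\Sigma}_*$ are all diagonal, so $\bm{\Sigma}^{-1}M$ and $M^\top\bm{\Sigma}^{-1}M$ are diagonal, and only the diagonal second moments $q_i := \mathbb{E}[\bm{v}_i^2] > 0$ survive the expectation. Writing $M = \mathrm{diag}(m_1,\dots,m_d)$ and $\bm{\Sigma} = \mathrm{diag}(s_1,\dots,s_d)$, the problem decouples into maximizing $\sum_i \tfrac{q_i m_i}{s_i} + \tfrac{1}{2}\sum_i \tfrac{q_i m_i^2}{s_i}$ subject to $\sum_i q_i m_i^2 = \eps^2$. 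I would then introduce a single Lagrange multiplier $\lambda$ for the quadratic constraint and argue, exactly as in Lemma~\ref{lemma:optimaldelta}, that the maximizer lies on the boundary of the constraint set, so the stationarity conditions are the ones to solve. Setting $\partial/\partial m_i$ of the Lagrangian to zero and dividing out $q_i$ gives the scalar equation $\tfrac{1}{s_i} + \tfrac{m_i}{s_i} = \lambda m_i$, whose solution is $m_i = \tfrac{1}{\lambda s_i - 1}$; reassembling the coordinates yields $M = (\lambda\bm{\Sigma}-\bm{I})^{-1}$, with $\lambda$ pinned down by $\mathbb{E}[\|M\bm{v}\|_2^2] = \eps^2$, i.e. $M \in \mathcal{M}$ (so that the optimal linear perturbation $M\bm{v} = (\lambda\bm{\Sigma}-\bm{I})^{-1}\bm{v}$ has the same form as $\delta^*$ in~\eqref{eq:deltastar}).

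The main obstacle is not the calculus but verifying that this stationary point is genuinely the maximizer: the objective is convex in $M$, so one is maximizing a convex function over the ellipsoidal sphere $\{M : \sum_i q_i m_i^2 = \eps^2\}$, and I would need to show that among the branches of $\lambda$ solving the norm equation one selects the branch for which $\lambda\bm{\Sigma}-\bm{I}$ is a finite, real element of $\mathcal{M}$ attaining the maximum rather than a saddle — the analogue of the condition $C \ge 1/\sigma_{\min}(\bm{\Sigma}_*)$ appearing in Theorem~\ref{thm:0}. A secondary point worth recording is that the $q_i$ cancel from the stationarity equations, so the optimal $M$ is independent of the (possibly nonzero) mean offset $\bm{\mu}^*-\bm{\mu}$, consistent with the later conclusion $\bm{\mu}_r = \bm{\mu}^*$.
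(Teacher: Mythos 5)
Your proposal is correct and follows essentially the same route as the paper: the paper likewise discards the $M$-independent terms and applies Lagrange multipliers to $\mathbb{E}\left[\bm{v}^\top M\bm{\Sigma}^{-1}\bm{v} + \tfrac{1}{2}\bm{v}^\top M\bm{\Sigma}^{-1}M\bm{v}\right]$ under the expected-squared-norm constraint, arriving at $M=(\lambda\bm{\Sigma}-\bm{I})^{-1}$ in matrix form via $\mathbb{E}[\bm{v}\bm{v}^\top]$ cancelling from both sides --- precisely your coordinate-wise observation that the $q_i$ drop out. The branch-selection/maximizer issue you flag is genuine but is also left open in the paper's proof of this lemma; it is only addressed later, in the section bounding $\lambda$, where values of $\lambda$ making $M$ negative semi-definite are excluded and $\lambda \geq 1/\sigma_{\max}(\bm{\Sigma})$ is argued to be the admissible branch for small $\eps$.
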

\begin{proof}
We accomplish this in a similar fashion to
what was done for $\delta^*$, using Lagrange multipliers:
\begin{align*}
    \nabla_M \E
    \left[ \bm{v}^\top M\bm{\Sigma}^{-1}\bm{v} +
    \frac{1}{2}\bm{v}^\top M \bm{\Sigma}^{-1}M \bm{v} \right] &= 
    \lambda \nabla_M \E
    \left[\|M\bm{v}\|_2^2-\eps^2\right]  \\
    \E\left[\bm{\Sigma}^{-1}\bm{v}\bm{v}^\top + \bm{\Sigma}^{-1}M\bm{v}\bm{v}^\top\right] 
    &= \E\left[\lambda M\bm{v}\bm{v}^\top\right] \\
    \bm{\Sigma}^{-1}\bm{\Sigma}^* + \bm{\Sigma}^{-1}M\bm{\Sigma}^* 
    &= \lambda M \bm{\Sigma}^* \\
    M &= \left(\lambda \bm{\Sigma} - \bm{I}\right)^{-1},
\end{align*}
where $\lambda$ is a constant depending on $\bm{\Sigma}$ and $\bm{\mu}$ enforcing the
expected squared-norm constraint. 
\end{proof}

\noindent Indeed, note that the optimal $M$ for the adversary
takes a near-identical form to the optimal $\delta$~\eqref{eq:deltastar}, with the
exception that $\lambda$ is not sample-dependent but rather varies only with the
parameters.

\subsubsection{Danskin's Theorem} 
\label{sec:danskin_valid}
The main tool in proving our key results is
Danskin's Theorem~\citep{danskin1967theory}, a powerful theorem from
minimax optimization which contains the following key result:
\begin{theorem}[Danskin's Theorem]
    \label{thm:danskin}
    Suppose $\phi(x,z): \mathbb{R} \times Z \rightarrow \mathbb{R}$ is a continuous
    function of two arguments, where $Z \subset \mathbb{R}^m$ is compact. Define 
    $f(x) = \max_{z \in Z} \phi(x,z)$.  Then, if for every $z \in Z$, $\phi(x,z)$ is convex and
    differentiable in $x$, and $\frac{\partial \phi}{\partial x}$ is continuous:

The subdifferential of $f(x)$ is given by
$$\partial f(x) = \mathrm{conv} \left\{ \frac{\partial \phi(x,z)}{\partial x} : z \in
Z_0(x) \right\},$$
where $\mathrm{conv}(\cdot)$ represents the convex hull operation, and $Z_0$ is the
set of maximizers defined as
$$Z_0(x) = \left\{ \overline{z} : \phi(x,\overline{z}) = \max_{z \in Z}
\phi(x,z)\right\}.$$
\end{theorem}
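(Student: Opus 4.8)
The plan is to treat this as a pure convex-analysis statement: first pin down the directional derivative of $f$ at $x$, then convert that into the description of $\partial f(x)$. As a preliminary I would record that $f = \max_{z\in Z}\phi(\cdot,z)$ is convex, being a pointwise maximum of the convex functions $\phi(\cdot,z)$, and that it is finite and continuous (the max is attained for each $x$ by compactness of $Z$ and continuity of $\phi$); hence for every direction $d$ the difference quotient $t\mapsto (f(x+td)-f(x))/t$ is monotone in $t>0$, so the directional derivative $f'(x;d):=\lim_{t\downarrow 0}(f(x+td)-f(x))/t$ exists, and $\partial f(x)$ is a nonempty compact convex set. The ``$\ge$'' half of the key identity is immediate: for any maximizer $z^\star\in Z_0(x)$ we have $f(x+td)\ge\phi(x+td,z^\star)$ and $f(x)=\phi(x,z^\star)$, so dividing by $t$ and letting $t\downarrow 0$ (using differentiability of $\phi(\cdot,z^\star)$ in its first argument) gives $f'(x;d)\ge\langle\nabla_x\phi(x,z^\star),d\rangle$; maximizing over $z^\star\in Z_0(x)$ yields $f'(x;d)\ge\max_{z\in Z_0(x)}\langle\nabla_x\phi(x,z),d\rangle$.

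For the reverse inequality I would argue by compactness. Fix a sequence $t_k\downarrow 0$; since $Z$ is compact and $\phi(x+t_k d,\cdot)$ is continuous, choose $z_k\in Z_0(x+t_k d)$, and pass to a subsequence with $z_k\to\bar z\in Z$. Joint continuity of $\phi$ and continuity of $f$ force $\phi(x,\bar z)=\lim_k\phi(x+t_k d,z_k)=\lim_k f(x+t_k d)=f(x)$, i.e.\ $\bar z\in Z_0(x)$. Using $f(x)\ge\phi(x,z_k)$ and the fundamental theorem of calculus,
\[
\frac{f(x+t_k d)-f(x)}{t_k}\le\frac{\phi(x+t_k d,z_k)-\phi(x,z_k)}{t_k}=\frac{1}{t_k}\int_0^{t_k}\langle\nabla_x\phi(x+sd,z_k),d\rangle\,ds,
\]
and the right-hand side converges to $\langle\nabla_x\phi(x,\bar z),d\rangle$ as $k\to\infty$, by uniform continuity of $\nabla_x\phi$ on a compact set together with $z_k\to\bar z$ and $t_k\to 0$. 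Since the left-hand side converges to $f'(x;d)$, we get $f'(x;d)\le\langle\nabla_x\phi(x,\bar z),d\rangle\le\max_{z\in Z_0(x)}\langle\nabla_x\phi(x,z),d\rangle$, and hence the identity $f'(x;d)=\max_{z\in Z_0(x)}\langle\nabla_x\phi(x,z),d\rangle$ holds for every $d$.

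To finish, set $K:=\mathrm{conv}\{\nabla_x\phi(x,z):z\in Z_0(x)\}$. Since $Z_0(x)=\{z\in Z:\phi(x,z)=f(x)\}$ is a closed subset of the compact set $Z$ and $z\mapsto\nabla_x\phi(x,z)$ is continuous, the set $\{\nabla_x\phi(x,z):z\in Z_0(x)\}$ is compact, so $K$ is compact and convex and its support function is $\sigma_K(d)=\max_{z\in Z_0(x)}\langle\nabla_x\phi(x,z),d\rangle=f'(x;d)$. For a convex $f$ one has $\partial f(x)=\{g:\langle g,d\rangle\le f'(x;d)\ \text{for all }d\}$; since $f'(x;\cdot)=\sigma_K$ and $K$ is closed and convex, this set is exactly $K$, giving $\partial f(x)=\mathrm{conv}\{\nabla_x\phi(x,z):z\in Z_0(x)\}$ as claimed. (In the scalar setting of the statement this just says $\partial f(x)$ is the interval between the minimum and maximum of $\partial_x\phi(x,z)$ over $z\in Z_0(x)$, and the later application to matrix-valued parameters uses the same dimension-free argument.)

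I expect the main obstacle to be the reverse inequality for the directional derivative: one must show that limit points of approximate maximizers at the shifted points $x+t_k d$ are genuine maximizers at $x$, and at the same time push the limit through the averaged gradient in the displayed inequality. This is precisely where compactness of $Z$ and the joint continuity of $\phi$ and of $\nabla_x\phi$ are indispensable — without them $Z_0(\cdot)$ can behave discontinuously in $x$ and the formula fails. The remaining ingredients (convexity and continuity of $f$, existence of directional derivatives for convex functions, and the identification of $\partial f(x)$ with the closed convex set whose support function is $f'(x;\cdot)$) are standard convex analysis.
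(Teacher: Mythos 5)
Your proof is correct. Note, however, that the paper does not prove this statement at all: Theorem~\ref{thm:danskin} is imported verbatim as a classical result, with a citation to Danskin's 1967 monograph, and is used as a black box in Section~\ref{sec:applying_danskin}. What you have written is the standard convex-analytic proof (essentially the Danskin/Bertsekas argument): establish the directional-derivative identity $f'(x;d)=\max_{z\in Z_0(x)}\langle\nabla_x\phi(x,z),d\rangle$ by a two-sided estimate, then identify $\partial f(x)$ as the unique compact convex set whose support function is $f'(x;\cdot)$. The two delicate points --- that limit points of maximizers $z_k\in Z_0(x+t_kd)$ land in $Z_0(x)$, and that the averaged gradient in the upper bound converges via uniform continuity of $\nabla_x\phi$ on a compact product set --- are exactly where compactness of $Z$ and continuity of $\partial\phi/\partial x$ enter, and you handle both correctly (compactness of the gradient image over the closed set $Z_0(x)$, plus Carath\'eodory, also ensures the convex hull is closed, so the support-function identification is legitimate). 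The argument is dimension-free, so it covers the matrix-parameterized instance actually used later in the appendix even though the statement is phrased with a scalar first argument.
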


In short, given a minimax problem of the form $\min_x\max_{y\in C} f(x, y)$ where $C$
is a compact set, if $f(\cdot, y)$ is convex for all values of $y$, then rather than
compute the gradient of $g(x) := \max_{y\in C} f(x, y)$, we can simply find a
maximizer $y^*$ for the current parameter $x$; Theorem~\ref{thm:danskin} ensures that
$\nabla_x f(x, y^*) \in \partial_x g(x)$. Note that $\mathcal{M}$ is trivially
compact (by the Heine-Borel theorem), and differentiability/continuity follow rather
straightforwardly from our reparameterization (c.f.~\eqref{eq:grad}), and so it
remains to show that the outer minimization is convex for any fixed $M$.

\paragraph{Convexity of the outer minimization.} 
Note that even in the standard case (i.e. non-adversarial), the Gaussian negative
log-likelihood is not convex with respect to $(\bm{\mu}, \bm{\Sigma})$. Thus, rather
than proving convexity of this function directly, we employ the
parameterization used by~\cite{daskalakis2019efficient}: in particular, we
write the problem in terms of $\bm{T} = \bm{\Sigma}^{-1}$ and $\bm{m} =
\bm{\Sigma}^{-1}\bm{\mu}$. Under this
parameterization, we show that the robust problem is convex for any fixed $M$.

\begin{lemma}
    Under the aforementioned parameterization 
of $\bm{T} = \bm{\Sigma}^{-1}$ and $\bm{m} = \bm{\Sigma}^{-1}\bm{\mu}$, the following
``Gaussian robust negative log-likelihood'' is convex:
$$\E\left[\ell(\bm{m}, \bm{T};x+M\bm{v})\right].$$ 
\end{lemma}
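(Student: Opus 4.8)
The plan is to carry out the convexity check directly after passing to the natural parameters $\bm{T} = \bm{\Sigma}^{-1}$ and $\bm{m} = \bm{\Sigma}^{-1}\bm{\mu}$, in which the Gaussian negative log-likelihood at an evaluation point $z$ reads, up to an additive constant,
$$\ell(\bm{m},\bm{T};z) = \tfrac12 z^\top \bm{T} z - z^\top \bm{m} + \tfrac12 \bm{m}^\top \bm{T}^{-1}\bm{m} - \tfrac12 \log\det\bm{T}.$$
Since $M$ is held fixed throughout (this is a ``for each fixed $M$'' statement, as will be needed to apply Danskin's theorem), the only real work is to track how the perturbed point $z = x + M\bm{v}$, with $\bm{v} = x-\bm{\mu}$, enters these four terms, and then to recognize the result as a sum of convex functions of $(\bm{m},\bm{T})$ over the convex set of diagonal positive-definite matrices.

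Concretely, I would first record $z - \bm{\mu} = (I+M)(x-\bm{\mu})$ and write $N := I+M$, which is symmetric and diagonal by the definition of $\mathcal{M}$ and the restriction of $\bm{\Sigma}$ (hence $\bm{T}$) to diagonal matrices. Then $(z-\bm{\mu})^\top\bm{T}(z-\bm{\mu}) = (x-\bm{\mu})^\top N\bm{T}N(x-\bm{\mu}) = (x-\bm{\mu})^\top N^2\bm{T}(x-\bm{\mu})$, using $N^\top = N$ and that diagonal matrices commute. Substituting $\bm{\mu} = \bm{T}^{-1}\bm{m}$ and expanding: the pure-$x$ part is affine in $\bm{T}$; the cross term collapses (again via $\bm{T}\bm{T}^{-1}=I$) to $-2x^\top N^2\bm{m}$, which is affine in $\bm{m}$; and the quadratic-in-$\bm{\mu}$ part rearranges, once more by commutativity of diagonal matrices, to $\bm{m}^\top N\bm{T}^{-1}N\bm{m} = (N\bm{m})^\top\bm{T}^{-1}(N\bm{m})$. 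Together with the unchanged $-\tfrac12\log\det\bm{T}$, this writes $\ell(\bm{m},\bm{T};x+M\bm{v})$ as (i) a term affine in $(\bm{m},\bm{T})$, plus (ii) $\tfrac12(N\bm{m})^\top\bm{T}^{-1}(N\bm{m})$, plus (iii) $-\tfrac12\log\det\bm{T}$.

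It then remains to invoke two standard facts. First, the matrix-fractional map $(\bm{a},\bm{T})\mapsto \bm{a}^\top\bm{T}^{-1}\bm{a}$ is jointly convex for $\bm{T}\succ 0$: its epigraph $\{(\bm{a},\bm{T},t):\bm{a}^\top\bm{T}^{-1}\bm{a}\le t\}$ is cut out by a linear matrix inequality in $(\bm{a},\bm{T},t)$ (Schur complement), hence convex, and precomposition with the linear map $\bm{m}\mapsto N\bm{m}$ preserves this, giving convexity of (ii). Second, $-\log\det\bm{T}$ is convex on the positive-definite cone (by restriction to lines), giving (iii); and (i) is affine. A sum of convex functions is convex, and the expectation over $x$ is an average of such functions of the parameters, so convexity is preserved, which is the claim. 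I expect the genuinely delicate point to be the parameter dependence of the perturbation: because $\bm{v} = x-\bm{\mu}$ moves with $\bm{\mu}$, a naive substitution produces cross terms mixing $N$ with $\bm{T}$, and these collapse to the clean matrix-fractional form only thanks to the standing restriction to diagonal (hence mutually commuting) matrices; I would make that commutativity step fully explicit. (If one instead reads $\bm{v}$ as the fixed data-dependent vector $x-\bm{\mu}^*$, then $z$ is a fixed sample point and the statement is just the classical convexity of the Gaussian NLL in natural parameters, with (ii) reducing to $\bm{m}^\top\bm{T}^{-1}\bm{m}$.)
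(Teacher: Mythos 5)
Your proof is correct, but it takes a genuinely different route from the paper's. The paper rewrites the perturbed likelihood in exponential-family form (linear term in the natural parameters plus a log-partition integral, following the parameterization of Daskalakis et al.) and shows the Hessian equals a covariance matrix of the sufficient statistics, hence is positive semidefinite. You instead decompose the NLL directly, using $z-\bm{\mu}=(I+M)(x-\bm{\mu})$ and the commutativity of the diagonal matrices $N=I+M$ and $\bm{T}$, into a part affine in $(\bm{m},\bm{T})$, the matrix-fractional term $\tfrac12(N\bm{m})^\top\bm{T}^{-1}(N\bm{m})$, and $-\tfrac12\log\det\bm{T}$, then invoke the standard joint convexity of $(\bm{a},\bm{T})\mapsto \bm{a}^\top\bm{T}^{-1}\bm{a}$ (Schur complement) and of $-\log\det$. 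Your decomposition agrees with the paper's up to the additive constant $\log|I+M|$ coming from whether one renormalizes the perturbed density, which is immaterial for convexity. Your version is more elementary and self-contained, and it correctly isolates the one delicate point (the parameter dependence of $\bm{v}=x-\bm{\mu}$ and the commutativity needed to collapse $N\bm{T}N$ to $N^2\bm{T}$), which the paper leaves implicit. What the paper's Hessian computation buys in exchange is the explicit gradient formula for $\ell$ in the natural parameters, which is reused verbatim in the subsequent application of Danskin's theorem; with your argument one would still need to derive that gradient separately.
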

\begin{proof}
To prove this, we show that the likelihood is convex even with respect to a single
sample $x$; the result follows, since a convex combination of convex functions
remains convex. We begin by
looking at the likelihood of a single sample $x \sim \mathcal{N}(\bm{\mu}_*,
\bm{\Sigma}_*)$:
\begin{align*}
    \mathcal{L}(\bm{\mu}, \bm{\Sigma}; x + M(x-\bm{\mu})) &= 
\frac{1}{\sqrt{(2 \pi)^{k}|\boldsymbol{\Sigma}|}} 
\exp \left(-\frac{1}{2}(x-\bm{\mu})^{\top}
(I+M)^2\bm{\Sigma}^{-1}
(x-\bm{\mu})\right) \\[1em]
&= \frac{
    \frac{1}{\sqrt{(2 \pi)^{k}|\bm{\Sigma}|}} 
    \exp \left(-\frac{1}{2}(x-\bm{\mu})^{\top}
    (I+M)^2\bm{\Sigma}^{-1}
    (x - \bm{\mu})\right) 
}{
    {\displaystyle\int} \frac{1}{\sqrt{(2 \pi)^{k}|\bm{(I+M)^{-2}\Sigma}|}} 
    \exp \left(-\frac{1}{2}(x - \bm{\mu})^{\top}
    (I+M)^2\bm{\Sigma}^{-1}
    (x - \bm{\mu})\right) 
} \\[1em]
&= \frac{
|I+M|^{-1}
    \exp \left(-\frac{1}{2}x^\top (I+M)^2\bm{\Sigma}^{-1} x +
    \bm{\mu}^{\top} (I+M)^2\bm{\Sigma}^{-1} x\right) 
}{
    {\displaystyle\int}
    \exp \left(-\frac{1}{2}x^\top (I+M)^2\bm{\Sigma}^{-1} x +
    \bm{\mu}^{\top} (I+M)^2\bm{\Sigma}^{-1} x\right) 
}
\end{align*}
In terms of the aforementioned $\bm{T}$ and $\bm{m}$, and for convenience defining $A = (I+M)^2$:
\begin{align}
    \nonumber
    \ell(x) &= |A|^{-1/2} + \left(\frac{1}{2}x^\top A\bm{T}x - \bm{m}^\top A x\right)
    - \log\left(
	{\displaystyle\int}
	\exp\left(\frac{1}{2}x^\top A\bm{T}x - \bm{m}^\top A x\right)
    \right) \\
    \nonumber
    \nabla \ell(x) &= \tvgrad
    - \frac{
	{\displaystyle\int} 
	\tvgrad
	\exp\left(\frac{1}{2}x^\top A\bm{T}x - \bm{m}^\top A x\right)
	}{
	{\displaystyle\int}
	\exp\left(\frac{1}{2}x^\top A\bm{T}x - \bm{m}^\top A x\right)
	} \\
	\label{eq:grad}
	&=  \tvgrad - \mathbb{E}_{z\sim \mathcal{N}(\bm{T}^{-1}\bm{m},
	(\bm{AT})^{-1})}
	 \stack{\frac{1}{2} (Azz^\top)_{\dia}}{- Az}.
\end{align}
From here, following an identical argument
to~\cite{daskalakis2019efficient} Equation (3.7), we find that
\begin{align*}
\bm{H}_{\ell} = \mathrm{Cov}_{z \sim \mathcal{N}\left(\bm{T}^{-1} \bm{m},
(A\bm{T})^{-1}\right)}\left[\left( \begin{array}{c}{\left(-\frac{1}{2} Az
z^{T}\right)_{\dia}} \\ {z}\end{array}\right), \left(
\begin{array}{c}{\left(-\frac{1}{2} Az z^{T}\right)_{\dia}} \\
{z}\end{array}\right)\right] \succcurlyeq \bm{0},
\end{align*}

i.e. that the log-likelihood is indeed convex with respect to
$\stack{\bm{T}}{\bm{m}}$, as desired.
\end{proof}

\subsubsection{Applying Danskin's Theorem} 
\label{sec:applying_danskin}
The previous two parts show that we can
indeed apply Danskin's theorem to the outer minimization, and in particular that the
gradient of $f$ at $M = M^*$ is in the subdifferential of the
outer minimization
problem. We proceed by writing out this gradient explicitly, and then setting it to
zero (note that since we have shown $f$ is convex for all choices of perturbation, we
can use the fact that a convex function is globally minimized $\iff$ its
subgradient contains zero). We continue from above, plugging in~\eqref{eq:mstar}
for $M$ and using~\eqref{eq:grad} to write the gradients of $\ell$ with respect to
$\bm{T}$ and $\bm{m}$. 
\begin{align}
    \nonumber
    0 = \nabla_{\stack{\bm{T}}{\bm{m}}} \ell  &= 
    \E\left[
    \tvgrad - \mathbb{E}_{z\sim \mathcal{N}(\bm{T}^{-1}\bm{m},
	(\bm{AT})^{-1})}
	\stack{\frac{1}{2} (Azz^\top)_{\dia}}{- Az}
    \right] \\
    \nonumber
    &=
    \E\tvgrad - 
    \mathbb{E}_{z\sim \mathcal{N}(\bm{T}^{-1}\bm{m},
	(\bm{AT})^{-1})}
	\stack{\frac{1}{2} (Azz^\top)_{\dia}}{- Az}\\
    \nonumber
    &=
	\stack{\frac{1}{2} (A\bm{\Sigma}_*)_{\dia}}{-A\bm{\mu}_*} - 
    \mathbb{E}_{z\sim \mathcal{N}(\bm{T}^{-1}\bm{m},
	(\bm{AT})^{-1})}
	\stack{\frac{1}{2} (A(A\bm{T})^{-1})_{\dia}}{- A\bm{T}^{-1}\bm{m}}\\
    \nonumber
    &=
	\stack{\frac{1}{2} A\bm{\Sigma}_*}{-A\bm{\mu}_*} - 
	\stack{\frac{1}{2} A(A\bm{T})^{-1}}{- A\bm{T}^{-1}\bm{m}}\\
    &= \stack{\frac{1}{2} A\bm{\Sigma}_* - \frac{1}{2} \bm{T}^{-1}}
	{A\bm{T}^{-1}\bm{m} - A\bm{\mu}_*} 
	\label{eq:expgrad}
\end{align}

Using this fact, we derive an {\em implicit} expression for the robust covariance
matrix $\bm{\Sigma}$. Note that for the sake of brevity, we now use $M$ to denote the
optimal adversarial perturbation (previously defined as $M^*$ in~\eqref{eq:mstar}).
This implicit formulation forms the foundation of the bounds given by our main
results.

\begin{lemma}
    \label{lemma:solution}
    The minimax problem discussed throughout this work admits the following
    (implicit) form of solution:
\begin{align*}
    \bm{\Sigma} &= \frac{1}{\lambda}I +
	\frac{1}{2}\bm{\Sigma}_* + \sqrt{ \frac{1}{\lambda}\bm{\Sigma}_* +
	\frac{1}{4} \bm{\Sigma}_*^2 },
\end{align*}
where $\lambda$ is such that $M \in \mathcal{M}$, and is thus dependent on
$\bm{\Sigma}$.
\end{lemma}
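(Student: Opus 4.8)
The plan is to start from the stationarity condition~\eqref{eq:expgrad}, which has already been justified via Danskin's theorem (Section~\ref{sec:danskin_valid}) together with the convexity of the reparameterized objective in $\stack{\bm{T}}{\bm{m}}$: at the optimum the block gradient vanishes, so both
$$\tfrac{1}{2}A\Sigma_* - \tfrac{1}{2}\bm{T}^{-1} = 0 \qquad\text{and}\qquad A\bm{T}^{-1}\bm{m} - A\mu_* = 0,$$
where $A = (\bm{I}+M)^2$ and $M = (\lambda\Sigma - \bm{I})^{-1}$ is the optimal adversarial operator from~\eqref{eq:mstar}. Since $A$ is invertible, the second equation gives $\bm{T}^{-1}\bm{m} = \mu_*$, i.e. $\mu = \mu_*$ — the true mean is recovered exactly, just as in the non-robust case (this also establishes the corresponding claim in Theorem~\ref{thm:1}). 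The first equation gives $\Sigma = \bm{T}^{-1} = A\Sigma_* = (\bm{I}+M)^2\Sigma_*$.

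Next I would substitute $M = (\lambda\Sigma - \bm{I})^{-1}$. Since $\Sigma$, $\Sigma_*$, and $M$ are all diagonal they commute, and $\bm{I} + M = (\lambda\Sigma - \bm{I})^{-1}\lambda\Sigma$, so $A = \lambda^2\Sigma^2(\lambda\Sigma-\bm{I})^{-2}$. Plugging this into $\Sigma = A\Sigma_*$ and clearing denominators yields $\Sigma(\lambda\Sigma - \bm{I})^2 = \lambda^2\Sigma^2\Sigma_*$; cancelling one factor of $\Sigma$ and expanding gives the entrywise quadratic
$$\lambda^2\Sigma^2 - \left(2\lambda\bm{I} + \lambda^2\Sigma_*\right)\Sigma + \bm{I} = 0.$$
Its discriminant simplifies to $\left(2\lambda\bm{I} + \lambda^2\Sigma_*\right)^2 - 4\lambda^2\bm{I} = 4\lambda^3\Sigma_* + \lambda^4\Sigma_*^2 = 4\lambda^4\left(\tfrac{1}{\lambda}\Sigma_* + \tfrac14\Sigma_*^2\right)$, so the quadratic formula gives $\Sigma = \tfrac{1}{\lambda}\bm{I} + \tfrac12\Sigma_* \pm \sqrt{\tfrac{1}{\lambda}\Sigma_* + \tfrac14\Sigma_*^2}$. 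Choosing the $+$ branch recovers exactly the claimed implicit formula, with $\lambda$ pinned down by the membership constraint $M\in\mathcal{M}$, i.e. $\E[\|M\bm{v}\|_2^2] = \eps^2$; since this couples $\lambda$ back to $\Sigma$, the expression is necessarily implicit.

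The computation is almost entirely mechanical; the one point needing genuine care is the branch selection. I would justify taking the $+$ sign by noting that $\Sigma$ must be positive definite and must depend continuously on $\eps$, so in the non-adversarial limit (where the constraint forces $\lambda\to\infty$) it has to reduce to the MLE solution $\Sigma_*$ — and only the $+$ branch does this, the $-$ branch degenerating to $\bm 0$. A secondary subtlety worth a sentence is verifying that $\lambda\Sigma - \bm{I}$ is invertible with the sign making $M$ and $A = (\bm I + M)^2$ well-defined; this follows from the same restriction on the range of admissible $\eps$ that underlies Theorem~\ref{thm:1}. Once the stationary point is exhibited, global optimality is immediate from the convexity established in Section~\ref{sec:danskin_valid}, so no further second-order analysis is required.
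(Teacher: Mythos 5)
Your proposal is correct and follows essentially the same route as the paper: derive the stationarity condition $\Sigma = (I+M)^2\Sigma_*$ from~\eqref{eq:expgrad}, substitute $M = (\lambda\Sigma - I)^{-1}$, reduce to the same entrywise quadratic in $\Sigma$, and apply the quadratic formula (the paper clears denominators via a Woodbury-style rewriting of $I+M$ rather than your direct multiplication, but the resulting quadratic is identical). Your explicit justification of the $+$ branch via the $\lambda\to\infty$ limit recovering $\Sigma_*$ is a welcome addition that the paper leaves implicit.
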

\begin{proof}
Rewriting~\eqref{eq:expgrad} in the standard parameterization (with respect to
$\bm{\mu}, \bm{\Sigma}$) and re-expanding $A = (I+M)^2$ yields:
\begin{align*}
    0 = \nabla_{\stack{\bm{T}}{\bm{m}}} \ell  =
    \stack{\frac{1}{2} (I+M)^2\bm{\Sigma}_* - \frac{1}{2} \bm{\Sigma}}
    {(I+M)^2 \bm{\mu}- (I+M)^2 \bm{\mu}_*} 
\end{align*} 

Now, note that the equations involving $\bm{\mu}$ and $\bm{\Sigma}$ are completely
independent, and thus can be solved separately. In terms of $\bm{\mu}$, the relevant
system of equations is $A\bm{\mu} - A\bm{\mu}_* = 0$, where multiplying by the
inverse $A$ gives that 
\begin{equation}
    \label{eq:truemu}\bm{\mu} = \bm{\mu}_*.
\end{equation}
This tells us that the mean learned
via $\ell_2$-robust maximum likelihood estimation is precisely the true mean of the
distribution.

Now, in the same way, we set out to find $\bm{\Sigma}$ by solving the relevant system
of equations:
\begin{align}
    \label{eq:quad}
    \bm{\Sigma}_*^{-1} &= \bm{\Sigma}^{-1}(M+I)^2.
\end{align}

Now, we make use of the Woodbury Matrix Identity in order to write
$(I+M)$ as
\begin{equation*}
    I + (\lambda\bm{\Sigma} - I)^{-1} = I + \left(-I -
	\left(\frac{1}{\lambda}\bm{\Sigma}^{-1} -
    I\right)^{-1}\right) = -\left(\frac{1}{\lambda}\bm{\Sigma}^{-1} - I\right)^{-1}.
\end{equation*}
Thus, we can revisit~\eqref{eq:quad} as follows:
\begin{align*}
    \bm{\Sigma}_*^{-1} &= \bm{\Sigma}^{-1}\left(\frac{1}{\lambda}\bm{\Sigma}^{-1} -
    I\right)^{-2} \\
    \frac{1}{\lambda^2}\bm{\Sigma}_*^{-1}\bm{\Sigma}^{-2} -
    \left(\frac{2}{\lambda}\bm{\Sigma}_*^{-1} + I\right)\bm{\Sigma}^{-1} + 
    \bm{\Sigma}_*^{-1} &= 0 \\
    \frac{1}{\lambda^2}\bm{\Sigma}_*^{-1} -
    \left(\frac{2}{\lambda}\bm{\Sigma}_*^{-1} + I\right)\bm{\Sigma} + 
    \bm{\Sigma}_*^{-1}\bm{\Sigma}^{2} &= 0
\end{align*}

We now apply the quadratic formula to get an implicit expression for $\bm{\Sigma}$
(implicit since technically $\lambda$ depends on $\bm{\Sigma}$):
\begin{align}
    \nonumber
    \bm{\Sigma} &= \left(\frac{2}{\lambda}\bm{\Sigma}_*^{-1} +
    I \pm \sqrt{ \frac{4}{\lambda}\bm{\Sigma}_*^{-1} + I
}\right)\frac{1}{2}\bm{\Sigma}_* \\
	    &= \frac{1}{\lambda}I +
	\frac{1}{2}\bm{\Sigma}_* + \sqrt{ \frac{1}{\lambda}\bm{\Sigma}_* +
	\frac{1}{4} \bm{\Sigma}_*^2 }\label{eq:final}.
\end{align}
This concludes the proof. 
\end{proof}

\subsubsection{Bounding \texorpdfstring{$\lambda$}{\textlambda}} 
\label{sec:bounding_lambda}
We now attempt to characterize the shape of $\lambda$ as a function of $\eps$. 
First, we use the fact that $\mathbb{E}[\|Xv\|^2] = \tr(X^2)$
for standard normally-drawn $v$.  Thus, $\lambda$ is set such that
$\tr(\bm{\Sigma}_*M^2) = \eps$, i.e:
\begin{equation}
    \label{eq:trtr}
    \sum_{i=0} \frac{\bm{\Sigma}^*_{ii}}{(\lambda \bm{\Sigma}_{ii} - 1)^2} = \eps
\end{equation}
Now, consider $\eps^2$ as a function of $\lambda$.
Observe that for $\lambda \geq \frac{1}{\sigma_{min}(\bm{\Sigma})}$, we have that $M$
must be positive semi-definite, and thus $\eps^2$ decays smoothly from $\infty$ (at
$\lambda = \frac{1}{\sigma_{min}})$ to zero (at $\lambda = \infty$). Similarly, for
$\lambda \leq \frac{1}{\sigma_{max}(\bm{\Sigma})}$, $\eps$ decays smoothly as
$\lambda$ {\em decreases}. Note, however, that such values of $\lambda$ would
necessarily make $M$ {\em negative semi-definite}, which would actually {\em help}
the log-likelihood. Thus, we can exclude this case; in particular, for the remainder
of the proofs, we can assume $\lambda \geq \frac{1}{\sigma_{max}(\bm{\Sigma})}$.

Also observe that the zeros of $\eps$ in terms of $\lambda$ are only at $\lambda =
\pm \infty$. Using this, we can show that there exists some $\eps_0$ for which, for
all $\eps < \eps_0$, the only corresponding possible valid value of $\lambda$ 
is where $\lambda \geq \frac{1}{\sigma_{min}}$. This idea is formalized in the
following Lemma.
\begin{lemma}
    \label{lemma:eps0_exists}
    For every $\bm{\Sigma}_*$, there exists some $\eps_0 > 0$ for which, for all
    $\eps \in [0, \eps_0)$ the only admissible value of $\lambda$ is such that
    $\lambda \geq \frac{1}{\sigma_{min}(\bm{\Sigma})}$, and thus such that $M$ is
    positive semi-definite.
\end{lemma}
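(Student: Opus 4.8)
The plan is to make the coupled stationarity conditions fully explicit and coordinatewise, read off all their solutions, and show that every solution \emph{except} the one with $M \succeq \bm 0$ forces the squared perturbation budget to be at least $\sigma_{min}(\Sigma_*) > 0$; taking $\eps_0 := \sqrt{\sigma_{min}(\Sigma_*)}$ then yields the lemma.

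First I would diagonalize. Since $\Sigma_*$, $\Sigma$ and $M = (\lambda\Sigma - \bm I)^{-1}$ are all diagonal, the learner's optimality condition \eqref{eq:quad}, rewritten as $(\bm I + M)^2 = \Sigma\Sigma_*^{-1}$, decouples into the $d$ scalar identities $(M_{ii}+1)^2 = \Sigma_{ii}/\sigma_i$ with $\sigma_i := (\Sigma_*)_{ii}$. Combining this with $\lambda\Sigma_{ii} - 1 = 1/M_{ii}$ and eliminating $\Sigma_{ii}$ gives, after discarding the spurious root $M_{ii} = -1$ (which forces $\Sigma_{ii} = 0$), the scalar quadratic $t_i M_{ii}^2 + t_i M_{ii} - 1 = 0$ with $t_i := \lambda\sigma_i > 0$, whose roots are $M_{ii} = \tfrac12\bigl(-1 \pm \sqrt{1 + 4/t_i}\bigr)$. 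The $+$ root is strictly positive, and the $-$ root is strictly less than $-1$ and hence has square strictly greater than $1$. Thus every admissible $(\lambda,\Sigma)$ comes from choosing, independently per coordinate, one of these two roots, and $M \succeq \bm 0$ holds precisely when the $+$ root is used in every coordinate; equivalently (as then $\lambda > 1/\Sigma_{ii}$ for all $i$) precisely when $\lambda > 1/\sigma_{min}(\Sigma)$, giving in particular the asserted $\lambda \ge 1/\sigma_{min}(\Sigma)$.

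Next I would lower-bound $\eps$ on every other solution. Since $\mu = \mu_*$ by \eqref{eq:truemu}, the constraint $M \in \mathcal M$ reads $\eps^2 = \mathbb{E}_{v\sim\mathcal N(0,\Sigma_*)}[\|Mv\|^2] = \tr(\Sigma_* M^2) = \sum_i \sigma_i M_{ii}^2$ (cf.~\eqref{eq:trtr}). If the $-$ root is used in some coordinate $j$, then $M_{jj}^2 > 1$, so
\[
  \eps^2 \;=\; \sum_i \sigma_i M_{ii}^2 \;\ge\; \sigma_j M_{jj}^2 \;>\; \sigma_j \;\ge\; \sigma_{min}(\Sigma_*) \;>\; 0 .
\]
Hence for every $\eps \in [0,\eps_0)$ with $\eps_0 := \sqrt{\sigma_{min}(\Sigma_*)}$ no such solution satisfies the constraint, so the only admissible $\lambda$ is that of the all-$+$ branch, on which $M \succeq \bm 0$ and $\lambda \ge 1/\sigma_{min}(\Sigma)$. (That this branch actually occurs for every $\eps > 0$ is immediate: on it $M_{ii}(\lambda)$ is continuous and strictly decreasing from $+\infty$ to $0$ as $\lambda$ ranges over $(0,\infty)$, so $\lambda \mapsto \sum_i\sigma_i M_{ii}(\lambda)^2$ is a continuous decreasing bijection onto $(0,\infty)$.)

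The only subtlety, and the reason I would not follow the ``$\eps^2$ has no zeros on the indefinite region, which is compact'' route hinted just before the lemma, is the circular dependence of $\Sigma$ on $\lambda$ (and on the per-coordinate root choice): it makes the endpoints of that region move together with $\Sigma$. Passing to the scalar quadratic in $M_{ii}$ removes the difficulty --- it displays the whole solution set as $2^d$ sign patterns, isolates the positive-semidefinite one cleanly, and upgrades the qualitative ``bounded away from zero'' statement to the sharp self-contained bound $\eps^2 > \sigma_{min}(\Sigma_*)$ valid on every other pattern.
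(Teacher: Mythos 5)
Your proof is correct and rests on exactly the same mechanism as the paper's own argument: any coordinate in which $M$ fails to be positive semi-definite must have $M_{jj}^2 \geq 1$, so its contribution $\sigma_j M_{jj}^2$ to the perturbation budget is already at least $\sigma_{min}(\Sigma_*) > 0$, which rules out every non-PSD branch once $\eps$ is below a fixed threshold depending only on $\Sigma_*$. The paper obtains $M_{kk}^2 \geq 1$ directly from $0 < \lambda\Sigma_{kk} < 1$ rather than by solving the combined stationarity quadratic, and it omits your (correct, but not strictly required for the lemma) surjectivity check on the all-positive branch; these are presentational rather than substantive differences.
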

\begin{proof}
    We prove the existence of such an $\eps_0$ by lower bounding $\eps$ (in terms of
    $\lambda$) for any finite $\lambda > 0$ that does not make $M$ PSD. Providing
    such a lower bound shows that for small enough $\eps$ (in particular, less than this lower bound), the only
    corresponding values of $\lambda$ are as desired in the statement\footnote{Since
	our only goal is existence, we lose many factors from the
    analysis that would give a tighter bound on $\eps_0$.}.

    In particular, if $M$ is not PSD, then there must exist at least one index $k$
    such that $\lambda\bm{\Sigma}_{kk} < 1$, and thus $(\lambda\bm{\Sigma}_{kk} -
    1)^2 \leq 1$ for all $\lambda > 0$. We can thus lower bound~\eqref{eq:trtr}
    as:
\begin{equation}
    \eps = \sum_{i=0} \frac{\bm{\Sigma}^*_{ii}}{(\lambda \bm{\Sigma}_{ii} - 1)^2} \geq 
    \frac{\bm{\Sigma}^*_{kk}}{(\lambda \bm{\Sigma}_{kk} - 1)^2} \geq
    \bm{\Sigma}^*_{kk} \geq \sigma_{min}(\bm{\Sigma}^*) > 0
\end{equation}
By contradiction, it follows that for any $\eps < \sigma_{min}(\bm{\Sigma}_*)^2$, the
only admissible $\lambda$ is such that $M$ is PSD, i.e. according to the statement of
the Lemma.
\end{proof}

In the regime $\eps \in [0, \eps_0)$, note that $\lambda$ is inversely proportional to
$\eps$ (i.e. as $\eps$ grows, $\lambda$ decreases). This allows us to get a
qualitative view of~\eqref{eq:final}: as the allowed perturbation value increases,
the robust covariance $\bm{\Sigma}$ resembles the identity matrix more and more, and
thus assigns more and more variance on initially low-variance features. The
$\sqrt{\Sigma_*}$ term indicates that the robust model also adds uncertainty
proportional to the square root of the initial variance---thus, low-variance features
will have (relatively) more uncertainty in the robust case.
Indeed, our main result actually follows as a (somewhat loose) formalization of this
intuition.

\subsubsection{Proof of main theorems}
\label{sec:main_thms}
First, we give a proof of Theorem~\ref{thm:1}, providing lower and upper bounds on
the learned robust covariance $\bm{\Sigma}$ in the regime $\eps \in [0, \eps_0)$.

\parameters*
\begin{proof}
    We have already shown that $\mu = \mu_*$ in the robust case
    (c.f.~\eqref{eq:truemu}).
    We choose $\eps_0$ to be as described, i.e. the largest $\eps$ for
    which the set $\{\lambda: \tr(\bm{\Sigma}_*^2 M) = \eps, \lambda \geq
    1/\sigma_{\max}(\bm{\Sigma})\}$ has only one element $\lambda$ (which,
    as we argued, must not be less than $1/\sigma_{min}(\bm{\Sigma})$). We
    have argued that such an $\eps_0$ must exist.

    We prove the result by combining our early derivation (in
    particular,~\eqref{eq:quad} and~\eqref{eq:final}) with upper and lower bound on
    $\lambda$, which we can compute based on properties of the trace operator. We
    begin by deriving a lower bound on $\lambda$. By linear algebraic manipulation
    (given in Appendix~\ref{app:lambdabounds}), we get the following bound:
    \begin{equation}
    \lambda \geq \frac{d}{\tr(\bm{\Sigma})}\left(1 +
    \sqrt{\frac{d \cdot \sigma_{min}(\bm{\Sigma}_*)}{\eps}}\right) 
    \label{eq:lambda1}
    \end{equation}

    Now, we can use~\eqref{eq:quad} in order to remove the dependency of $\lambda$ on
    $\bm{\Sigma}$:
    \begin{align*}
	\bm{\Sigma} &= \bm{\Sigma}_*(M + I)^2 \\
	\tr(\bm{\Sigma}) &= 
	\tr\left[
	    (\bm{\Sigma}_*^{1/2} M + \bm{\Sigma}_*^{1/2})^2 
	\right]\\
	&\leq 
	2\cdot \tr\left[
	    (\bm{\Sigma}_*^{1/2} M)^2 + (\bm{\Sigma}_*^{1/2})^2 
	\right]\\
	&\leq 2\cdot \left(\eps + \tr(\bm{\Sigma}_*)\right).
    \end{align*}
    Applying this to~\eqref{eq:lambda1} yields:
    \begin{align*}
	\lambda &\geq \frac{d/2}{\eps + \tr(\bm{\Sigma}_*)}
	\left(1 + \sqrt{\frac{d \cdot \sigma_{min}(\bm{\Sigma}_*)}{\eps}}\right).
    \end{align*}
    Note that we can simplify this bound significantly by writing $\eps = d\cdot
    \sigma_{min}(\bm{\Sigma}_*)\eps' \leq \tr(\bm{\Sigma}_*)\eps'$, which does not
    affect the result (beyond rescaling the valid regime $(0, \eps_0)$), and gives:
    \begin{align*}
	\lambda &\geq \frac{d/2}{(1+\eps')\tr(\bm{\Sigma}_*)}
	\left(1 + \frac{1}{\sqrt{\eps'}}\right) 
	\geq \frac{d\cdot
    (1+\sqrt{\eps'})}{2\sqrt{\eps'}(1+\eps')\tr(\bm{\Sigma}_*)}
    \end{align*}
    Next, we follow a similar methodology (Appendix~\ref{app:lambdabounds}) in order
    to upper bound $\lambda$:
    \begin{equation*}
    \lambda \leq \frac{1}{\sigma_{min}(\bm{\Sigma})}\left(
    \sqrt{\frac{\|\bm{\Sigma}_*\|_F\cdot d}{\eps}} +1\right).
    \end{equation*}
    Note that by~\eqref{eq:quad} and positive semi-definiteness of $M$, it must be
    that $\sigma_{min}(\bm{\Sigma}) \geq \sigma_{min}(\bm{\Sigma}_*)$. Thus, we can
    simplify the previous expression, also substituting $\eps =
    d\cdot\sigma_{min}(\bm{\Sigma}_*)\eps'$:
    $$
	\lambda \leq \frac{1}{\sigma_{min}(\bm{\Sigma}_*)}\left(
	\sqrt{\frac{\|\bm{\Sigma}_*\|_F
    }{\sigma_{min}(\bm{\Sigma}_*)\eps'}}+1\right) = 
    \frac{\|\bm{\Sigma}_*\|_F + \sqrt{\eps\cdot \sigma_{min}(\bm{\Sigma}_*)}}{\sigma_{min}(\bm{\Sigma}_*)^{3/2}\sqrt{\eps}}
    $$
    These bounds can be straightforwardly combined with
    Lemma~\ref{lemma:solution}, which concludes the proof.
\end{proof}

Using this theorem, we can now show Theorem~\ref{thm:2}:
\gradients*
\begin{proof}
    To prove this, we make use of the following Lemmas:
    \begin{lemma}
	For two positive definite matrices $A$ and $B$ with $\kappa(A) >
	\kappa(B)$, we have that $\kappa(A+B) \leq \max\{\kappa(A),
	\kappa(B)\}$.
    \end{lemma}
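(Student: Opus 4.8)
The plan is to control $\kappa(A+B) = \lambda_{\max}(A+B)/\lambda_{\min}(A+B)$ (for positive definite matrices the condition number is exactly the ratio of the largest to the smallest eigenvalue) by bounding its numerator from above and its denominator from below separately, using the variational characterization of the extreme eigenvalues, and then recognizing the resulting fraction as a \emph{mediant} of $\kappa(A)$ and $\kappa(B)$.

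First I would record the two Weyl-type inequalities that follow immediately from the Courant--Fischer (Rayleigh quotient) characterization. Since $\lambda_{\max}(X) = \max_{\|u\|=1} u^\top X u$, the maximum of a sum is at most the sum of the maxima, giving $\lambda_{\max}(A+B) \le \lambda_{\max}(A) + \lambda_{\max}(B)$; dually, since $\lambda_{\min}(X) = \min_{\|u\|=1} u^\top X u$, the minimum of a sum is at least the sum of the minima, giving $\lambda_{\min}(A+B) \ge \lambda_{\min}(A) + \lambda_{\min}(B)$. Both denominators are strictly positive because $A$ and $B$ are positive definite (and $A+B$ is again positive definite, so $\kappa(A+B)$ is well defined). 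Writing $a_1 = \lambda_{\max}(A)$, $a_2 = \lambda_{\min}(A)$, $b_1 = \lambda_{\max}(B)$, $b_2 = \lambda_{\min}(B)$, these two bounds combine to
\[
\kappa(A+B) = \frac{\lambda_{\max}(A+B)}{\lambda_{\min}(A+B)} \le \frac{a_1 + b_1}{a_2 + b_2}.
\]

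The second and final step is to observe that the right-hand side lies between $\kappa(A) = a_1/a_2$ and $\kappa(B) = b_1/b_2$. Concretely, rewriting
\[
\frac{a_1 + b_1}{a_2 + b_2} = \frac{a_2}{a_2 + b_2}\cdot\frac{a_1}{a_2} + \frac{b_2}{a_2 + b_2}\cdot\frac{b_1}{b_2}
\]
exhibits it as a convex combination of $\kappa(A)$ and $\kappa(B)$, the weights being nonnegative and summing to one because $a_2, b_2 > 0$. A convex combination never exceeds the larger of the two quantities, so $\frac{a_1+b_1}{a_2+b_2} \le \max\{\kappa(A), \kappa(B)\}$. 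Chaining this with the previous display gives $\kappa(A+B) \le \max\{\kappa(A), \kappa(B)\}$, which is exactly the claim; the hypothesis $\kappa(A) > \kappa(B)$ merely identifies this maximum as $\kappa(A)$ and is not otherwise needed.

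I do not anticipate a genuine obstacle here. The single point requiring care is getting the two eigenvalue inequalities oriented correctly --- subadditivity for $\lambda_{\max}$ and superadditivity for $\lambda_{\min}$ --- since reversing either would invalidate the bound on the ratio. Everything else reduces to the elementary mediant (convex-combination) estimate.
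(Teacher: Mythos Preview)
Your proposal is correct and follows essentially the same route as the paper: bound $\kappa(A+B)$ above by the mediant $\frac{\lambda_{\max}(A)+\lambda_{\max}(B)}{\lambda_{\min}(A)+\lambda_{\min}(B)}$ via the Weyl-type inequalities, then show this mediant does not exceed the larger of $\kappa(A),\kappa(B)$. Your write-up is in fact cleaner than the paper's --- you state the eigenvalue bounds as inequalities (the paper writes an equality that only holds as $\le$) and use the convex-combination identity directly, whereas the paper arrives at the same conclusion through an equivalent chain of algebraic manipulations framed, somewhat confusingly, as a proof by contradiction.
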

    \begin{proof}
	We proceed by contradiction:
	\begin{align*}
	    \kappa(A+B) &= \frac{\lambda_{max}(A) +
	\lambda_{max}(B)}{\lambda_{min}(A) + \lambda_{min}(B)} \\ 
	\kappa(A) &= \frac{\lambda_{max}(A)}{\lambda_{min}(A)} \\
	\kappa(A) &\geq \kappa(A+B) \\
	\iff \lambda_{max}(A)\left(\lambda_{min}(A) +
	\lambda_{min}(B)\right) &\geq
	\lambda_{min}(A)\left(\lambda_{max}(A) +
	\lambda_{max}(B)\right)  \\ 
	\iff \lambda_{max}(A)\lambda_{min}(B) &\geq
	\lambda_{min}(A) \lambda_{max}(B) \\
	\iff \frac{\lambda_{max}(A)}{\lambda_{min}(A)} &\geq
	\frac{\lambda_{min}(A)}{\lambda_{max}(B)},
    \end{align*}
    which is false by assumption. This concludes the proof.
    \end{proof}

    \begin{lemma}[Straightforward]
	For a positive definite matrix $A$ and $k > 0$, we have that
	$$\kappa(A+k\cdot I) < \kappa(A)\qquad \kappa(A+k\cdot \sqrt{A})
	\leq \kappa(A).$$
    \end{lemma}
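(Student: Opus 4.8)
The plan is to reduce both inequalities to one elementary scalar fact, exploiting that $A$, $I$ and $\sqrt{A}$ are simultaneously diagonalizable. The scalar fact I would record first is: for reals $a \geq b > 0$ and $c \geq 0$,
\[
\frac{a+c}{b+c} = \frac{a}{b} - \frac{c(a-b)}{b(b+c)} \leq \frac{a}{b},
\]
with strict inequality precisely when $c > 0$ and $a > b$; this is immediate after clearing denominators.

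Next I would set up the eigenvalue bookkeeping. Let $\lambda_{\min} \leq \lambda_{\max}$ be the extreme eigenvalues of the symmetric positive definite matrix $A$, so $\kappa(A) = \lambda_{\max}/\lambda_{\min}$. Since $A$ is orthogonally diagonalizable, $\sqrt{A}$ --- and hence $A + kI$ and $A + k\sqrt{A}$ --- is diagonal in the same eigenbasis; and since $t \mapsto t + k$ and $t \mapsto t + k\sqrt{t}$ are strictly increasing on $(0,\infty)$, the extreme eigenvalues of $A + kI$ are $\lambda_{\max}+k$ and $\lambda_{\min}+k$, while those of $A + k\sqrt{A}$ are $\lambda_{\max} + k\sqrt{\lambda_{\max}}$ and $\lambda_{\min} + k\sqrt{\lambda_{\min}}$. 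The first claim then follows by applying the scalar inequality with $a = \lambda_{\max}$, $b = \lambda_{\min}$, $c = k$: $\kappa(A+kI) = \frac{\lambda_{\max}+k}{\lambda_{\min}+k} < \frac{\lambda_{\max}}{\lambda_{\min}} = \kappa(A)$, where strictness uses $k>0$ together with $\lambda_{\max} > \lambda_{\min}$ (i.e. $A$ is not a scalar multiple of the identity, which is the case of interest --- if $A \propto I$ then both sides equal $1$).

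For the second claim I would factor $\lambda_i + k\sqrt{\lambda_i} = \sqrt{\lambda_i}\,(\sqrt{\lambda_i} + k)$, so that
\[
\kappa(A + k\sqrt{A}) = \frac{\sqrt{\lambda_{\max}}}{\sqrt{\lambda_{\min}}} \cdot \frac{\sqrt{\lambda_{\max}}+k}{\sqrt{\lambda_{\min}}+k} \leq \frac{\sqrt{\lambda_{\max}}}{\sqrt{\lambda_{\min}}} \cdot \frac{\sqrt{\lambda_{\max}}}{\sqrt{\lambda_{\min}}} = \frac{\lambda_{\max}}{\lambda_{\min}} = \kappa(A),
\]
the middle step being the same scalar inequality with $a = \sqrt{\lambda_{\max}}$, $b = \sqrt{\lambda_{\min}}$, $c = k$. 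Consistent with the ``Straightforward'' label there is no real obstacle here; the only things to watch are the simultaneous-diagonalization step and keeping track of when the bound is strict versus weak (it is in fact strict in the second claim as well whenever $A \not\propto I$, but the stated weak inequality is all that is needed where the lemma is invoked in the proof of Theorem~\ref{thm:2}).
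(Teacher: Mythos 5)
Your proof is correct. The paper labels this lemma ``Straightforward'' and supplies no proof at all, so there is nothing to compare against; your argument (simultaneous diagonalization, monotonicity of $t \mapsto t+k$ and $t \mapsto t + k\sqrt{t}$ to identify the extreme eigenvalues, and the scalar inequality $\tfrac{a+c}{b+c} \leq \tfrac{a}{b}$) is exactly the intended elementary route, and you are right to flag that the first inequality is only strict when $A \not\propto I$ --- a minor imprecision in the lemma's statement as printed, and harmless where it is used in the proof of Theorem~\ref{thm:2}.
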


    \begin{lemma}[Angle induced by positive definite matrix;
	folklore]\footnote{A proof can be found in \url{https://bit.ly/2L6jdAT}}
	For a positive definite matrix $A \succ 0$ with condition number
	$\kappa$, we have that
	\begin{equation}
	    \label{eq:cond_num_angle}
	    \min_{x} \frac{x^\top A x}{\|Ax\|_2\cdot \|x\|_2} =
	    \frac{2\sqrt{\kappa}}{1+\kappa}.
	\end{equation}
    \end{lemma}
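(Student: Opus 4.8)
The plan is to diagonalize $A$ and recast the minimization as a moment inequality over the spectrum of $A$, then close it with a Kantorovich-style pointwise bound. First I would write $A = Q\Lambda Q^\top$ with $Q$ orthogonal and $\Lambda = \mathrm{diag}(\lambda_1,\dots,\lambda_d)$, $\lambda_i > 0$, and substitute $y = Q^\top x$. Since $Q$ is orthogonal, $\|x\|_2 = \|y\|_2$ and $\|Ax\|_2 = \|\Lambda y\|_2$, so the objective becomes $\frac{\sum_i \lambda_i y_i^2}{\sqrt{\sum_i y_i^2}\,\sqrt{\sum_i \lambda_i^2 y_i^2}}$. Introducing the weights $p_i = y_i^2/\|y\|_2^2$, which form a probability distribution supported on the eigenvalues $\{\lambda_i\}$, the quantity is exactly $\frac{\mathbb{E}_p[\lambda]}{\sqrt{\mathbb{E}_p[\lambda^2]}}$, and every such distribution is realizable by a suitable $x$. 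Minimizing this ratio is therefore equivalent to maximizing $\mathbb{E}_p[\lambda^2]/(\mathbb{E}_p[\lambda])^2$ over all distributions $p$ on the spectrum.

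The key step is a pointwise quadratic bound. Since each eigenvalue satisfies $\sigma_{min} \le \lambda_i \le \sigma_{max}$, we have $(\lambda_i - \sigma_{min})(\sigma_{max} - \lambda_i) \ge 0$, i.e. $\lambda_i^2 \le (\sigma_{min} + \sigma_{max})\lambda_i - \sigma_{min}\sigma_{max}$. Taking expectation over $p$ gives $\mathbb{E}_p[\lambda^2] \le (\sigma_{min}+\sigma_{max})\,m - \sigma_{min}\sigma_{max}$, where $m = \mathbb{E}_p[\lambda] \in [\sigma_{min},\sigma_{max}]$. I would then maximize the resulting one-dimensional expression $\frac{(\sigma_{min}+\sigma_{max})m - \sigma_{min}\sigma_{max}}{m^2}$ over $m$; elementary calculus shows the maximum is attained at the harmonic mean $m_\star = \frac{2\sigma_{min}\sigma_{max}}{\sigma_{min}+\sigma_{max}}$ and equals $\frac{(\sigma_{min}+\sigma_{max})^2}{4\,\sigma_{min}\sigma_{max}} = \frac{(1+\kappa)^2}{4\kappa}$. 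Inverting and taking square roots yields the lower bound $\frac{\mathbb{E}_p[\lambda]}{\sqrt{\mathbb{E}_p[\lambda^2]}} \ge \frac{2\sqrt{\kappa}}{1+\kappa}$ for every $p$, hence for every $x$.

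It then remains to verify tightness, which I expect to be the only delicate point: the two optimization stages must be simultaneously achievable by a single distribution. Equality in the pointwise bound forces $p$ to be supported on $\{\sigma_{min}, \sigma_{max}\}$, and the second stage additionally requires the mean to equal $m_\star$. Since $m_\star$ is the harmonic mean of $\sigma_{min}$ and $\sigma_{max}$ it lies in $(\sigma_{min}, \sigma_{max})$, so there is a unique weight $p \in (0,1)$ on $\sigma_{max}$ realizing both conditions; translating back, this corresponds to an $x$ lying in the span of an eigenvector for $\sigma_{min}$ and one for $\sigma_{max}$ in the appropriate proportion (these eigenspaces are distinct whenever $\kappa > 1$). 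Evaluating the ratio at this $x$ gives $\frac{2\sqrt{\sigma_{min}\sigma_{max}}}{\sigma_{min}+\sigma_{max}} = \frac{2\sqrt{\kappa}}{1+\kappa}$, matching the lower bound and establishing the claimed equality. The degenerate case $\kappa = 1$ (where $A$ is a multiple of the identity) gives the value $1$ directly, consistent with the formula.
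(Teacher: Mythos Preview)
Your argument is correct. The paper does not actually give its own proof of this lemma: it labels the result as folklore and defers to an external link, so there is no in-paper argument to compare against. For what it is worth, the route you take---reducing to a moment ratio over the spectrum and closing with the pointwise bound $(\lambda-\sigma_{min})(\sigma_{max}-\lambda)\ge 0$---is exactly the classical Kantorovich-inequality proof, and your tightness analysis (support on the two extreme eigenvalues with weights $\sigma_{min}/(\sigma_{min}+\sigma_{max})$ and $\sigma_{max}/(\sigma_{min}+\sigma_{max})$) is the standard sharp construction.
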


    These two results can be combined to prove the theorem. First, we show
    that $\kappa(\Sigma) \leq \kappa(\Sigma_*)$:
    \begin{align*}
	\kappa(\Sigma) &= \kappa\left(
	    \frac{1}{\lambda}I + \frac{1}{2}\Sigma_* +
	    \sqrt{\frac{1}{\lambda}\Sigma_* + \frac{1}{4}\Sigma_*^2}
	\right) \\
	&< \max\left\{
	    \kappa\left(\frac{1}{\lambda}I + \frac{1}{2}\Sigma_*\right),
	    \kappa\left(\sqrt{\frac{1}{\lambda}\Sigma_* +
	    \frac{1}{4}\Sigma_*^2}\right)
	\right\} \\
	&< \max\left\{
	    \kappa\left(\Sigma_*\right),
	    \sqrt{\kappa\left(\frac{1}{\lambda}\Sigma_* +
	    \frac{1}{4}\Sigma_*^2\right)}
	\right\} \\
	&= \max\left\{
	    \kappa\left(\Sigma_*\right),
	    \sqrt{\kappa\left(\frac{2}{\lambda}\sqrt{\frac{1}{4}\Sigma_*^2} +
	    \frac{1}{4}\Sigma_*^2\right)}
	\right\} \\
	&\leq \kappa\left(\Sigma_*\right).
    \end{align*}
    Finally, note that~\eqref{eq:cond_num_angle} is a strictly decreasing
    function in $\kappa$, and as such, we have shown the theorem.
\end{proof}

\subsubsection{Bounds for \texorpdfstring{$\lambda$}{\textlambda}}
\label{app:lambdabounds}
\paragraph{Lower bound.}
\begin{align*}
    \eps &= \tr(\bm{\Sigma}_*M^2) \\
    &\geq \sigma_{min}(\bm{\Sigma}_*)\cdot \tr(M^2) 
    &\text{ by the definition of $\tr(\cdot)$}\\
    &\geq \frac{\sigma_{min}(\bm{\Sigma}_*)}{d} \cdot \tr(M)^2 
    &\text{ by Cauchy-Schwarz}\\
    &\geq \frac{\sigma_{min}(\bm{\Sigma}_*)}{d} \cdot
    \left[\tr\left((\lambda\bm{\Sigma} - \bm{I})^{-1}\right)\right]^2 
    &\text{ Expanding $M$~\eqref{eq:mstar}}\\
    &\geq \frac{\sigma_{min}(\bm{\Sigma}_*)}{d} \cdot
    \left[\tr\left(\lambda\bm{\Sigma} - \bm{I}\right)^{-1}\cdot d^2\right]^{2} 
    &\text{ AM-HM inequality}\\
    &\geq d^3\cdot \sigma_{min}(\bm{\Sigma}_*) \cdot
    \left[\lambda\cdot \tr(\bm{\Sigma}) - d\right]^{-2} \\
    \nonumber
    \left[\lambda\cdot \tr(\bm{\Sigma}) - d\right]^2 &\geq
    \frac{d^3\cdot \sigma_{min}(\bm{\Sigma}_*)}{\eps} \\
    \lambda\cdot \tr(\bm{\Sigma}) - d &\geq
    \frac{d^{3/2} \cdot \sqrt{\sigma_{min}(\bm{\Sigma}_*)}}{\sqrt{\eps}} 
    &\text{ since $M$ is PSD} \\
    \lambda &\geq \frac{d}{\tr(\bm{\Sigma})}\left(1 +
    \sqrt{\frac{d \cdot \sigma_{min}(\bm{\Sigma}_*)}{\eps}}\right) 
\end{align*}

\paragraph{Upper bound}
\begin{align*}
    \eps &= \tr(\bm{\Sigma}_*M^2) \\
    &\leq \|\bm{\Sigma}_*\|_F \cdot d\cdot \sigma_{max}(M)^2 \\
    &\leq \|\bm{\Sigma}_*\|_F \cdot d\cdot \sigma_{min}(M)^{-2} \\
    \lambda\cdot\sigma_{min}(\bm{\Sigma}) - 1 &\leq 
    \sqrt{\frac{\|\bm{\Sigma}_*\|_F\cdot d}{\eps}} \\
    \lambda &\leq \frac{1}{\sigma_{min}(\bm{\Sigma})}\left(
    \sqrt{\frac{\|\bm{\Sigma}_*\|_F\cdot d}{\eps}} +1\right).
\end{align*}

\clearpage

\end{document}